
\documentclass{article}
\usepackage[11pt]{moresize}
\usepackage{amssymb}
\usepackage{amsmath}
\usepackage{amsthm}
\usepackage{enumerate}
\usepackage{enumitem}
\usepackage{microtype}
\usepackage{graphicx}
\usepackage{dsfont} 
\usepackage{adjustbox}
\usepackage{float}
\usepackage{framed}

\usepackage{subfigure}
\usepackage{booktabs} 

\usepackage{hyperref}



\usepackage[accepted]{icml2018}

\newcommand{\tr}{\text{Tr}}

\newcommand{\gf}{\nabla f}
\newcommand{\gft}{{\nabla \tilde{f}}}
\renewcommand{\gg}{\nabla g}
\newcommand{\hf}{\nabla^2 f} 

\newcommand{\bdelta}{\boldsymbol{\delta}}

\newcommand{\w}{{\bf w}}
\renewcommand{\u}{{\bf u}}
\renewcommand{\d}{{\bf d}}
\newcommand{\pw}{\tilde{\w}}

\newcommand{\x}{{\bf x}}

\renewcommand{\v}{{\bf v}}
\newcommand{\z}{{\bf z}}

\newcommand{\bigo}{{\cal O}}
\newcommand{\tbigo}{\tilde{{\cal O}}}

\usepackage{mathtools}


\def\Im{{\bf I}}

\newcommand{\A}{{\mathcal A}}

\newcommand{\E}{{\mathbf E}}

\newcommand{\F}{{\mathcal F}}

\newcommand{\R}{{\mathbb{R}}}
\renewcommand{\S}{{\mathcal S}}
\renewcommand{\H}{{\cal H}}
\renewcommand{\L}{{\cal L}}

\renewcommand{\P}{{\cal P}}
 
\renewcommand{\F}{{\cal F}}
\newcommand{\bzeta}{\boldsymbol{\zeta}}

\newcommand{\gt}{g_{\text{thres}}}
\newcommand{\ft}{f_{\text{thres}}}
\renewcommand{\tr}{{t_{\text{thres}}}}
\newcommand{\tn}{t_{\text{noise}}}

\newtheorem{theorem}{Theorem}
\newtheorem{lemma}{Lemma}

\newtheorem{assumption}{Assumption}
\theoremstyle{definition}
\newtheorem{definition}{Definition}

\usepackage{eqparbox}

\icmltitlerunning{Escaping Saddles with Stochastic Gradients}


\begin{document}

\twocolumn[
\icmltitle{Escaping Saddles with Stochastic Gradients}



\icmlsetsymbol{equal}{*}

\begin{icmlauthorlist}
\icmlauthor{Hadi Daneshmand}{equal,eth}
\icmlauthor{Jonas Kohler}{equal,eth}
\icmlauthor{Aurelien Lucchi}{eth}
\icmlauthor{Thomas Hofmann}{eth}

\end{icmlauthorlist}

\icmlaffiliation{eth}{ETH, Zurich, Switzerland}

\icmlcorrespondingauthor{Hadi Daneshmand}{hadi.daneshmand@inf.ethz.ch}

\icmlkeywords{Machine Learning, ICML}

\vskip 0.3in
]



\printAffiliationsAndNotice{\icmlEqualContribution} 


\begin{abstract}
We analyze the variance of stochastic gradients along negative curvature directions in certain non-convex machine learning models and show that stochastic gradients exhibit a strong component along these directions. Furthermore, we show that - contrary to the case of isotropic noise - this variance is proportional to the magnitude of the corresponding eigenvalues and not decreasing in the dimensionality. Based upon this observation we propose a new assumption under which we show that the injection of explicit, isotropic noise usually applied to make gradient descent escape saddle points can successfully be replaced by a simple SGD step.
Additionally - and under the same condition -  we derive the first convergence rate for plain SGD to a \textit{second-order} stationary point in a number of iterations that is independent of the problem dimension. 

\end{abstract}


\section{Introduction}

In this paper we analyze the use of gradient descent (GD) and its stochastic variant (SGD) to minimize objectives of the form
\begin{equation} 
\w^* = \arg \min_{\w \in \R^d}  \left [ f(\w) := \E_{\z\sim\P} \left[  f_\z(\w) \right] \right],
\label{eq:f_x}
\end{equation}
where $f \in C^2(\R^d,\R)$ is a not necessarily convex loss function and $\P$ is an arbitrary probability distribution.

In the era of big data and deep neural networks, (stochastic) gradient descent is a core component of many training algorithms~\cite{bottou2010large}. What makes SGD so attractive is its simplicity, its seemingly universal applicability and a convergence rate that is independent of the size of the training set. One specific trait of SGD is the inherent noise, originating from sampling training points, whose variance has to be controlled in order to guarantee convergence either through a conservative step size~\cite{nesterov2013introductory} or via explicit variance-reduction techniques~\cite{johnson2013accelerating}.

While the convergence behavior of SGD is well-understood for convex functions~\cite{bottou2010large}, we are here interested in the optimization of non-convex functions which pose additional challenges for optimization in particular due to the presence of saddle points and suboptimal local minima~\cite{dauphin2014identifying, choromanska2015loss}. For example, finding the global minimum of even a degree 4 polynomial can be NP-hard~\cite{hillar2013most}.  Instead of aiming for a global minimizer, a more practical goal is to search for a local optimum of the objective. In this paper we thus focus on reaching a second-order stationary point of smooth non-convex functions.
Formally, we aim to find an  $(\epsilon_g, \epsilon_h)$-second-order stationary point $\w$ such that the following conditions hold:
\begin{equation}
\| \gf(\w) \| \leq \epsilon_g \quad\text{and}\quad \hf(\w) \succcurlyeq -\epsilon_h \Im,
\label{eq:second_order_stat_point}
\end{equation}
where $\epsilon_g, \epsilon_h > 0$.

Existing work, such as~\cite{ge2015escaping, jin2017escape}, proved convergence to a point satisfying Eq.~\eqref{eq:second_order_stat_point} for modified variants of gradient descent and its stochastic variant by requiring additional noise to be explicitly added to the iterates along the entire path (former) or whenever the gradient is sufficiently small (latter). Formally, this yields the following update step for the perturbed GD and SGD versions:
\begin{align}
\label{eq:PGD_update}
&\text{PGD: }\:\; \w_{t+1} = \w_t  - \eta_t \gf(\w_t) + r \zeta_{t+1} \\
\label{eq:PSGD_update}
&\text{PSGD: } \w_{t+1} = \w_t - \eta_t \left( \gf_\z(\w_t)  + \zeta_t \right),
\end{align}
where $\zeta_t$ is typically zero-mean noise sampled uniformly from a unit sphere.

\paragraph{Isotropic noise} The perturbed variants of GD and SGD in Eqs.~\eqref{eq:PGD_update}-\eqref{eq:PSGD_update} have been analyzed for the case where the added noise $\zeta_t$ is isotropic~\cite{ge2015escaping, levy2016power, jin2017escape} or at least exhibits a certain amount of variance along all directions in $\R^d$~\cite{ge2015escaping}. As shown in Table \ref{table:dim_dep}, an immediate consequence of such conditions is that they introduce a dependency on the input dimension $d$ in the convergence rate. Furthermore, it is unknown as of today, if this condition is satisfied by the intrinsic noise of vanilla SGD for any specific class of machine learning models. Recent empirical observations show that this is not the case for training neural networks~\cite{chaudhari2017stochastic}.
\begin{center} 
\begin{table*}[h]
\centering
\label{table:dim_dep}
\begin{tabular}{llll}
\hline
Algorithm                & First-order Complexity  & Second-order Complexity     & $d$ Dependency \\\hline
Perturbed SGD \cite{ge2015escaping} & $\bigo(d^p \epsilon^{-4}_g) $  & $\bigo(d^p \epsilon^{-16}_h)$ & poly     \\
SGLD  \cite{Zhang2017AHT}            & $\bigo(d^p \epsilon^{-2}_g)$   &$\bigo(d^p \epsilon^{-4}_h)$  & poly        \\
PGD \cite{jin2017escape}       & $\bigo(\log^4(d/\epsilon_g)\epsilon^{-2}_g)$ & $\bigo(\log^4(d/\epsilon_h) \epsilon^{-4}_h)$ & poly-log     \\
SGD+NEON~\cite{xu2017first} & $\tbigo(\epsilon^{-4}_g)$ & $\tbigo(\epsilon^{-8}_g)$ & poly-log \\
\hline  
CNC-GD (Algorithm \ref{alg:CNC-PGD}) &$\bigo(\epsilon^{-2}_g\log(1/\epsilon_g))$ & $\bigo(\epsilon^{-5}_h\log(1/\epsilon_h))$ & free \\
CNC-SGD (Algorithm \ref{alg:cnc_sgd}) &$\bigo(\epsilon^{-10}_g\log^2(1/\epsilon_g))$ & $\bigo(\epsilon^{-10}_h\log^2(1/\epsilon_h))$ & free \\
\end{tabular} 
\caption{Dimension dependency and iteration complexity to reach a second-order stationary point as characterized in Eq.~\eqref{eq:second_order_stat_point}. The notation $\bigo(\cdot)$ hides constant factors and $\tbigo(\cdot)$ hides a poly-logarithmic factor.}
\end{table*}
\end{center}
\vspace{-1cm}
In this work, we therefore turn our attention to the following question. Do we need to perturb iterates along \textit{all} dimensions in order for (S)GD to converge to a second-order stationary point? Or is it enough to simply rely on the inherent variance of SGD induced by sampling? More than a purely theoretical exercise, this question has some very important practical implications since in practice the vast majority of existing SGD methods do not add additional noise and therefore do not meet the requirement of isotropic noise. Thus we instead focus our attention on a less restrictive condition for which perturbations only have a guaranteed variance along directions of negative curvature of the objective, i.e. along the eigenvector(s) associated with the minimum eigenvalue of the Hessian. Instead of explicitly adding noise as done in Eqs.~\eqref{eq:PGD_update} and \eqref{eq:PSGD_update}, we will from now on consider the simple SGD step:
\begin{align} 
\w_{t+1} = \w_{t} - \eta \gf_\z(\w_t)
\label{eq:pertubed_steps}
\end{align}

and propose the following sufficient condition on the stochastic gradient $\gf_\z(\w)$ to guarantee convergence 
to a second-order stationary point.
\begin{assumption}[Correlated Negative Curvature (CNC)]
Let $\v_\w$ be the eigenvector corresponding to the minimum eigenvalue of the Hessian matrix $\hf(\w)$. The stochastic gradient $\gf_\z(\w)$ satisfies the CNC assumption, if the second moment of its projection along the direction $\v_\w$ is uniformly bounded away from zero, i.e.
\begin{equation} 
\exists \gamma>0 \,\,\text{s.t.}\,\, \forall \w: \E [\langle \v_\w, \gf_\z(\w) \rangle^2] > \gamma\,.
\label{eq:CNC}
\end{equation}
\label{ass:CNC}
\end{assumption}
\vspace{-0.75cm}
\paragraph{Contributions}
Our contribution is fourfold: First, we analyze the convergence of GD perturbed by SGD steps (Algorithm \ref{alg:CNC-PGD}). Under the CNC assumption, we demonstrate that this method converges to an $(\epsilon,\epsilon^{2/5})$-second-order stationary point in $\tbigo(\epsilon^{-2})$ iterations and with high probability. Second, we prove that vanilla SGD as stated in Algorithm \ref{alg:cnc_sgd} -again under Assumption \ref{ass:CNC}- also convergences to an $(\epsilon,\epsilon)$-second-order stationary point in $\tbigo(\epsilon^{-10})$ iterations and with high probability. To the best of our knowledge, this is the first second-order convergence result for SGD without adding additional noise. One important consequence of not relying on isotropic noise is that the rate of convergence becomes independent of the input dimension $d$. This can be a very significant practical advantage when optimizing deep neural networks that contain millions of trainable parameters.
Third, we prove that stochastic gradients satisfy Assumption \ref{ass:CNC} in the setting of learning half-spaces, which is ubiquitous in machine learning. Finally, we provide experimental evidence suggesting the validity of this condition for training neural networks. In particular we show that, while the variance of uniform noise along eigenvectors corresponding to the most negative eigenvalue decreases as $\bigo(1/d)$, stochastic gradients have a significant component along this direction independent of the \textit{width} and \textit{depth} of the neural net. When looking at the entire eigenspectrum, we find that this variance increases with the magnitude of the associated eigenvalues. Hereby, we contribute to a better understanding of the success of training deep networks with SGD and its extensions.


\section{Background \& Related work}

\paragraph{Reaching a 1st-order stationary point} For smooth non-convex functions, a first-order stationary point satisfying $\| \nabla f(\x) \| \leq \epsilon$ can be reached by GD and SGD in $\bigo(\epsilon^{-2})$ and $\bigo(\epsilon^{-4})$ iterations respectively~\cite{nesterov2013introductory}. Recently, it has been shown that GD can be accelerated to find such a point in $\bigo(\epsilon^{-7/4}\log(\epsilon^{-1})$)  \cite{carmon2017convex}.

\paragraph{Reaching a 2nd-order stationary point}
In order to reach second-order stationary points, existing first-order techniques rely on explicitly adding isotropic noise with a known variance (see Eq.~\eqref{eq:PGD_update}). The key motivation for this step is the insight that the area of attraction to a saddle point constitutes an unstable manifold and thus gradient descent methods are unlikely to get stuck, but if they do, adding noise allows them to escape \cite{lee2016gradient}. Based upon this observations, recent works prove second-order convergence of normalized GD \cite{levy2016power} and perturbed GD \cite{jin2017escape}. The later needs at most $\bigo(\max\{\epsilon_g^{-2},\epsilon_h^{-4}\}\log^4(d))$ iterations and is thus the first to achieve a poly-log dependency on the dimensionality. The convergence of SGD with additional noise was analyzed in~\cite{ge2015escaping} but to the best of our knowledge, no prior work demonstrated convergence of SGD \textit{without} explicitly adding noise.

\paragraph{Using curvature information}
Since negative curvature signals potential descent directions, it seems logical to apply a second-order method to exploit this curvature direction in order to escape saddle points. Yet, the prototypical Newton's method has no global convergence guarantee and is locally attracted by saddle points and even local maxima~\cite{dauphin2014identifying}. Another issue is the computation (and perhaps storage) of the Hessian matrix, which requires $\bigo(nd^2)$ operations as well as computing the inverse of the Hessian, which requires $\bigo(d^3)$ computations.

The first problem can be solved by using trust-region methods that guarantee convergence to a second-order stationary point~\cite{conn2000trust}. Among these methods, the Cubic Regularization technique initially proposed by~\cite{nesterov2006cubic} has been shown to achieve the optimal worst-case iteration bound $\bigo(\max\{\epsilon_g^{-3/2},\epsilon_h^{-3}\})$ \cite{cartis2012much}. The second problem can be addressed by replacing the computation of the Hessian by Hessian-vector products that can be computed efficiently in $\bigo(nd)$~\cite{pearlmutter1994fast}. This is applied e.g. using matrix-free Lanczos iterations~\cite{curtis2017exploiting, reddi2017generic} or online variants such as Oja's algorithm~\cite{allen2017natasha}. Sub-sampling the Hessian can furthermore reduce the dependence on $n$ by using various sampling schemes~\cite{kohler2017sub,xu2017newton}. Finally, ~\cite{xu2017first} and ~\cite{allen2017neon2} showed that noisy gradient updates act as a noisy Power method allowing to find a negative curvature direction using only first-order information. 
Despite the recent theoretical improvements obtained by such techniques, first-order methods still dominate for training large deep neural networks. Their theoretical properties are however not perfectly well understood in the general case and we here aim to deepen the current understanding.

\section{GD Perturbed by Stochastic Gradients}

In this section we derive a converge guarantee for a combination of gradient descent and stochastic gradient steps, as presented in Algorithm~\ref{alg:CNC-PGD}, for the case where the stochastic gradient sequence meets the CNC assumption introduced in Eq.~(\ref{eq:CNC}).
We name this algorithm CNC-PGD since it is a modified version of the PGD method~\cite{jin2017escape}, but use the intrinsic noise of SGD instead of requiring noise isotropy. 
Our theoretical analysis relies on the following smoothness conditions on the objective function $f$.
\begin{assumption}[Smoothness Assumption] \label{ass:smoothness}
We assume that the function $f \in C^2(\R^d,\R)$ has $L$-Lipschitz gradients and $\rho$-Lipschitz Hessians and that each function $f_\z$ has an $\ell$-bounded gradient.\footnote{See Appendix A for formal definitions.} W.l.o.g. we further assume that $\rho$, $\ell$, and $L$ are greater than one.
\end{assumption}
Note that $L$-smoothness and $\rho$-Hessian Lipschitzness are standard assumptions for convergence analysis to a second-order stationary point \cite{ge2015escaping,jin2017escape,nesterov2006cubic}. The boundedness of the stochastic gradient $\gf_\z(\w)$ is often used in stochastic optimization~\cite{moulines2011non}.
\begin{algorithm}[tb]
\begin{algorithmic}[1]
   \STATE \textbf{Input:} $\gt$, $\tr$, $T$, $\eta$ and $r$ \;
   \STATE $\tn \leftarrow -\tr -1 $
   \FOR{$t=1,2, \dots,T$}
   \IF{$\| \gf(\w_t)\|^2 \leq \gt \text{ and } t-\tn\geq \tr$}
   \STATE $\pw_t \leftarrow \w_t, \tn \leftarrow t$ \qquad\quad\;\;\textit{\footnotesize{\# used in the analysis}}\\
   \STATE $\w_{t+1} \leftarrow \w_t - r \gf_\z(\w_t)$ \;\;\;\; \textit{\footnotesize{\#} $z \stackrel{\text{i.i.d}}{\sim} \P$}
   \ELSE
   \STATE $\w_{t+1} \leftarrow \w_{t} - \eta \gf(\w_t) $
   \ENDIF
   \ENDFOR
   \STATE \textbf{return }$\widehat{\w}$ uniformly from $\{\w_1,\dots, \w_T \}$
\end{algorithmic}
\caption{CNC-PGD}
\label{alg:CNC-PGD}
\end{algorithm}

\paragraph{Parameters} 
The analysis presented below relies on a particular choice of parameters. Their values are set based on the desired accuracy $\epsilon$ and presented in Table~\ref{tab:pgd_parameters}.
\begin{table}[tb]
\footnotesize
    \centering
    \begin{tabular}{c c c}
    Parameter  & Value & Dependency on
    $\epsilon$ \\
    \hline
    $\eta$ & $1/L$ & Independent\\ 
    $r$ & $c_1(\delta \gamma \epsilon^{4/5})/(\ell^3 L^2)$ & $\bigo(\epsilon^{4/5})$ \\ 
    $\omega$ & $\log(\ell L/(\gamma\delta\epsilon))$ & $\bigo(\log(1/\epsilon))$ \\
    $\tr$ & $c_2 L(\sqrt{\rho} \epsilon^{2/5})^{-1} \omega $ & $\bigo(\epsilon^{-2/5} \log(1/\epsilon))$ \\ 
     $\ft$ & $c_3 \delta \gamma^2  \epsilon^{8/5}/(\ell^2 L)^2$ & $\bigo(\epsilon^{8/5})$
     \\
    $\gt$ & $\ft/\tr$ & $\bigo(\epsilon^2/\log(1/\epsilon))$ \\
    $T$ & $4 (f(\w_0) - f^*)/(\eta \delta \gt)$ & $\bigo(\epsilon^{-2}\log(1/\epsilon))$ \\ 
    \hline
\end{tabular}
    \caption{\textit{Parameters of CNC-PGD.} Note that the constants $\ft$ and $\omega$ are only needed for the analysis and thus not required to run Algorithm~\ref{alg:CNC-PGD}. The constant $\delta \in (0,1)$ comes from the probability statement in Theorem~\ref{theorem:pgd}. Finally the constants $c_1,c_2$ and $c_3$ are independent of the parameters $\gamma$,$\delta$, $\epsilon$, $\ell$, $\rho$, and $L$ (see Appendix B for more details).}
    \label{tab:pgd_parameters}
\end{table}
\subsection{PGD Convergence Result}
\begin{theorem} \label{theorem:pgd} 
Let the stochastic gradients $\gf_\z(\w_t)$ in CNC-PGD satisfy Assumption \ref{ass:CNC} and let $f$, $f_\z$ satisfy Assumption \ref{ass:smoothness}. Then Algorithm \ref{alg:CNC-PGD} returns an  $\left(\epsilon,\sqrt{\rho}\epsilon^{2/5}\right)$-second-order stationary point with probability at least $\left(1-\delta\right)$ after \[\bigo\left( (\ell L)^4(\delta \gamma \epsilon)^{-2}\log\left(\frac{\ell L}{\eta \delta \gamma\epsilon^{2/5}}\right)\right)\] steps, where $\delta<1$.
\end{theorem}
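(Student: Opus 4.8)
The plan is to argue by amortizing the decrease of $f$, replacing the usual ``isotropic noise escapes saddles'' step by a ``CNC noise escapes saddles'' step. Since $\widehat{\w}$ is drawn uniformly from $\{\w_1,\dots,\w_T\}$, it suffices to bound the \emph{expected} number of iterates $\w_t$ that fail to be an $(\epsilon,\sqrt{\rho}\epsilon^{2/5})$-stationary point (all constant-factor rescalings of $\epsilon$, e.g.\ replacing $\sqrt{\rho}\epsilon^{2/5}$ by $2\sqrt{\rho}\epsilon^{2/5}$, are absorbed into the $\bigo(\cdot)$ of the statement), and to show this is at most $\delta T$; no high-probability concentration is needed. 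With $\eta=1/L$, Assumption~\ref{ass:smoothness} gives the descent inequality $f(\w_{t+1})\le f(\w_t)-\tfrac{\eta}{2}\|\gf(\w_t)\|^2$ at every GD step, while at a perturbation step $f$ increases by at most $\Delta:=\tfrac{L}{2}r^2\ell^2+r\ell\sqrt{\gt}$ (using $\|\gf_\z\|\le\ell$ and $\|\gf(\pw_t)\|\le\sqrt{\gt}$). Perturbations are $\ge\tr$ apart, so there are at most $T/\tr$ of them, and the values in Table~\ref{tab:pgd_parameters} are set so that $(T/\tr)\Delta\le f(\w_0)-f^*$. Summing the descent inequality then bounds the number of iterates with $\|\gf(\w_t)\|^2>\gt$ by $\bigo\!\big((f(\w_0)-f^*)/(\eta\gt)\big)$; since $\gt\le\epsilon^2$ this already accounts for every iterate violating $\|\gf(\w_t)\|\le\epsilon$. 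Every remaining iterate has $\|\gf(\w_t)\|^2\le\gt$, and because the trigger in Algorithm~\ref{alg:CNC-PGD} is exactly ``$\|\gf\|^2\le\gt$ and $t-\tn\ge\tr$'', such an iterate is either a perturbation point $\pw_t$ or one of the $\tr$ GD steps that follow one --- i.e.\ it lies in a block of $\tr{+}1$ consecutive iterates that I will call a \emph{phase}.

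\paragraph{The escape lemma.}
The heart of the argument is the claim: \emph{if $\lambda_{\min}\!\big(\hf(\pw_t)\big)\le-\sqrt{\rho}\epsilon^{2/5}$ at a perturbation point $\pw_t$, then $\E\!\big[f(\pw_t)-f(\w_{t+\tr})\big]\ge\ft$.} I would prove it by an ``improve-or-localize'' dichotomy: writing $\w_{t+\tr}-\w_{t+1}=-\eta\sum_{j}\gf(\w_{t+j})$ and combining Cauchy--Schwarz, the descent inequality and monotonicity of $f$ along GD steps, either $f(\pw_t)-f(\w_{t+\tr})\ge\ft$ (done, for that run) or the whole phase stays inside a ball $B\!\big(\pw_t,R_0\big)$ with $R_0=\bigo(\sqrt{\eta\,\tr\,\ft})$. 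In the localized case, track the scalar $a_k:=\langle\v_{\pw_t},\w_{t+k}-\pw_t\rangle$. The stochastic step gives $a_1=-r\langle\v_{\pw_t},\gf_\z(\pw_t)\rangle$, and by Assumption~\ref{ass:CNC} $\E[a_1^2]\ge r^2\gamma$ while $|a_1|\le r\ell$ almost surely; combining these two moments yields $\Pr(|a_1|\ge c\,r\sqrt{\gamma})\ge q:=(1-c^2)\gamma/\ell^2$ for any $c\in(0,1)$. For $k\ge1$, using the eigen-relation $\langle\v_{\pw_t},\hf(\pw_t)\u\rangle=\lambda_{\min}\langle\v_{\pw_t},\u\rangle$, the $\rho$-Hessian-Lipschitz remainder ($\le\tfrac{\rho}{2}R_0^2$ while localized) and $\|\gf(\pw_t)\|\le\sqrt{\gt}$, one gets $|a_{k+1}|\ge(1+\eta\sqrt{\rho}\epsilon^{2/5})|a_k|-\eta\big(\sqrt{\gt}+\tfrac{\rho}{2}R_0^2\big)$. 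The parameters are tuned so the additive term is $\le\tfrac12\eta\sqrt{\rho}\epsilon^{2/5}\cdot c\,r\sqrt{\gamma}$, hence on $\{|a_1|\ge c\,r\sqrt{\gamma}\}$ one gets $|a_k|\ge(1+\tfrac12\eta\sqrt{\rho}\epsilon^{2/5})^{\,k-1}|a_1|$ for all $k\le\tr$. Since $\tr=\Theta\big(L\,\omega/(\sqrt{\rho}\epsilon^{2/5})\big)$ with $\omega=\log(\ell L/(\gamma\delta\epsilon))$, the amplification at $k=\tr$ is a large polynomial in the problem constants and $1/\epsilon$ --- large enough that $|a_\tr|>R_0$, contradicting $\|\w_{t+\tr}-\pw_t\|\le R_0$. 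So on that event the ``localize'' branch is impossible and $f$ drops by $\ge\ft$; taking expectations, $\E[f(\pw_t)-f(\w_{t+\tr})]\ge q\,\ft-(1-q)\Delta$, which is $\ge\ft$ once the factor $q\asymp\gamma/\ell^2$ is folded into the Table~\ref{tab:pgd_parameters} value of $\ft$ --- this is exactly why $\ft\propto\gamma^2/\ell^4$ there.

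\paragraph{Counting.}
For the final count I would call a phase \emph{productive} if $f(\pw_t)-f(\w_{t+\tr})\ge\ft$ in that run. Per-phase decreases telescope against $f(\w_0)-f^*$ (up to the already-bounded $\Delta$-terms), so there are at most $\bigo\!\big((f(\w_0)-f^*)/\ft\big)$ productive phases deterministically, contributing $\bigo\!\big((\tr{+}1)(f(\w_0)-f^*)/\ft\big)=\bigo\!\big((f(\w_0)-f^*)/\gt\big)$ iterates, which I simply over-count as bad. A non-productive phase is localized, and for $\epsilon$ below an absolute threshold $R_0+r\ell\ll\epsilon^{2/5}/\sqrt{\rho}$, so by Hessian-Lipschitzness every iterate in it with $\|\gf\|^2\le\gt$ has $\lambda_{\min}(\hf)\ge\lambda_{\min}\!\big(\hf(\pw_t)\big)-\sqrt{\rho}\epsilon^{2/5}$; combined with $\|\gf\|\le\sqrt{\gt}\le\epsilon$, such an iterate is second-order stationary provided $\lambda_{\min}(\hf(\pw_t))\ge-\sqrt{\rho}\epsilon^{2/5}$. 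The only remaining bad iterates sit in phases whose starting point \emph{does} satisfy $\lambda_{\min}(\hf(\pw_t))\le-\sqrt{\rho}\epsilon^{2/5}$ (``escaping'' phases) yet turned out non-productive; but the escape lemma, summed adaptively over phases (a standard optional-stopping/supermartingale bookkeeping conditioning each phase on the past), gives $\E[\#\text{escaping phases}]\cdot\ft\le\bigo(f(\w_0)-f^*)$, adding at most another $\bigo\!\big((f(\w_0)-f^*)/\gt\big)$ iterates in expectation. Summing the buckets, $\E[\#\text{bad iterates}]=\bigo\!\big((f(\w_0)-f^*)/(\eta\gt)\big)$; with $T=\Theta\!\big((f(\w_0)-f^*)/(\eta\delta\gt)\big)$ this is $\le\delta T$, hence $\Pr(\widehat{\w}\text{ bad})=\E[\#\text{bad}]/T\le\delta$. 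Finally, substituting $\gt=\ft/\tr$, $\eta=1/L$ and the Table~\ref{tab:pgd_parameters} expressions for $\ft,\tr$ turns the bound into $\bigo\!\big((\ell L)^4(\delta\gamma\epsilon)^{-2}\log(\ell L/(\eta\delta\gamma\epsilon^{2/5}))\big)$.

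\paragraph{Main obstacle.}
The hardest part will be the escape lemma, and within it the coupled recursion for $a_k$: the geometric amplification driven by the negative eigenvalue must simultaneously (i) dominate the accumulated first-order error $\sqrt{\gt}$ and Hessian-Lipschitz error $\rho R_0^2$, and (ii) overshoot the localization radius $R_0$ --- while $R_0$ itself (controlled by $\ft$) must be small enough that the Hessian barely changes over a non-escaping phase. These competing requirements are precisely what pin down the intertwined choices of $r,\tr,\gt,\ft,\omega$ in Table~\ref{tab:pgd_parameters}, and checking they can be met at once is the real work; the adaptive bookkeeping in the counting step is routine by comparison.
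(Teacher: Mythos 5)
Your high-level skeleton (amortize the descent, prove a per-phase escape lemma under CNC, count the fraction of bad iterates and use the uniform return to turn this into a probability bound) is the same as the paper's, and your counting step is essentially the paper's argument re-expressed over phases rather than over single amortized steps. What is genuinely different is the \emph{escape lemma}. You prove it by a per-run improve-or-localize dichotomy combined with a Paley--Zygmund step: from $\E[a_1^2]\ge\gamma r^2$ and $|a_1|\le\ell r$ you extract an escape event of probability $q\asymp\gamma/\ell^2$, and on that event you run a deterministic scalar recursion $|a_{k+1}|\ge(1+\eta\lambda)|a_k|-\eta(\sqrt{\gt}+\tfrac{\rho}{2}R_0^2)$ to overshoot the per-run localization radius $R_0\asymp\sqrt{\eta\tr\ft'}$. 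The paper instead works entirely in expectation: it assumes $\E[f(\w_\tr)]-f(\pw)>-\ft$ for contradiction, derives an \emph{expected} distance bound that is linear in $t$ (Lemma~\ref{lemma:expected_distance_bound}), introduces the stale quadratic model $g$ around $\pw$, decomposes $\w_{t+1}-\pw=\u_t+\eta(\bdelta_t+\d_t)$, lower-bounds $\E\|\u_t\|^2\ge\gamma r^2\kappa^{2t}$ directly from the CNC second moment (no event splitting), upper-bounds $\|\u_t\|$ deterministically and $\E\|\bdelta_t\|$ via the geometric series $\sum_i\kappa^{t-i}i\lesssim\kappa^t/(\eta\lambda)^2$, kills $\d_t$ by a sign argument, and reads off a lower bound $\E\|\w_t-\pw\|^2\ge\gamma r^2\kappa^{2t}/4$ that contradicts the linear upper bound for $t=\tr$.

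This difference is not purely cosmetic: your route provably loses constants that the paper's route does not. Two concrete sources. (i) Paley--Zygmund turns the CNC second-moment lower bound into an escape probability $q\asymp\gamma/\ell^2$, so to guarantee $\E[\text{decrease}]\ge\ft$ you must demand a per-run drop $\ft'\gtrsim\ft\ell^2/\gamma$; the paper never pays this $\ell^2/\gamma$ because it keeps everything in expectation and only ever needs the second moment itself. (ii) Your per-run localization radius $R_0^2\asymp\eta\tr\ft'$ is the worst case over the whole phase and thus carries a factor $\tr\propto\omega/(\eta\lambda)$; the paper's expected Taylor-error term $\E\|\bdelta_t\|$ instead involves $\sum_i\kappa^{t-i}\,\E\|\w_i-\pw\|^2\lesssim\kappa^t/(\eta\lambda)^2$, which is dominated by the early (small) terms and so does not pick up the extra $\omega=\log(1/\epsilon)$. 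If you push the constraint ``$\tfrac{\rho}{2}R_0^2\le\tfrac12 c\lambda r\sqrt{\gamma}$'' through with Table~\ref{tab:pgd_parameters}'s $r$, you arrive at $\ft\lesssim\gamma^{3/2}\lambda^2 r/(\rho\,\omega\,\ell^2)$, which is smaller than the paper's constraint $\ft\lesssim\gamma r\lambda^2/(\rho\ell)$ by a factor $\asymp\sqrt{\gamma}/(\omega\ell)$; correspondingly $T=\Theta\big((f(\w_0)-f^*)/(\eta\delta\gt)\big)$ inflates by $\asymp\omega\ell/\sqrt{\gamma}$. So your argument proves a theorem of the same shape but with a rate worse by roughly $\ell\log(1/\epsilon)/\sqrt{\gamma}$, and it does not recover the stated $(\ell L)^4(\delta\gamma\epsilon)^{-2}\log(\cdot)$ bound as written. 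You flagged the parameter interplay as ``the real work,'' and indeed this is exactly where the two approaches part ways: to hit the stated constants you would need to replace the per-run dichotomy plus Paley--Zygmund by the expectation-level localization and power-iteration decomposition the paper uses.
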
 

\paragraph{Remark}
CNC-PGD converges polynomially to a second-order stationary point under Assumption \ref{ass:CNC}. By relying on isotropic noise, \cite{jin2017escape} prove convergence to a $\left(\epsilon,(\rho\epsilon)^{1/2}\right)$-stationary point in $\tbigo\left(1/\epsilon^2\right)$ steps. The result of Theorem~\ref{theorem:pgd} matches this rate in terms of first-order optimality but is worse by an $\epsilon^{-0.1}$-factor in terms of the second-order condition. Yet, we do not know whether our rate is the best achievable rate under the CNC condition and whether having isotropic noise is necessary to obtain a faster rate of convergence. As mentioned previously, a major benefit of employing the CNC condition is that it results in a convergence rate that does not depend on the dimension of the parameter space.\footnote{This result is not in conflict with the dimensionality-dependent lower bound established in \cite{simchowitz2017gap} since they make no initialization assumption as we do in Assumption \ref{ass:CNC} (CNC).} Furthermore, we believe that the dependency to $\gamma$ (Eq.~\eqref{eq:CNC}) can be significantly improved.


\subsection{Proof sketch of Theorem~\ref{theorem:pgd}} 

In order to prove Theorem~\ref{theorem:pgd}, we consider three different scenarios depending on the magnitude of the gradient and the amount of negative curvature. Our proof scheme is mainly inspired by the analysis of perturbed gradient descent~\cite{jin2017escape}, where a deterministic sufficient condition is established for escaping from saddle points (see Lemma 11). This condition is shown to hold in the case of isotropic noise. However, the non-isotropic noise coming from stochastic gradients is more difficult to analyze. Our contribution is to show that a less restrictive assumption on the perturbation noise still allows to escape saddle points. Detailed proofs of each lemma are provided in the Appendix.

\paragraph{Large gradient regime} When the gradient is large enough, we can invoke existing results on the analysis of gradient descent for non-convex functions~\cite{nesterov2013introductory}. 
\begin{lemma} \label{lemma:decrease_f}
Consider a gradient descent step $\w_{t+1}=\w_t-\eta \gf(\w_t)$ on a $L$-smooth function $f$. For $\eta \leq 1/L$ this yields the following function decrease:
\begin{align} 
f(\w_{t+1}) - f(\w_t) \leq - \frac{\eta}{2} \| \gf(\w_t)\|^2.\end{align} 
\end{lemma}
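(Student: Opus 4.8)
The plan is to invoke the classical quadratic upper bound implied by $L$-Lipschitzness of the gradient and then simply substitute the gradient step. First I would establish that for an $L$-smooth $f$ one has, for all $\x,\y\in\R^d$,
\[ f(\y) \le f(\x) + \langle \gf(\x),\, \y-\x\rangle + \frac{L}{2}\|\y-\x\|^2. \]
This follows by writing $f(\y)-f(\x) = \int_0^1 \langle \gf(\x+s(\y-\x)),\, \y-\x\rangle\, ds$, subtracting $\langle \gf(\x),\y-\x\rangle$, and bounding the resulting integrand by $\|\gf(\x+s(\y-\x))-\gf(\x)\|\cdot\|\y-\x\| \le L s \|\y-\x\|^2$ via Cauchy--Schwarz and the Lipschitz property; integrating $s$ over $[0,1]$ produces the factor $1/2$.

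Next I would apply this bound with $\x=\w_t$ and $\y=\w_{t+1}=\w_t-\eta\gf(\w_t)$, so that $\y-\x=-\eta\gf(\w_t)$. This gives
\[ f(\w_{t+1}) - f(\w_t) \le -\eta\|\gf(\w_t)\|^2 + \frac{L\eta^2}{2}\|\gf(\w_t)\|^2 = -\eta\Big(1-\frac{L\eta}{2}\Big)\|\gf(\w_t)\|^2. \]
Finally, the step-size restriction $\eta\le 1/L$ yields $L\eta/2\le 1/2$, hence $1-L\eta/2\ge 1/2$, which delivers the claimed inequality $f(\w_{t+1}) - f(\w_t) \le -\frac{\eta}{2}\|\gf(\w_t)\|^2$.

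There is no genuine obstacle here: this is the textbook descent lemma for $L$-smooth functions, and the only step needing a line of care is the derivation of the quadratic upper bound from the Lipschitz-gradient hypothesis, which is itself routine. I would state it as a standalone fact (or cite it, e.g.\ \cite{nesterov2013introductory}) and keep the proof to the three lines above.
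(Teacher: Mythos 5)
Your proof is correct and is precisely the textbook descent lemma argument that the paper implicitly invokes by citing Nesterov; the paper does not spell out a proof for this lemma but uses the identical quadratic upper bound from $L$-smoothness in its proof of the analogous stochastic version (Lemma~\ref{lemma:decrease_f_sgd}). No discrepancy to report.
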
 
Using the above result, we can guarantee the desired decrease whenever the norm of the gradient is large enough. Suppose that $\| \gf(\w_t) \|^2 \geq \gt$, then Lemma~\ref{lemma:decrease_f} immediately yields 
\begin{align} 
f(\w_{t+1}) - f(\w_t) & \leq - \frac{\eta}{2}  \gt. \label{eq:large_gradient_result}
\end{align} 

\paragraph{Small gradient and sharp negative curvature regime}
Consider the setting where the norm of the gradient is small, i.e. $\|\gf(\w_t) \|^2 \leq \gt$, but the minimum eigenvalue of the Hessian matrix is significantly less than zero, i.e. $\lambda_{\min} (\hf(\w))\ll 0$. In such a case, exploiting Assumption~\ref{ass:CNC} (CNC) provides a guaranteed decrease in the function value after $\tr$ iterations, in expectation. 
 \begin{lemma} \label{lemma:small_grad_large_nc}
Let Assumptions~\ref{ass:CNC} and~\ref{ass:smoothness} hold. Consider perturbed gradient steps (Algorithm \ref{alg:CNC-PGD} with parameters as in Table~\ref{tab:pgd_parameters}) starting from ${\pw}_t$ such that $\|\gf(\pw_t)\|^2 \leq \gt$. Assume the Hessian matrix $\hf(\pw_t)$ has a large negative eigenvalue, i.e. 
 \begin{align} \label{eq:lambda_min_bound}
\lambda_{\min} (\hf(\pw_t)) \leq - \sqrt{\rho}\epsilon^{2/5}.
 \end{align}
Then, after $\tr$ iterations the function value decreases as 
\begin{align} 
\E \left[ f(\w_{t+\tr}) \right] - f(\pw_t)  \leq - \ft,
\end{align}  where the expectation is over the sequence $\{\w_k\}_{t+1}^{t+\tr}$. 
 \end{lemma}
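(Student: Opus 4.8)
The plan is to adapt the classical ``escape‑the‑saddle'' template (as in \cite{jin2017escape}) to the weaker CNC perturbation, with the single stochastic‑gradient step at time $t$ replacing the explicit isotropic noise. Write $\v:=\v_{\pw_t}$ for the unit eigenvector of $\hf(\pw_t)$ belonging to $\lambda_{\min}:=\lambda_{\min}(\hf(\pw_t))\le-\sqrt{\rho}\epsilon^{2/5}$. Conditioned on $\pw_t$, the only randomness in $\w_{t+1},\dots,\w_{t+\tr}$ is the sample $\z$ in $\w_{t+1}=\pw_t-r\gf_\z(\pw_t)$, since all later steps are deterministic GD steps; hence by Lemma~\ref{lemma:decrease_f} the sequence $f(\w_{t+1})\ge f(\w_{t+2})\ge\cdots\ge f(\w_{t+\tr})$ is non‑increasing. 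First I would bound the cost of the perturbation: $L$‑smoothness with $\|\gf_\z(\pw_t)\|\le\ell$ and $\|\gf(\pw_t)\|^2\le\gt$ gives $f(\w_{t+1})-f(\pw_t)\le r\ell\sqrt{\gt}+\tfrac{L}{2}r^2\ell^2=:\Delta$, which Table~\ref{tab:pgd_parameters} forces to be $\le\tfrac12\ft$.

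Next comes a deterministic ``localize‑or‑improve'' dichotomy. Fix $\z$ and suppose $f(\w_{t+\tr})-f(\pw_t)>-\Phi$ for a threshold $\Phi$ to be fixed. Summing the descent bound of Lemma~\ref{lemma:decrease_f} over the GD phase gives $\sum_{k=t+1}^{t+\tr-1}\|\w_{k+1}-\w_k\|^2\le2\eta(\Delta+\Phi)$, so by Cauchy--Schwarz the whole trajectory stays in $B(\pw_t,R)$ with $R:=r\ell+\sqrt{2\eta\tr(\Delta+\Phi)}$. Inside this ball I would Taylor‑expand $\gf(\w_k)=\gf(\pw_t)+\Hm_k(\w_k-\pw_t)$ with $\|\Hm_k-\hf(\pw_t)\|\le\rho R$, project the GD recursion onto $\v$, and use $\v^{\top}\hf(\pw_t)=\lambda_{\min}\v^{\top}$ to get, for $m_k:=\langle\v,\w_k-\pw_t\rangle$ and the linearized fixed point $m^*:=\langle\v,\gf(\pw_t)\rangle/|\lambda_{\min}|$, a recursion $m_{k+1}-m^*=(1-\eta\lambda_{\min})(m_k-m^*)+(\text{Hessian error})$ with $1-\eta\lambda_{\min}\ge1+\eta\sqrt{\rho}\epsilon^{2/5}$. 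Controlling the Hessian error -- which is $\bigo(\eta\rho R^2)$ naively, and which a coupling/alignment argument in the spirit of \cite{jin2017escape} turns into a multiplicative $\bigo(\eta\rho R)$ perturbation of the rate as long as $\rho R\ll\sqrt{\rho}\epsilon^{2/5}$ -- shows that $|m_k-m^*|$ grows at rate at least $1+\tfrac12\eta\sqrt{\rho}\epsilon^{2/5}$, so after $\tr$ steps $|m_{t+\tr}|>R$ unless the effective seed satisfies $|m_{t+1}-m^*|\le\theta:=(R+|m^*|)(1+\tfrac12\eta\sqrt{\rho}\epsilon^{2/5})^{-(\tr-1)}$. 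Since $\w_{t+\tr}\in B(\pw_t,R)$ is forced on our supposition, this shows that $|m_{t+1}-m^*|>\theta$ implies $f(\w_{t+\tr})-f(\pw_t)\le-\Phi$.

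Assumption~\ref{ass:CNC} is what makes $\{|m_{t+1}-m^*|>\theta\}$ likely. Since $m_{t+1}=-r\langle\v,\gf_\z(\pw_t)\rangle$, $m^*$ is deterministic, and $\E_\z\langle\v,\gf_\z(\pw_t)\rangle=\langle\v,\gf(\pw_t)\rangle$, expanding the square gives
\[\E_\z[(m_{t+1}-m^*)^2]=r^2\,\E_\z[\langle\v,\gf_\z(\pw_t)\rangle^2]+\frac{2r\langle\v,\gf(\pw_t)\rangle^2}{|\lambda_{\min}|}+(m^*)^2\;\ge\;r^2\gamma,\]
the last two terms being nonnegative and the first bounded below by CNC. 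Together with the almost‑sure bound $|m_{t+1}-m^*|\le r\ell+|m^*|=:B$ (using $\|\gf_\z\|\le\ell$ and $|m^*|\le\sqrt{\gt}/(\sqrt{\rho}\epsilon^{2/5})$), a Paley--Zygmund truncation yields $\Pr(|m_{t+1}-m^*|>\theta)\ge(r^2\gamma-\theta^2)/B^2\ge r^2\gamma/(2B^2)$ once $\theta^2\le r^2\gamma/2$, which is exactly what $\tr=\Theta(L\omega/(\sqrt{\rho}\epsilon^{2/5}))$ secures. Combining the deterministic dichotomy with this probability bound,
\[\E[f(\w_{t+\tr})]-f(\pw_t)\;\le\;-\Phi\cdot\Pr(|m_{t+1}-m^*|>\theta)+\Delta\;\le\;-\frac{\Phi r^2\gamma}{2B^2}+\Delta\;\le\;-\ft,\]
the last inequality fixing $\Phi=\Theta(B^2\ft/(r^2\gamma))$ (hence the constant $c_3$) and using $\Delta\le\tfrac12\ft$.

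The main obstacle will be the simultaneous calibration of the parameters. The escape probability $r^2\gamma/(2B^2)$ is only of order $\gamma\epsilon^{2/5}$, not a constant -- the dominant part of $B$ is $|m^*|\le\sqrt{\gt}/(\sqrt{\rho}\epsilon^{2/5})$ -- which forces $\Phi$ to exceed $\ft$ by a matching factor; this inflates the localization radius $R$, which must nonetheless remain small enough that $\rho R\ll\sqrt{\rho}\epsilon^{2/5}$ for the Hessian error to stay subdominant (constraining $c_3$ against $c_1,\ell,L$ and requiring the multiplicative/coupling handling of that error); and $\tr$ must be large enough that $\theta$ falls below $r\sqrt{\gamma/2}$, which pins $\tr$ to $\Theta(L\omega/(\sqrt{\rho}\epsilon^{2/5}))$ with $\omega=\log(\ell L/(\gamma\delta\epsilon))$. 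Getting all of these to close consistently is what dictates the particular powers of $\epsilon$ in Table~\ref{tab:pgd_parameters} and, ultimately, the appearance of $\epsilon^{2/5}$ rather than $\epsilon^{1/2}$ in the second‑order bound.
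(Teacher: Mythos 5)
You take a genuinely different route from the paper, and while the skeleton is plausible, there is a concrete gap at the step you label as a ``coupling/alignment argument,'' which you invoke but do not carry out. The paper's proof is a contradiction argument performed \emph{entirely in expectation}, with no conditioning on a ``good seed.'' Assuming $\E[f(\w_{\tr})] - f(\pw) > -\ft$, it derives a linear-in-$t$ upper bound on $\E\|\w_t - \pw\|^2$, decomposes the iterates as $\w_{t+1} - \pw = \u_t + \eta(\bdelta_t + \d_t)$ with $\u_t = (\Im - \eta\Hm)^t(\w_1 - \pw)$ the power-iteration term, $\bdelta_t$ the stale-Taylor error, and $\d_t$ the initial-gradient term, and then shows: (i) $\E\|\u_t\|^2 \geq \gamma r^2 \kappa^{2t}$ directly from Assumption~\ref{ass:CNC} via Cauchy--Schwarz on the $\v$-projection (the \emph{only} place CNC is used); (ii) $\E[\u_t^\top]\d_t \geq 0$; (iii) $\E\|\bdelta_t\|$ is $\bigo(\kappa^t)$ with a small enough coefficient after tuning $r$ and $\ft$. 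Together these give $\E\|\w_t - \pw\|^2 \geq \gamma r^2\kappa^{2t}/4$, which for $t = \tr$ contradicts the upper bound. Because the exponential growth is established for the \emph{expected} squared norm, no escape-probability or Paley--Zygmund truncation is needed and the awkwardness you flag at the end of your write-up (escape probability of order $\epsilon^{2/5}$, not constant) never appears explicitly, although it still determines the scale of $\ft$.

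The gap in your version is in controlling the Hessian error for the one-dimensional recursion $m_{k+1}-m^* = (1-\eta\lambda_{\min})(m_k - m^*) + (\text{error})$. As you note, the error is $\bigo(\eta\rho R^2)$ additive per step; accumulated over $\tr$ steps this contributes a fixed floor of order $\rho R^2 / |\lambda_{\min}|$, and one can check that at the paper's parameter values this floor is \emph{larger} than your seed threshold $\theta \sim R\,\kappa^{-(\tr-1)}$ (the ratio $\theta\big/\bigl(\rho R^2/|\lambda_{\min}|\bigr)$ scales like $\sqrt{\gamma}\,\delta\,\epsilon/\sqrt{L} \to 0$). So without the claimed upgrade to a multiplicative $\bigo(\eta\rho R)$ perturbation, the single-trajectory exponential-growth dichotomy does not close. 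The machinery in \cite{jin2017escape} that you appeal to achieves a multiplicative error, but it operates on the \emph{difference} of two coupled GD trajectories launched from perturbations differing only along $\v$, which satisfies a structurally cleaner recursion than the $\v$-projection of a single trajectory; transplanting it into your projection argument is a nontrivial missing step, and it is precisely what the paper's expectation-based decomposition is engineered to avoid.
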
 
 
 \paragraph{Small gradient with moderate negative curvature regime}
 Suppose that $\| \gf(\w_t) \|^2 \leq \gt$ and that the absolute value of the minimum eigenvalue of the Hessian is close to zero, i.e. we already reached the desired first- and second-order optimality. In this case, we can guarantee that adding noise will only lead to a limited increase in terms of expected function value.
 
 \begin{lemma} \label{lemma:small_grad_small_nc}
 Let Assumptions~\ref{ass:CNC} and~\ref{ass:smoothness} hold. Consider perturbed gradient steps (Algorithm \ref{alg:CNC-PGD} with parameters as in Table~\ref{tab:pgd_parameters}) starting from ${\pw}_t$ such that $\|\gf(\pw_t)\|^2 \leq \gt$. Then after $\tr$ iterations, the function value cannot increase by more than 
 \begin{align} 
 \E \left[ f(\w_{t+\tr})\right]  - f(\pw_t) \leq \frac{ \eta\delta \ft}{4}, 
 \end{align}
 where the expectation is over the sequence $\{\w_k\}_{t+1}^{t+\tr}$.
 \end{lemma}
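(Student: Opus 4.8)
The plan is to observe that within the length-$\tr$ window starting at $\pw_t$, Algorithm~\ref{alg:CNC-PGD} takes \emph{exactly one} stochastic perturbation step and then $\tr-1$ ordinary gradient descent steps. Indeed, once the perturbation fires at $\pw_t$ the algorithm sets $\tn\leftarrow t$, so the guard $k-\tn\geq\tr$ is violated for every $k\in\{t+1,\dots,t+\tr-1\}$ and all those updates are $\w_{k+1}=\w_k-\eta\gf(\w_k)$. By Lemma~\ref{lemma:decrease_f} (with $\eta\leq 1/L$) each such step satisfies $f(\w_{k+1})-f(\w_k)\leq-\tfrac{\eta}{2}\|\gf(\w_k)\|^2\leq 0$, i.e.\ it never increases $f$ and, crucially, is deterministic given $\w_{t+1}$. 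Hence it suffices to bound the expected increase caused by the single noisy step $\w_{t+1}=\pw_t-r\gf_\z(\pw_t)$.

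For that step I would invoke $L$-smoothness:
\begin{align*}
f(\w_{t+1})-f(\pw_t)
&\leq \langle\gf(\pw_t),\,\w_{t+1}-\pw_t\rangle+\tfrac{L}{2}\|\w_{t+1}-\pw_t\|^2\\
&= -r\langle\gf(\pw_t),\gf_\z(\pw_t)\rangle+\tfrac{Lr^2}{2}\|\gf_\z(\pw_t)\|^2 .
\end{align*}
Taking expectation over $\z\sim\P$, the cross term equals $-r\|\gf(\pw_t)\|^2\leq 0$ since $\E[\gf_\z(\pw_t)]=\gf(\pw_t)$, and the quadratic term is at most $\tfrac{Lr^2\ell^2}{2}$ by the $\ell$-boundedness of the stochastic gradients (Assumption~\ref{ass:smoothness}). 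Combining this with the fact that the subsequent $\tr-1$ GD steps can only decrease $f$ pointwise in $\z$, and taking expectations, gives $\E[f(\w_{t+\tr})]-f(\pw_t)\leq \tfrac{Lr^2\ell^2}{2}$.

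It then remains to check that $\tfrac{Lr^2\ell^2}{2}\leq\tfrac{\eta\delta\ft}{4}$ for the parameter choices in Table~\ref{tab:pgd_parameters}. Substituting $r=c_1\delta\gamma\epsilon^{4/5}/(\ell^3L^2)$, $\ft=c_3\delta\gamma^2\epsilon^{8/5}/(\ell^2L)^2$ and $\eta=1/L$, both sides reduce to a constant times $\delta^2\gamma^2\epsilon^{8/5}/(\ell^4L^3)$ — the left side with constant $c_1^2/2$ and the right side with $c_3/4$ — so the inequality holds precisely when $c_3\geq 2c_1^2$, which we include among the (problem-parameter-free) constraints on $c_1,c_2,c_3$.

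The argument is essentially routine and there is no genuine analytic obstacle. The only points requiring care are (i) the bookkeeping that guarantees exactly one noisy step inside the window, so that the increase scales like $r^2$ rather than $\tr r^2$; and (ii) ensuring that the constant constraint $c_3\geq 2c_1^2$ needed here is simultaneously compatible with the lower bound on the function decrease proved in Lemma~\ref{lemma:small_grad_large_nc}, so that the two lemmas can be combined in the proof of Theorem~\ref{theorem:pgd}.
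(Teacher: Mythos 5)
Your proposal is correct and follows essentially the same route as the paper's proof: both bound the expected increase from the single stochastic step by $L(\ell r)^2/2$ (the paper cites Lemma~\ref{lemma:decrease_f_sgd}, which is exactly the smoothness computation you unpack), and both observe that the remaining steps in the window are pure GD steps that cannot increase $f$ by Lemma~\ref{lemma:decrease_f}. Your explicit bookkeeping of the $k-\tn\geq\tr$ guard and the resulting constant constraint $c_3\geq 2c_1^2$ is a careful spelling-out of what the paper leaves implicit in Table~\ref{tab:pgd_parameters_constraints}.
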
 

\paragraph{Joint analysis}

We now combine the results of the three scenarios discussed so far. Towards this end we introduce the set $\S$ as 
\begin{align} 
\S := \{ \w \in \R^d \; | \;  & \| \gf(\w) \|^2 \geq \gt \nonumber \\
  & \; \text{or} \; \lambda_{\min} \left(\hf(\w)\right) \leq -\sqrt{\rho} \epsilon^{2/5}\}. \nonumber
\end{align}
Each of the visited parameters $\w_t, t=1,\ldots, T$ 
constitutes a random variable. For each of these random variables, we define the event $\A_t := \{ \w_t \in \S \} $. 
When $\A_t$ occurs, the function value decreases in expectation. Since the number of  steps required in the analysis of the large gradient regime and the sharp curvature regime are different, we use an amortized analysis similar to~\cite{jin2017escape} where we consider the per-step decrease~\footnote{Note that the amortization technique is here used to simplify the presentation but all our results hold without amortization.}. Indeed, when the negative curvature is sharp, then Lemma~\ref{lemma:small_grad_large_nc} provides a guaranteed decrease in $f$ which - when normalized per step - yields
\begin{align} 
\frac{\E \left[ f(\w_{t+\tr})\right] - f(\pw_t) }{\tr} \leq - \frac{\ft}{\tr} = - \eta \gt. 
\end{align} 
The large gradient norm regime of Lemma \ref{lemma:decrease_f} guarantees a decrease of the same order and hence
\begin{align} \label{eq:conditional_decrease}
\E \left[ f(\w_{t+1}) - f(\w_t) \; | \; \A_t \right] \leq -\frac{\eta}{2} \gt
\end{align}
follows from combining the two results.
Let us now consider the case when $\A_t^c$ (complement of $\A_t$) occurs. Then the result of Lemma~\ref{lemma:small_grad_small_nc} allows us to bound the increase in terms of function value, i.e.
\begin{align} \label{eq:contional_increase}
\E \left[ f(\w_{t+1}) - f(\w_t) \; | \; \A_t^c \right] \leq   \frac{\eta\delta}{4}  \gt.
\end{align}

\paragraph{Probabilistic bound}
  
The results established so far have shown that \emph{in expectation} the function value decreases until the iterates reach a second-order stationary point, for which Lemma~\ref{lemma:small_grad_small_nc} guarantees that the function value does not increase too much subsequently.\footnote{Since there may exist degenerate saddle points which are second-order stationary but not local minima we cannot guarantee that PGD stays close to a second-order stationary point it visits. One could rule out degenerate saddles using the strict-saddle assumption introduced in~\cite{ge2015escaping}.} This result guarantees visiting a second-order stationary point in $T$ steps (see Table~\ref{tab:pgd_parameters}). Yet, certifying second-order optimality is slightly more intricate as one would need to know which of the parameters $\{\w_1, \dots,\w_T\}$ meets the required condition. One solution to address this problem is to provide a high probability statement as suggested in~\cite{jin2017escape} (see Lemma 10). We here follow a similar approach except that unlike the result of~\cite{jin2017escape} that relies on exact function values, our results are valid in expectation. Our solution is to establish a high probability bound by returning one of the visited parameters picked uniformly at random. This approach is often used in stochastic non-convex optimization~\cite{ghadimi2013stochastic}.

The idea is simple: If the number of steps is sufficiently large, then  the results of Lemma \eqref{lemma:decrease_f}-\eqref{lemma:small_grad_small_nc} guarantee that the number of times we visit a second-order stationary point is high. Let $R$ be a random variable that determines the ratio of $(\epsilon,\sqrt{\rho}\epsilon^{2/5})$-second-order stationary points visited through the optimization path $\{\w_t\}_{t=1,\ldots,T}$.
Formally,
\begin{align} 
R := \frac{1}{T} \sum_{t=1}^T \mathds{1} \left( \A_t^c \right),
\end{align}
where $\mathds{1}$ is the indicator function.
Let $\P_t$ denote the probability of event $\A_t$ and $1-\P_t$ be the probability of its complement $\A_t^c$.  The probability of returning a second-order stationary point is simply
\begin{align} 
\E \left[ R \right] = \frac{1}{T}\sum_{t=1}^T (1-\P_t).
\end{align}
Estimating the probabilities $\P_t$ is difficult due to the interdependence of the random variables $\w_t$. However, we can upper bound the sum of the individual $\P_t$'s. 
Using the law of total expectation and the results from Eq.~\eqref{eq:conditional_decrease} and~\eqref{eq:contional_increase}, we bound the expectation of the function value decrease as:
\begin{multline}
    \E\left[ f(\w_{t+1}) - f(\w_t) \right] 
    \\ \leq \eta \gt \left( \delta/2 -(1+\delta/2)\P_t \right)/2. 
\end{multline} 

Summing over $T$ iterations yields 
\begin{multline} 
\sum_{i=1}^T \E \left[ f(\w_{t+1}) \right]  - \E \left[ f(\w_t) \right]  \\ \leq  \eta \gt \left(\delta T/2- (1+\delta/2) \sum_{t=1}^T \P_t\right)/2,
\end{multline}
which, after rearranging terms, leads to the following upper-bound
\begin{align}
\frac{1}{T}\sum_{t=1}^T \P_t \leq \frac{\delta}{2} + \frac{2\left(f(\w_0) - f^*)\right)}{ T \eta \gt} \leq \delta.
\end{align}
Therefore, the probability that $\A_t^c$ occurs uniformly over $\{1,\dots,T\}$ is lower bounded as 
\begin{align} 
\frac{1}{T}\sum_{t=1}^T (1-\P_t) \geq 1-\delta, 
\end{align}
which concludes the proof of Theorem~\ref{theorem:pgd}.


\section{SGD without Perturbation}

We now turn our attention to the stochastic variant of gradient descent under the assumption that the stochastic gradients fulfill the CNC condition (Assumption \ref{ass:CNC}). We name this method CNC-SGD and demonstrate that it converges to a second-order stationary point without any additional perturbation. Note that in order to provide the convergence guarantee, we periodically enlarge the step size through the optimization process, as outlined in Algorithm~\ref{alg:cnc_sgd}. This periodic step size increase amplifies the variance along eigenvectors corresponding to the minimum eigenvalue of the Hessian, allowing SGD to exploit the negative curvature in the subsequent steps (using a smaller step size). Increasing the step size is therefore similar to the perturbation step used in CNC-PGD (Algorithm~\ref{alg:CNC-PGD}). Although this may not be very common in practice, adaptive stepsizes are not unusual in the literature (see e.g. \cite{goyal2017accurate}).

\begin{algorithm}[h!]
\begin{algorithmic}[1]
   \STATE \textbf{Input:} $\tr$, $r$, $\eta$, and $T$\;\;\;\; ($\eta < r$)
   \FOR{$t=1,2,\dots, T$} 
        \IF{$(t \mod \tr) = 0$}
            \STATE $\pw_t \leftarrow \w_t$  \qquad\qquad\qquad\quad\textit{\footnotesize{\# used in the analysis}}\\
            \STATE    $\w_{t+1} \leftarrow \w_t - r \gf_\z(\w_t)$ \; \textit{\footnotesize{\#} $z \stackrel{\text{i.i.d}}{\sim} \P$}
        \ELSE
        \STATE $\w_{t+1} \leftarrow \w_{t} - \eta \gf_\z(\w_t) $ \; \textit{\footnotesize{\#} $z \stackrel{\text{i.i.d}}{\sim} \P$}
        \ENDIF
   \ENDFOR
   \STATE \textbf{return} $\Tilde{\w}_t$ uniformly from $\{\Tilde{\w}_t| t< T\}$. 
\end{algorithmic}
\caption{CNC-SGD}
\label{alg:cnc_sgd}
\end{algorithm}

\paragraph{Parameters} The analysis of CNC-SGD relies on the particular choice of parameters presented in Table~\ref{tab:sgd_parameters}.

\begin{table}[h!]
\footnotesize
    \centering
    \begin{tabular}{c c c}
    Parameter  & Value & Dependency to $\epsilon$ 
    \\
    \hline
     $r$ & 
     $c_1\delta \gamma \epsilon^2 /(\ell^3 L)$ & $\bigo(\epsilon^{2})$
     \\
    $\eta$ &
    $c_2 \gamma^2 \delta^2 \epsilon^5/(\ell^6 L^2)$  & $\bigo(\epsilon^{5})$ 
    \\
    $\tr$ & $c_3(\eta \epsilon )^{-1}\log(\ell L/(\eta \epsilon r ))$ & $\bigo(\epsilon^{-6} \log^2(1/\epsilon))$
    \\ 
    $T$ &  $2 \tr (f(\w_0) - f^*)/(\delta \ft)$  & $\bigo(\epsilon^{-10} \log^2(1/\epsilon))$  \\ 
    \hline \\
\end{tabular}
    \caption{Parameters of CNC-SGD: the constants $c_1,c_2$, and $c_3$ are independent of the parameters $\gamma$,$\delta$, $\epsilon$, $\rho$, and $L$ (see Appendix B for more details).}
    \label{tab:sgd_parameters}
\end{table}
\begin{theorem} \label{theorem:sgd_convergence}
Let the stochastic gradients $\gf_\z(\w_t)$ in CNC-SGD satisfy Assumption \ref{ass:CNC} and let $f$, $f_\z$ satisfy Assumption \ref{ass:smoothness}. Then Algorithm \ref{alg:cnc_sgd} returns an $\left(\epsilon,\sqrt{\rho}\epsilon\right)$-second-order stationary point with probability at least $(1-\delta)$ after $$\bigo\left( \left( \frac{L^3 \ell^{10} }{\delta^4 \gamma^4 } \right) (\epsilon^{-10}) \log^2\left(\frac{\ell L}{\epsilon\delta \gamma} \right)\right)$$ steps, where $\delta<1$. 
\end{theorem}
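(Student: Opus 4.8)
The plan is to transfer the three-regime argument behind Theorem~\ref{theorem:pgd} to the fully stochastic setting of Algorithm~\ref{alg:cnc_sgd}, replacing the deterministic GD steps by $\eta$-SGD steps and the explicit spherical perturbation by the periodic step-size enlargement $\eta\mapsto r$. Concretely I would establish three ingredients: (i) a one-step descent lemma for an $\eta$-SGD step; (ii) an escape lemma showing that one block of $\tr$ steps started at an almost-stationary point $\pw_t$ with $\lambda_{\min}(\hf(\pw_t))\le-\sqrt{\rho}\epsilon$ decreases $f$ by $\ft$ in expectation; and (iii) a companion lemma, analogous to Lemma~\ref{lemma:small_grad_small_nc}, showing that a block started at an $(\epsilon,\sqrt{\rho}\epsilon)$-second-order stationary point increases $f$ by at most $\delta\ft/4$ in expectation. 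Here $\gt$ is a first-order threshold with $\gt\le\epsilon^2$ and $\S:=\{\w:\norm{\gf(\w)}^2\ge\gt\ \text{or}\ \lambda_{\min}(\hf(\w))\le-\sqrt{\rho}\epsilon\}$. Given (i)--(iii), the amortized joint analysis over $\S$ and the concluding ``return a uniformly random snapshot'' probabilistic bound essentially reproduce the corresponding parts of the proof of Theorem~\ref{theorem:pgd}.

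For (i), $L$-smoothness together with $\E[\gf_\z(\w)\mid\w]=\gf(\w)$ and the boundedness $\norm{\gf_\z}\le\ell$ give
\begin{align}
\E\!\left[f(\w_{t+1})\mid\w_t\right]-f(\w_t)\;\le\;-\eta\norm{\gf(\w_t)}^2+\tfrac12 L\eta^2\ell^2
\end{align}
for an $\eta$-step, and (after dropping the negative term) an increase of at most $\tfrac12 Lr^2\ell^2$ for an $r$-step. Because $\eta=\bigo(\epsilon^5)$ lies far below $\gt/(L\ell^2)$, an $\eta$-step from a point with $\norm{\gf(\w_t)}^2\ge\gt$ still decreases $f$ by order $\eta\gt$; moreover the single $r$-step at the start of a block with $\norm{\gf(\pw_t)}^2\ge\gt$ decreases $f$ by $\tfrac12 r\gt=\Omega(\ft)$, which the chosen constants make dominate the $\bigo(\tr L\eta^2\ell^2)$ worst-case noise-driven increase over the block's remaining $\eta$-steps.

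Ingredient (ii) is the crux and the main obstacle. Fix a block with start $\pw_t$ and set $\Hm:=\hf(\pw_t)$, $\v:=\v_{\pw_t}$ and $\gamma_H:=-\lambda_{\min}(\Hm)\ge\sqrt{\rho}\epsilon$. The leading $r$-step injects variance along $\v$: by Assumption~\ref{ass:CNC}, $\E[\langle\v,\w_{t+1}-\pw_t\rangle^2]=r^2\,\E[\langle\v,\gf_\z(\pw_t)\rangle^2]\ge r^2\gamma$. Writing $a_k:=\langle\v,\w_{t+k}-\pw_t\rangle$ and Taylor-expanding $\gf$ about $\pw_t$, the subsequent $\eta$-steps obey
\begin{align}
a_{k+1}=(1+\eta\gamma_H)\,a_k-\eta\langle\v,\,\gf(\pw_t)+\Delta_k+\xi_k\rangle,
\end{align}
where $\xi_k$ is the mean-zero gradient noise at step $k$ and $\norm{\Delta_k}\le\tfrac{\rho}{2}\norm{\w_{t+k}-\pw_t}^2$ by $\rho$-Hessian-Lipschitzness. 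Since $\xi_k$ is conditionally uncorrelated with $a_k$, as long as the iterate stays in a ball $B$ of radius $\mathfrak{r}$ around $\pw_t$ — chosen small enough that the $\gf(\pw_t)$ and $\Delta_k$ contributions are dominated — the second moment grows as $\E[a_k^2\,\mathds{1}(\w_{t+1},\dots,\w_{t+k}\in B)]\gtrsim(1+\eta\sqrt{\rho}\epsilon)^{2(k-1)}r^2\gamma$. I would then run an ``improve-or-localize'' stopping-time argument with $\tau$ the first exit time of $B$: if $\tau\le\tr$, then the flat- and positive-curvature components of $\w_{t+\tau}-\pw_t$ are still negligible (they contract, or diffuse by at most $\bigo(r\ell+\eta\ell\sqrt{\tr})$), so the displacement of norm $\approx\mathfrak{r}$ lies essentially along $\v$ and the near-quadratic model yields $f(\w_{t+\tau})-f(\pw_t)\lesssim-\tfrac{\gamma_H}{2}\mathfrak{r}^2\le-\tfrac{\sqrt{\rho}\epsilon}{2}\mathfrak{r}^2$, a decrease of order $\ft$ once $\ft$ and $\mathfrak{r}$ are linked by $\ft:=\Theta(\sqrt{\rho}\epsilon\,\mathfrak{r}^2)$; if instead $\tau>\tr$, the exponential lower bound on $\E[a_{\tr}^2\,\mathds{1}(\tau>\tr)]$ forces $\norm{\w_{t+\tr}-\pw_t}^2\ge a_{\tr}^2>\mathfrak{r}^2$, a contradiction, precisely once $\tr=\Theta\!\big((\eta\sqrt{\rho}\epsilon)^{-1}\log(\ell L/(\eta\epsilon r))\big)$. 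Applying (i) to the tail of the block then controls any subsequent increase and gives $\E[f(\w_{t+\tr})]-f(\pw_t)\le-\ft$. The delicate part — and the reason the step size must be driven down to $\eta=\bigo(\epsilon^5)$ — is to make the ``stays in $B$'' conditioning rigorous while keeping the accumulated Taylor error $\bigo(\rho\mathfrak{r}^2)$ and the accumulated per-step noise strictly below the exponentially amplified signal $(1+\eta\sqrt{\rho}\epsilon)^{2k}r^2\gamma$ for all $k\le\tr$.

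Ingredient (iii) is the SGD analogue of Lemma~\ref{lemma:small_grad_small_nc}: starting from $\pw_t$ with $\norm{\gf(\pw_t)}^2\le\gt$, the crude displacement bound $\norm{\w_{t+k}-\pw_t}\le r\ell+k\eta\ell$, $L$-smoothness and (i) bound $\E[f(\w_{t+\tr})]-f(\pw_t)$ by $\bigo(Lr^2\ell^2+\tr L\eta^2\ell^2)$, which the constants of Table~\ref{tab:sgd_parameters} are set to keep below $\delta\ft/4$. With (i)--(iii) in hand, the joint analysis parallels Theorem~\ref{theorem:pgd} at the block level: a block whose snapshot $\pw_t$ lies in $\S$ decreases $f$ in expectation by $\Omega(\ft)$, whereas a block with $\pw_t\notin\S$ increases $f$ by at most $\delta\ft/4$; the law of total expectation and a telescoping sum over the $T/\tr$ blocks then bound the fraction of snapshots lying in $\S$ by $\delta$ as soon as $T=\Theta\!\big(\tr(f(\w_0)-f^*)/(\delta\ft)\big)$, so a uniformly random snapshot is an $(\epsilon,\sqrt{\rho}\epsilon)$-second-order stationary point with probability at least $1-\delta$. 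Substituting the parameter values listed in Table~\ref{tab:sgd_parameters} — $r=\bigo(\epsilon^2)$, $\eta=\bigo(\epsilon^5)$, $\tr=\bigo(\epsilon^{-6}\log^2(1/\epsilon))$, and hence $\ft=\bigo(\epsilon^4)$ — into $T=2\tr(f(\w_0)-f^*)/(\delta\ft)$ then yields the claimed $\bigo(\epsilon^{-10}\log^2(1/\epsilon))$ iteration count, with the stated dependence on $L$, $\ell$, $\gamma$ and $\delta$.
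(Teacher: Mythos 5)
Your overall architecture is the right one and mirrors the paper's: periodic $r$-steps inject CNC variance, the per-block analysis splits into large-gradient / sharp-curvature / moderate regimes, and the uniformly random snapshot with $T=\Theta(\tr(f(\w_0)-f^*)/(\delta\ft))$ gives the $1-\delta$ guarantee. Ingredients (i) and (iii) also match the paper (which uses the same one-step SGD descent lemma and bounds the per-block increase by $L\ell^2 r^2\le\delta\ft/2$). The iteration count bookkeeping at the end is correct.

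Ingredient (ii), however, takes a genuinely different route from the paper's and as sketched has a real gap. The paper argues by contradiction \emph{unconditionally}: assuming $\E[f(\w_t)]-\tilde f>-\ft$ for all $t\le\tr$ yields an upper bound on $\E\|\w_t-\pw\|^2$ (Lemma~\ref{lemma:expected_distance_bound_sgd}), then decomposes $\w_{t+1}-\pw=\u_t+\eta(\bdelta_t+\d_t+\bzeta_t)$, uses $\E[\u_t^\top\d_t]\ge 0$ and the zero-covariance lemma $\E[\u_t^\top\bzeta_t]=0$ to lower-bound $\E\|\w_{t+1}-\pw\|^2$ by $\Omega(\gamma r^2\kappa^{2t})$, and lets the exponential growth outrun the upper bound once $\tr\ge c(\eta\lambda)^{-1}\log(L\ell/(\gamma r\eta\lambda))$. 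No stopping time, no indicator.

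You instead track $a_k=\langle\v,\w_{t+k}-\pw_t\rangle$ and claim $\E[a_k^2\,\mathds 1(\tau>k)]\gtrsim(1+\eta\sqrt{\rho}\epsilon)^{2(k-1)}r^2\gamma$. This claimed inequality is the gap: truncating by $\mathds 1(\tau>k)$ systematically discards exactly the trajectories on which $a_k$ has grown large, so the truncated second moment need not grow exponentially even when the untruncated one does. Moreover $\mathds 1(\tau>k+1)$ is not $\F_{t+k}$-measurable, so you cannot push it through the conditional expectation that kills the $\xi_k$ cross term; you would only get the recursion with $\mathds 1(\tau>k)$, which does not telescope to a lower bound on the truncated quantity you need. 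Stopping-time ``improve-or-localize'' arguments do exist in the SGD-escapes-saddles literature, but they require either a coupling with a second process or high-probability concentration (Azuma-type), not a bare second-moment recursion. To repair your sketch you should drop the indicator and argue as the paper does: get the unconditional distance upper bound directly from the assumed non-decrease, and compare it against $\E\|\u_t\|^2-2\eta\E[\|\u_t\|\|\bdelta_t\|]$, invoking $\E[\u_t^\top\bzeta_t]=0$ and $\E[\u_t^\top\d_t]\ge 0$ to discard the cross terms. That removes the need for the ball $B$, the stopping time $\tau$, and the ``stays in $B$'' conditioning whose rigor you yourself flagged as the delicate part.
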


\paragraph{Remarks}
As reported in Table~\ref{table:dim_dep}, perturbed SGD - with isotropic noise - converges to an $(\epsilon,\epsilon^{1/4})$-second-order stationary point in $\bigo(d^p \epsilon^{-4})$ steps~\cite{ge2015escaping}. Here, we prove that under the CNC assumption, vanilla SGD - i.e. without perturbations - converges to an  $(\epsilon,\sqrt{\rho}\epsilon^{2/5})$second-order stationary point using $\tbigo(\epsilon^{-4})$ stochastic gradient steps. 
Although our result is worse in terms of the first-order optimality, it yields an improvement by an $\epsilon^{0.15}$-factor in terms of second-order optimality (note that the focus of this paper is the second-order optimality). However, this second-order optimality rate is still worse by an $\epsilon^{-0.1}$-factor compared to the best known convergence rate for perturbed SGD established by~\cite{Zhang2017AHT}, which requires $\bigo(d^p \epsilon^{-4})$ iterations for an $(\epsilon,\epsilon^{1/2})$-second-order stationary point.
One can even improve the convergence guarantee of SGD by using the NEON framework~\cite{allen2017neon2,xu2017first} but a perturbation with isotropic noise is still required. The theoretical guarantees we provide in Theorem~\ref{theorem:sgd_convergence}, however, are based on a less restrictive assumption. As we prove in the following Section, this assumption actually holds for stochastic gradients when learning half-spaces. Subsequently, in Section \ref{sec:EXP}, we present empirical observations that suggest its validity even for training wide and deep neural networks.


\section{Learning Half-spaces with Correlated Negative Curvature}\label{sec:LHS} 

The analysis presented in the previous sections relies on the CNC assumption introduced in Eq.~(\ref{eq:CNC}). As mentioned before, this assumption is weaker than the isotropic noise condition required in previous work. In this Section we confirm the validity of this condition for the problem of learning half-spaces which is a core problem in machine learning, commonly encountered when training Perceptrons, Support Vector Machines or Neural Networks~\cite{zhang2015learninghs}. Learning a half-space reduces to a minimization problem of the following form 
\begin{align}
\min_{\w\in\mathbb{R}^d} \left[f(\w) := \E_{\z \sim \P} \left[ \varphi(\w^\top \z) \right]\right],
\label{eq:half_space}
\end{align} 
where $\varphi$ is an arbitrary loss function and the data distribution $\P$ might have a finite or infinite support. There are different choices for the loss function $\varphi$, e.g. zero-one loss, sigmoid loss or piece-wise linear loss \cite{zhang2015learninghs}. Here, we assume that $\varphi(\cdot)$ is differentiable. Generally, the objective $f(\w)$ is non-convex and might exhibit many local minima and saddle points.
Note that the stochastic gradient is unbiased and defined as 
\begin{align} 
\gf_{\z}(\w) = \varphi'(\w^\top \z) \z, \quad \gf(\w) = \E_{\z} \left[ \gf_{\z}(\w) \right],
\label{eq:half_space_gradient}
\end{align}
where the samples $\z$ are drawn from the distribution $\P$.

\begin{figure*}[t!]
	\begin{center}
          \begin{tabular}{@{}c@{\hspace{2mm}}c@{\hspace{2mm}}c@{\hspace{2mm}}c@{}}
          \vspace{-5.5pt}
            \includegraphics[width=0.33\linewidth]{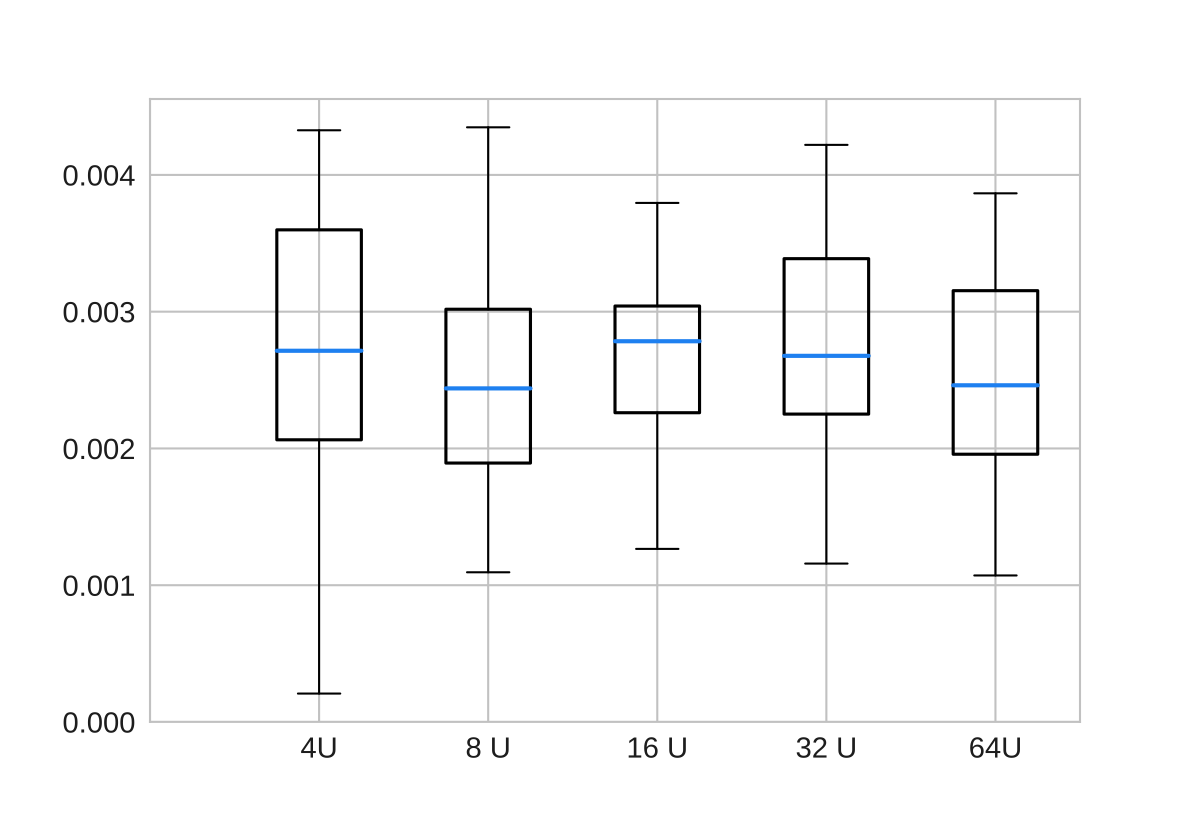} &
            \includegraphics[width=0.33\linewidth]{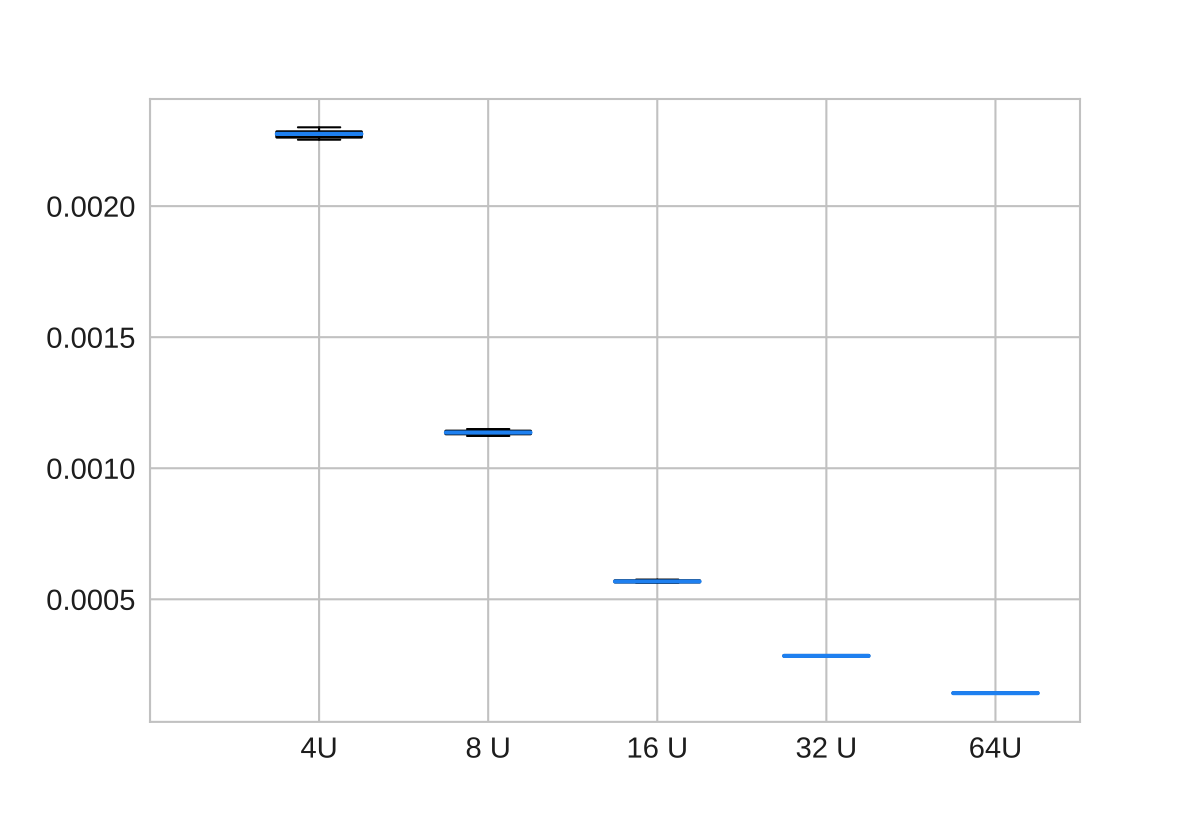} & \includegraphics[width=0.33\linewidth]{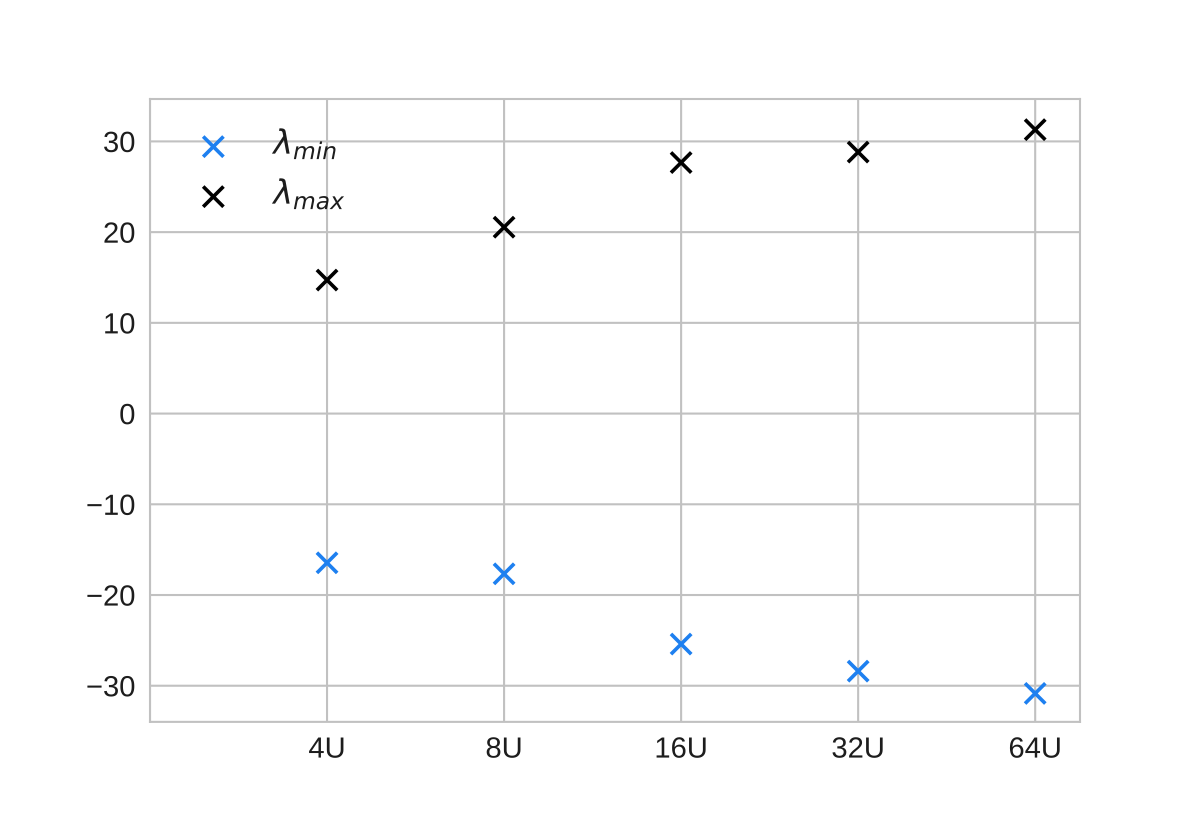} \\  \vspace{-1pt}
            
            \includegraphics[width=0.33\linewidth]{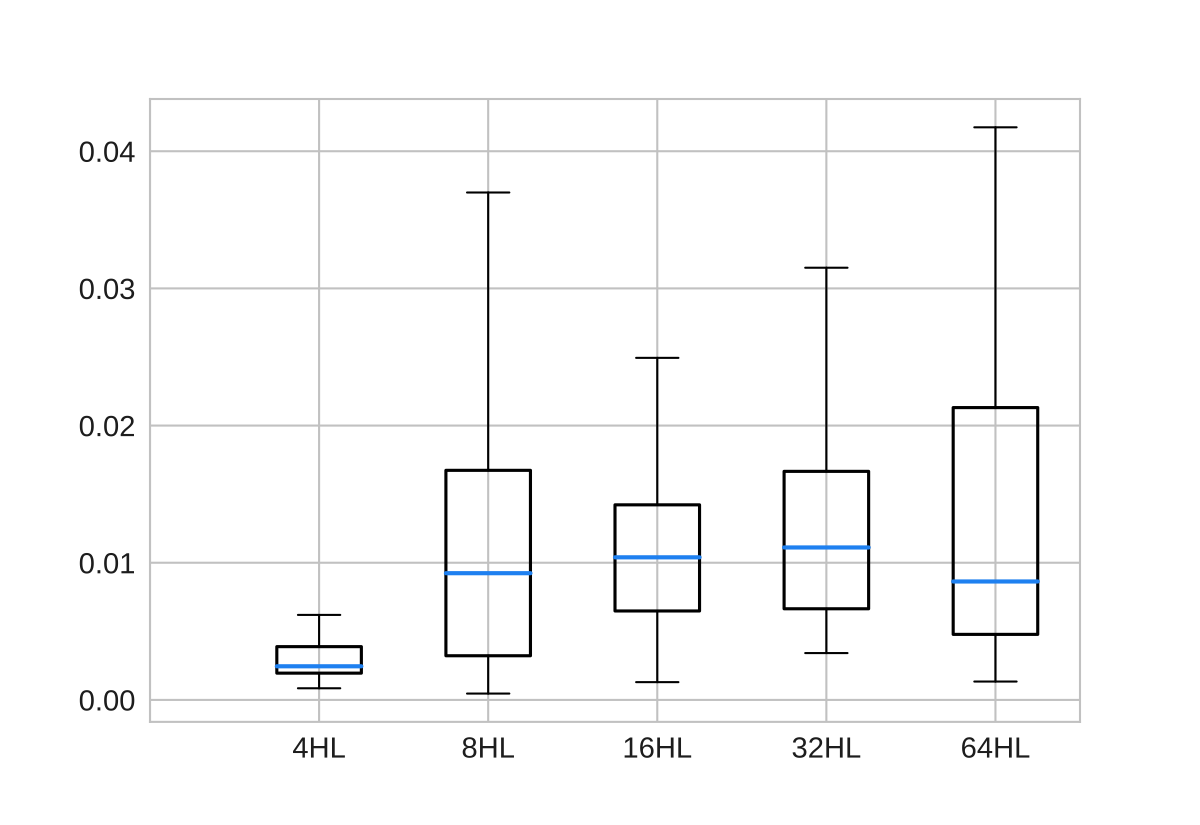} &
            \includegraphics[width=0.33\linewidth]{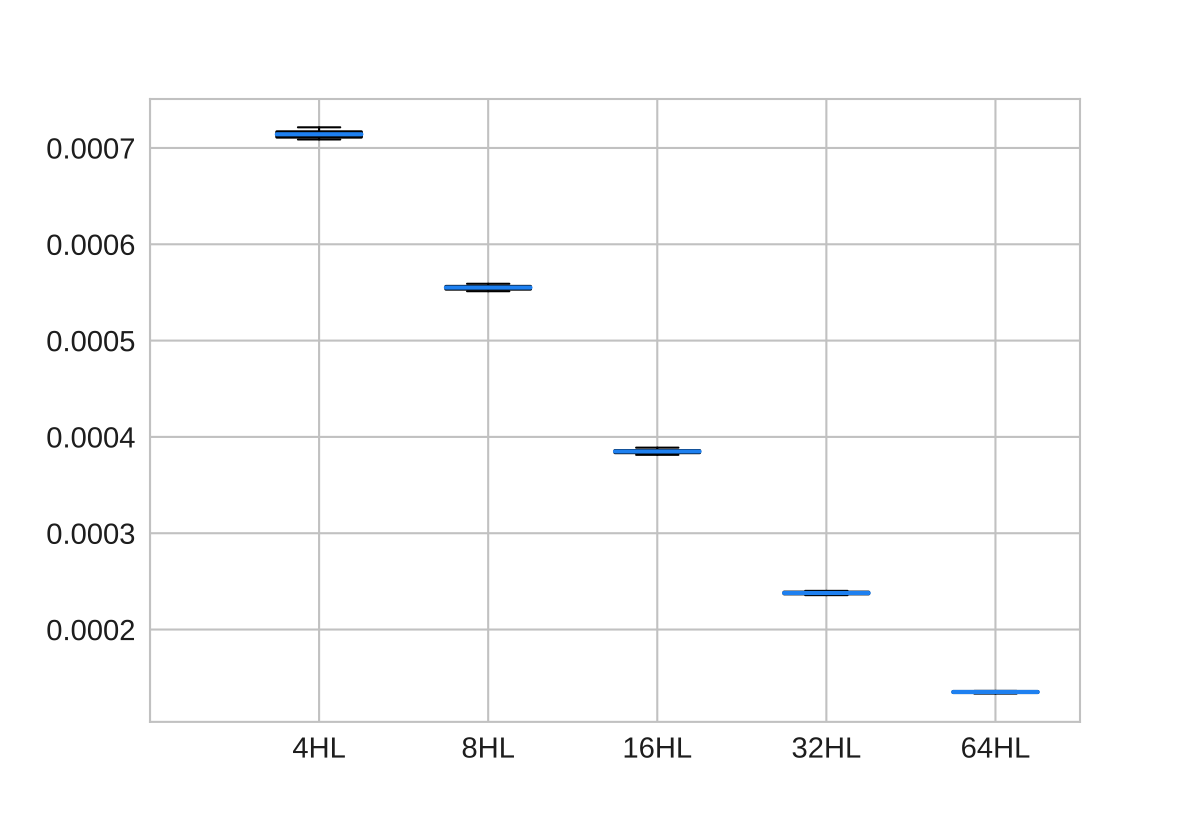} & \includegraphics[width=0.33\linewidth]{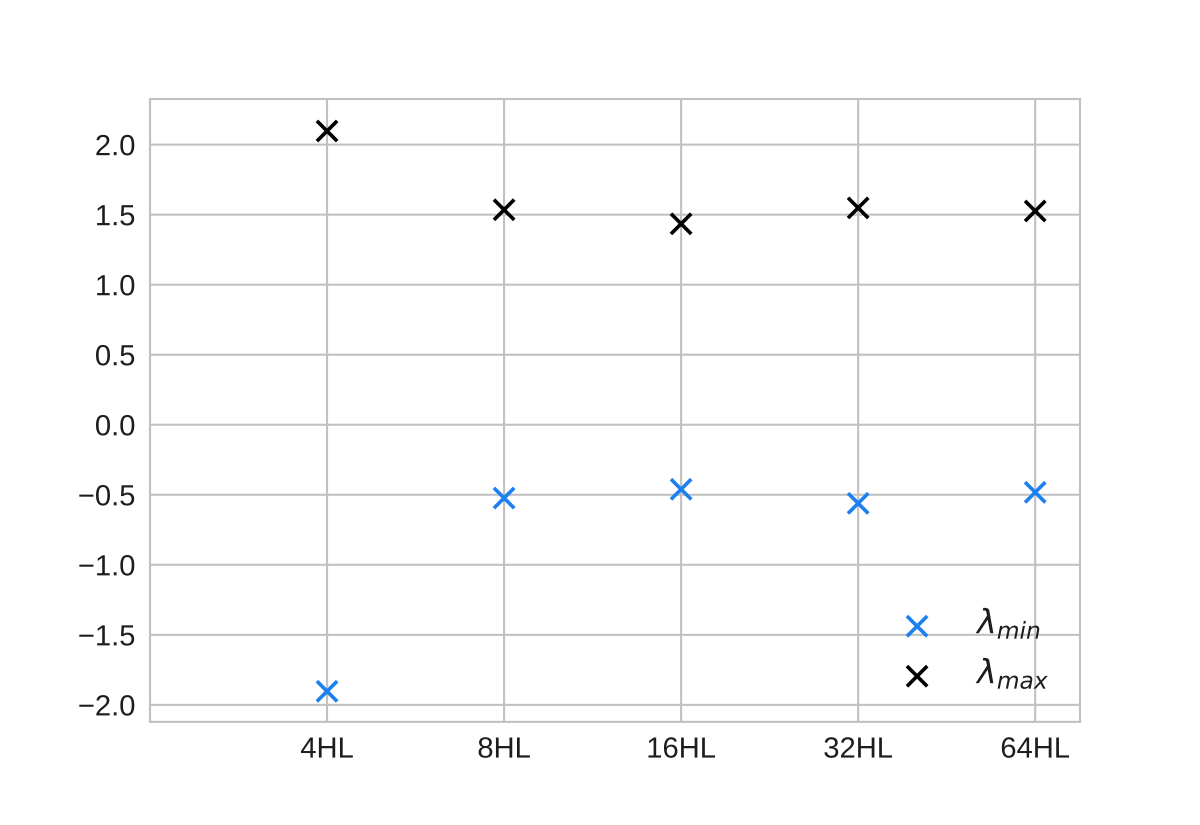} \\  \vspace{-1pt}
         	1. \footnotesize{{ Stochastic gradients}} &
            2. \footnotesize{{ Isotropic noise }} &
            3. \footnotesize{{Extreme eigenvalues}}
	  \end{tabular}
          \caption{Average variance of stochastic gradients (1) and isotropic noise (2) along eigenvectors corresponding to $\lambda_{min}$ and extreme eigenvalues (3) of 30 random weight settings in a 1-Layer Neural Network with increasing number of units U (top) and multi-layer Neural Network with increasing number of hidden layers HL (bottom).}
          \label{fig:cov_left}
	\end{center}
\end{figure*}

\paragraph{Noise isotropy vs. CNC assumption.}
First, one can easily find a scenario where the noise isotropy condition is violated for stochastic gradients.
Take for example the case where the data distribution from which $\z$ is sampled lives in a low-dimensional space $\mathcal{L} \subset \R^d$. In this case, one can prove that there exists a vector $\u \in \R^d$ orthogonal to all $\z \in \L$. Then clearly $\E\left[ (\u^\top \gf_{\z}(\w))^2 \right] =0 $ and thus $\gf_\z(\w)$ does not have components along all directions. 

However - under mild assumptions - we show that the stochastic gradients do have a significant component along directions of negative curvature. Lemma \ref{lemma:CNC_lowerbound} makes this argument precise by establishing a lower bound on the second moment of the stochastic gradients projected onto eigenvectors corresponding to negative eigenvalues of the Hessian matrix $\hf(\w)$. To establish this lower bound we require the following structural property of the loss function $\varphi$.

\begin{assumption}
Suppose that the magnitude of the second-order derivative of  $\varphi$ is bounded by a constant factor of its first-order derivative, i.e.
\begin{align} \label{eq:seim-self-concordant}
| \varphi''(\alpha)| \leq c | \varphi'(\alpha)| 
\end{align}
holds for all $\alpha$ in the domain of $\varphi$ and $c>0$.
\label{eass:seim-self-concordant}
\end{assumption}

The reader might notice that this condition resembles the self-concordant assumption often used in the optimization literature~\cite{nesterov2013introductory}, for which the second derivative is bounded by the third derivative. One can easily check that this condition is fulfilled by commonly used activation functions in neural networks, such as the sigmoid and softplus. We now leverage this property to prove that the stochastic gradient $\gf_\z(\w)$ satisfies Assumption~\ref{ass:CNC} (CNC).

\begin{lemma}\label{lemma:CNC_lowerbound}
Consider the problem of learning half-spaces as stated in Eq. (\ref{eq:half_space}), where $\varphi$ satisfies Assumption~\ref{eass:seim-self-concordant}. Furthermore, assume that the support of $\P$ is a subset of the unit sphere.\footnote{This assumption is equivalent to assuming the random variable $\z$ lies inside the unit sphere, which is common in learning half-space~\cite{zhang2015learninghs}.} Let $\v$ be a unit length eigenvector of $\hf(\w)$ with corresponding eigenvalue $\lambda<0$. Then 
\begin{align} 
\E_\z \left[ (\gf_{\z}(\w)^\top\v)^2 \right] \geq  (\lambda/c)^2. 
\end{align} 
\end{lemma}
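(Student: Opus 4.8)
The plan is to relate \emph{both} the target second moment $\E_\z[(\gf_\z(\w)^\top\v)^2]$ and the eigenvalue $\lambda$ to integrals against the same (sub-probability) measure $(\z^\top\v)^2\,d\P(\z)$, and then to connect a first moment to a second moment via Cauchy--Schwarz. Concretely, using the closed form of the stochastic gradient in Eq.~\eqref{eq:half_space_gradient}, the projection onto $\v$ is $\gf_\z(\w)^\top\v=\varphi'(\w^\top\z)(\z^\top\v)$, so
\begin{align}
\E_\z\!\left[(\gf_\z(\w)^\top\v)^2\right]=\E_\z\!\left[\varphi'(\w^\top\z)^2(\z^\top\v)^2\right]. \nonumber
\end{align}
On the other hand, differentiating $f(\w)=\E_\z[\varphi(\w^\top\z)]$ twice (the interchange of expectation and derivative is routine since $\P$ is supported on the compact unit sphere and $\varphi\in C^2$) yields $\hf(\w)=\E_\z[\varphi''(\w^\top\z)\,\z\z^\top]$, and because $\v$ is a unit eigenvector with eigenvalue $\lambda$,
\begin{align}
\lambda=\v^\top\hf(\w)\v=\E_\z\!\left[\varphi''(\w^\top\z)(\z^\top\v)^2\right]. \nonumber
\end{align}

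Next I would take absolute values, move the absolute value inside the expectation, and invoke Assumption~\ref{eass:seim-self-concordant} to replace $|\varphi''|$ by $c\,|\varphi'|$:
\begin{align}
|\lambda|\le\E_\z\!\left[|\varphi''(\w^\top\z)|(\z^\top\v)^2\right]\le c\,\E_\z\!\left[|\varphi'(\w^\top\z)|(\z^\top\v)^2\right]. \nonumber
\end{align}
The key step is then to split the last expectation by writing $(\z^\top\v)^2=\big(|\varphi'(\w^\top\z)|\,|\z^\top\v|\big)\cdot|\z^\top\v|\big/|\varphi'(\w^\top\z)|$\,---\,or, more cleanly, by applying Cauchy--Schwarz with the two factors $|\varphi'(\w^\top\z)|\,|\z^\top\v|$ and $|\z^\top\v|$:
\begin{align}
\E_\z\!\left[|\varphi'(\w^\top\z)|(\z^\top\v)^2\right]\le\sqrt{\E_\z\!\left[\varphi'(\w^\top\z)^2(\z^\top\v)^2\right]}\cdot\sqrt{\E_\z\!\left[(\z^\top\v)^2\right]}. \nonumber
\end{align}
The first factor is exactly $\sqrt{\E_\z[(\gf_\z(\w)^\top\v)^2]}$. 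For the second, I would use that $\P$ is supported on the unit sphere, so $\z\z^\top\preccurlyeq\Im$ almost surely and hence $\E_\z[(\z^\top\v)^2]=\v^\top\E_\z[\z\z^\top]\v\le\|\v\|^2=1$. Chaining the three displays gives $|\lambda|\le c\,\sqrt{\E_\z[(\gf_\z(\w)^\top\v)^2]}$, and squaring and dividing by $c^2$ yields the claimed bound $\E_\z[(\gf_\z(\w)^\top\v)^2]\ge(\lambda/c)^2$.

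I do not expect a genuine obstacle here; once one notices that $\lambda$ is the $\varphi''$-mean and the target is the $\varphi'$-second-moment against the common measure $(\z^\top\v)^2\,d\P$ of total mass at most one, the estimate is short. The only mild subtlety is the Cauchy--Schwarz splitting: one must avoid the naive bound $|\varphi'|\le\sqrt{\E[\varphi'^2]}$ (which would require $\varphi'$ bounded) and instead keep the factor $(\z^\top\v)^2$ balanced across the two terms, so that the unit-sphere inequality $\E_\z[(\z^\top\v)^2]\le1$ can absorb the leftover. It is also worth remarking that $\lambda<0$ is used only to make the statement non-vacuous\,---\,the inequality in fact holds for an eigenvector associated with any eigenvalue.
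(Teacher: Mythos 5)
Your proof is correct and follows essentially the same route as the paper: compute $\lambda=\v^\top\hf(\w)\v=\E_\z[\varphi''(\w^\top\z)(\z^\top\v)^2]$, invoke Assumption~\ref{eass:seim-self-concordant} to pass from $|\varphi''|$ to $c|\varphi'|$, and convert a first-moment bound into the desired second-moment bound. The only cosmetic difference is in the closing estimate: the paper first uses $\|\z\|\le 1$ pointwise to write $(\z^\top\v)^2\le|\z^\top\v|$ and then applies Jensen's inequality to $\E|\varphi'(\w^\top\z)\z^\top\v|$, whereas you apply Cauchy--Schwarz once with the split $(\z^\top\v)^2=|\z^\top\v|\cdot|\z^\top\v|$ and absorb the leftover with $\E[(\z^\top\v)^2]\le 1$; these are equivalent, and your observation that $\lambda<0$ is not actually needed is also correct.
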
 

\paragraph{Discussion} Since the result of Lemma \ref{lemma:CNC_lowerbound} holds for any eigenvector $\v$ associated with a negative eigenvalue $\lambda<0$, this naturally includes the eigenvector(s) corresponding to $\lambda_{min}$. As a result, Assumption \ref{ass:CNC} (CNC) holds for stochastic gradients on learning half-spaces. Combining this result with the derived convergence guarantees in Theorem~\ref{theorem:pgd} implies that a mix of SGD and GD steps (Algorithm \ref{alg:CNC-PGD}) obtains a second-order stationary point in polynomial time. Furthermore, according to Theorem~\ref{theorem:sgd_convergence}, vanilla SGD obtains a second-order stationary point in polynomial time without \textit{any} explicit perturbation. Notably, both established convergence guarantees are dimension free. 

Furthermore, Lemma \ref{lemma:CNC_lowerbound} reveals an interesting relationship between stochastic gradients and eigenvectors at a certain iterate $\w$. Namely, the variance of stochastic gradients along these vectors scales proportional to the magnitude of the negative eigenvalues within the spectrum of the Hessian matrix. This is in clear contrast to the case of isotropic noise variance which is \textit{uniformly} distributed along all eigenvectors of the Hessian matrix. The difference can be important form a generalization point of view. Consider the simplified setting where $\varphi$ is square loss. Then the eigenvectors with large eigenvalues correspond to the principal directions of the data. In this regard, having a lower variance along the non-principal directions avoids over-fitting.

In the following section we confirm the above results and furthermore show experiments on Neural Networks that suggest the validity of these results beyond the setting of learning half-spaces.


\section{Experiments}\label{sec:EXP}
In this Section we first show that vanilla SGD (Algorithm \ref{alg:cnc_sgd}) as well as GD with a stochastic gradient step as perturbation (Algorithm \ref{alg:CNC-PGD}) indeed escape saddle points. Towards this end, we initialize SGD,  GD, perturbed GD with isotropic noise (ISO-PGD) \cite{jin2017escape} and CNC-PGD close to a saddle point on a low dimensional learning-halfspaces problem with Gaussian input data and sigmoid loss. Figure \ref{fig:escaping_saddles} shows suboptimality over epochs for an average of 10 runs. The results are in line with our analysis since all stochastic methods quickly find a negative curvature direction to escape the saddle point. See Appendix E for more details.\footnote{Rather than an encompassing benchmark of the different methods, this result is to be seen as a proof of concept.}

\begin{figure}[h!]
\centering
\includegraphics[width=175pt, trim={20pt 15pt 20pt 20pt},clip]{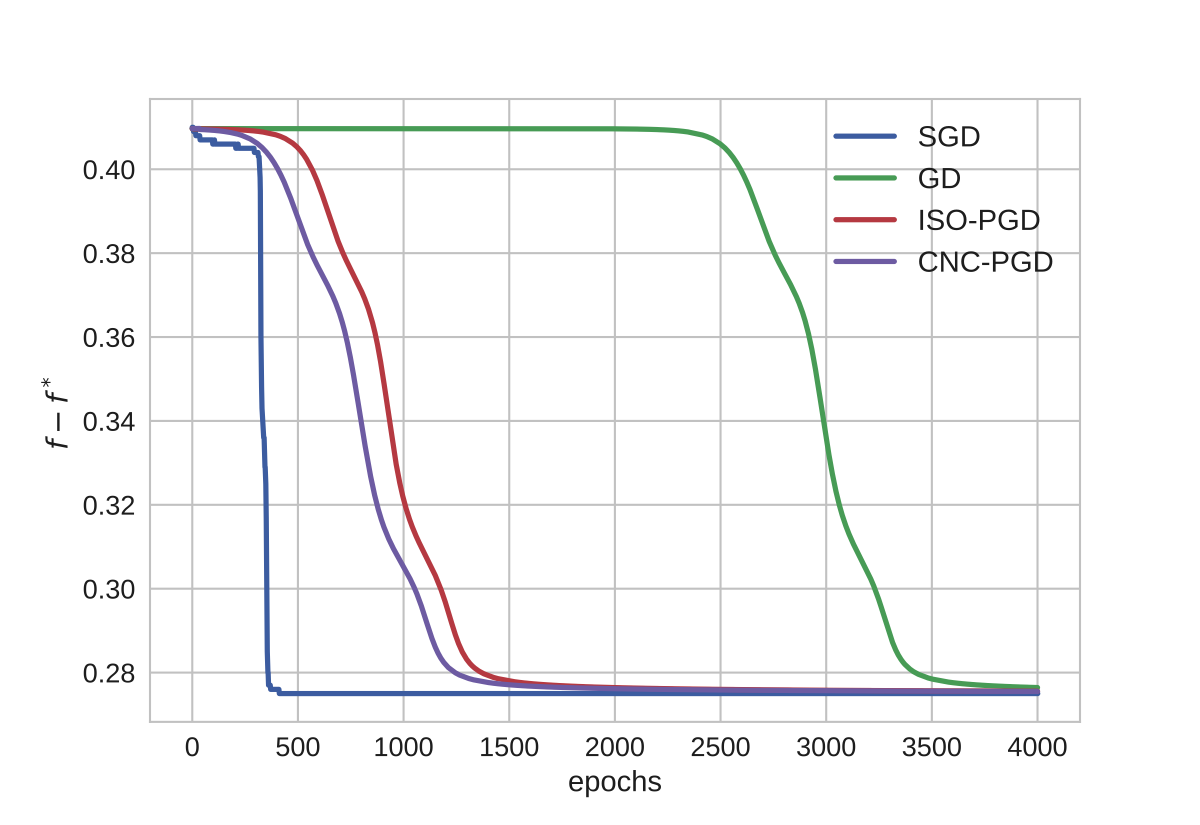}
\caption{Learning halfspaces ($n=40,d=4$): The stochastic methods need less iterations to escape the saddle.}
\label{fig:escaping_saddles}
\end{figure}

Secondly - and more importantly -  we study the properties of the variance of stochastic gradients depending on the width and depth of neural networks. All of these experiments are conducted using feed-forward networks on the well-known MNIST classification task ($n=70'000$). Specifically, we draw $m=30$ random parameters $\w_i$ in each of these networks and test Assumption \ref{ass:CNC} by estimating the second moment of the stochastic gradients projected onto the eigenvectors $\v_k$ of $\nabla^2 f(\w_i)$ as follows
\begin{equation}
\mu_k= \frac{1}{m} \sum_{i=1}^{m} \left( \frac{1}{n}\sum_{j=1}^n \left(\gf_j(\w_i)^\top \v_k\right)^2\right).
\end{equation}

We do the same for $n$ isotropic noise vectors drawn from the unit ball $\mathcal{B}^d$ around each $\w_i$.\footnote{For a fair comparison all involved vectors were normalized.} Figure \ref{fig:cov_left} shows this estimate for eigenvectors corresponding to the minimum eigenvalues for a 1 hidden layer network with increasing number of units (top) and for a 10 hidden unit network with increasing number of layers (bottom). Similar results on the entire negative eigenspectrum can be found in Appendix E. Figure \ref{fig:cov_over_ev} shows how $\mu_k$ varies with the magnitude of the corresponding negative eigenvalues $\lambda_k$. Again we evaluate 30 random parameter settings in neural networks with increasing depth. Two conclusions can be drawn from the results:
(i) Although the variance of isotropic noise along eigenvectors corresponding to $\lambda_{\min}$ decreases as $\bigo(1/d)$, the stochastic gradients maintain a significant component along the directions of most negative curvature independent of \textit{width} and \textit{depth} of the neural network (see Figure \ref{fig:cov_left}),
(ii) the stochastic gradients yield an increasing variance along eigenvectors corresponding to larger eigenvalues (see Figure \ref{fig:cov_over_ev}).
These findings suggest important implications. (i) justify the use and explain the success of training wide and deep neural networks with pure SGD despite the presence of saddle points. (ii) suggests that the bound established in Lemma~\ref{lemma:CNC_lowerbound} may well be extended to more general settings such as training neural networks and illustrates the implicit regularization of optimization methods that rely on stochastic gradients since directions of large curvature correspond to principal (more robust) components of the data for many machine learning models.
\begin{figure}
\centering
\includegraphics[width=175pt]{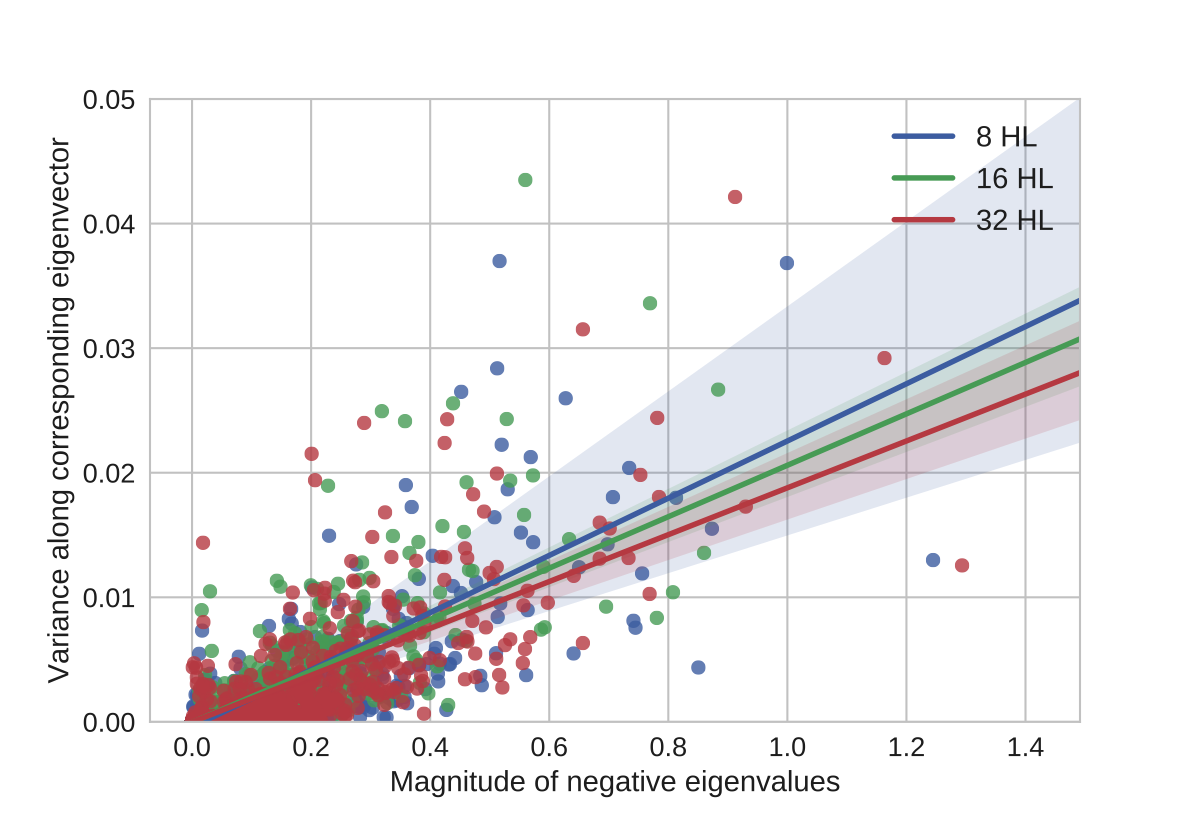}
\vspace{-5mm}
\caption{Variance of stochastic gradients along eigenvectors corresponding to eigenvalues of different magnitudes computed on neural networks with 8, 16 and 32 hidden layers. Scatterplot and fitted linear model with 95\% confidence interval.}
\label{fig:cov_over_ev}
\end{figure}



\section{Conclusion}

In this work we have analyzed the convergence of PGD and SGD for optimizing non-convex functions under a new assumption -named CNC - that requires the stochastic noise to exhibit a certain amount of variance along the directions of most negative curvature. This is a less restrictive assumption than the noise isotropy condition required by previous work which causes a dependency to the problem dimensionality in the convergence rate. We have shown theoretically that stochastic gradients satisfy the CNC assumption and reveal a variance proportional to the eigenvalue's magnitude for the problem of learning half-spaces. Furthermore, we provided empirical evidence that suggests the validity of this assumption in the context of neural networks and thus contributes to a better understanding of training these models with stochastic gradients. Proving this observation theoretically and investigating its implications on the optimization and generalization properties of stochastic gradients methods is an interesting direction of future research.
\newpage
\section*{Acknowledgements} We would like to thank Antonio Orvieto, Yi Xu, Tianbao Yang, Kaiqing Zhang  and Alec Koppel for pointing out mistakes in the draft and their help in improving the result. We also thank Kfir Levy, Gary Becigneul, Yannic Kilcher and Kevin Roth for their helpful discussions. 
\bibliography{noise_sgd}
\bibliographystyle{icml2018}

\onecolumn

\appendix
\part*{Appendix}
\section{Preliminaries}
\paragraph{Assumptions}

Recall that we assumed the function $f$ is L-smooth (or L-Lipschitz gradient) and $\rho$-Lipschitz Hessian. We define these two properties below.

\begin{definition}[Smooth function]
A differentiable function $f$ is L-smooth (or L-Lipschitz gradient) if
\begin{equation}
    \| \gf(\w_1) - \gf(\w_2) \| \leq L \| \w_1 - \w_2 \|, \quad\quad \forall\: \w_1, \w_2 \in \R^d
\end{equation}
\end{definition}

\begin{definition}[Hessian Lipschitz]
A twice-differentiable function $f$ is $\rho$-Lipschitz Hessian if
\begin{equation} \label{eq:Hessian_Lipschitzness}
    \| \hf(\w_1) - \hf(\w_2) \| \leq \rho \| \w_1 - \w_2 \|, \quad\quad \forall\: \w_1, \w_2 \in \R^d
\end{equation}
\end{definition}

\begin{definition}[Bounded Gradient]
A differentiable function $f$ (of form of \eqref{eq:f_x}) is $\ell$-bounded gradient \footnote{This assumption guarantees $\ell$-Lipschitzness of $f$.} if
\begin{equation}
    \| \gf_\z(\w) \| \leq \ell, \quad\quad \forall\: \w \in \R^d
\end{equation}
\end{definition}

\begin{definition}[Stochastic Gradient Lipschitz] A differentiable function (of form of \eqref{eq:f_x}) has $\beta$-Lipschitz stochastic gradients if 
\begin{align} \label{eq:sg_lipschitz}
\| \nabla_\z f(\w_1) - \nabla_\z f(\w_2) \| \leq \beta \| \w_1 - \w_2 \|, \quad\quad \forall \w_1,\w_2, \z \in \R^d
\end{align} 

\end{definition}

\paragraph{Convergence of SGD on a smooth function}

\begin{framed}
\begin{lemma} \label{lemma:decrease_f_sgd} 
Let $\w_{t+1}$ be obtained from one stochastic gradient step at $\w_t$ on the $L$-smooth objective $f$, namely  
\[ 
\w_{t+1} = \w_t - \eta \gf_\z(\w_t)
\]
where $\E_\z \left[ \gf_\z(\w_t) \right] = \gf(\w_t)$ and $f_\z$ is $\ell$-bounded gradient. Then the function value decreases in expectation as
\begin{align} 
\E_\z \left[ f(\w_{t+1}) \right] - f(\w_t)  \leq - \eta \E \| \gf(\w_t) \|^2 + L \eta^2 \ell^2/2.
\end{align}
\end{lemma}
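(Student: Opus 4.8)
\textbf{Proof proposal for Lemma~\ref{lemma:decrease_f_sgd}.}
The plan is to combine the standard descent lemma for $L$-smooth functions with the unbiasedness and uniform boundedness of the stochastic gradient. First I would recall that $L$-smoothness of $f$ (i.e.\ $L$-Lipschitz gradient) implies the quadratic upper bound
\begin{align}
f(\y) \leq f(\x) + \langle \gf(\x), \y - \x \rangle + \frac{L}{2}\|\y - \x\|^2, \qquad \forall\, \x,\y \in \R^d,
\end{align}
which follows by writing $f(\y) - f(\x) = \int_0^1 \langle \gf(\x + s(\y-\x)), \y-\x\rangle\, ds$ and bounding $\langle \gf(\x+s(\y-\x)) - \gf(\x), \y-\x\rangle \leq L s \|\y-\x\|^2$.

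Next I would instantiate this inequality at $\y = \w_{t+1} = \w_t - \eta \gf_\z(\w_t)$ and $\x = \w_t$, which gives
\begin{align}
f(\w_{t+1}) \leq f(\w_t) - \eta \langle \gf(\w_t), \gf_\z(\w_t) \rangle + \frac{L\eta^2}{2}\|\gf_\z(\w_t)\|^2.
\end{align}
Taking the conditional expectation over $\z$ (with $\w_t$ held fixed) and using linearity of expectation, the cross term becomes $-\eta \langle \gf(\w_t), \E_\z[\gf_\z(\w_t)]\rangle = -\eta \|\gf(\w_t)\|^2$ by the unbiasedness hypothesis $\E_\z[\gf_\z(\w_t)] = \gf(\w_t)$. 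For the last term, the $\ell$-bounded-gradient assumption $\|\gf_\z(\w_t)\| \leq \ell$ yields $\E_\z \|\gf_\z(\w_t)\|^2 \leq \ell^2$. Putting these together gives
\begin{align}
\E_\z[f(\w_{t+1})] - f(\w_t) \leq -\eta \|\gf(\w_t)\|^2 + \frac{L\eta^2 \ell^2}{2},
\end{align}
which is the claimed bound (the outer $\E$ on $\|\gf(\w_t)\|^2$ being vacuous when $\w_t$ is deterministic, or absorbed by the tower property when $\w_t$ is itself random as in the algorithm's analysis).

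There is essentially no hard step here: the argument is a textbook computation, and the only points requiring care are (i) correctly invoking the descent lemma as a consequence of Assumption~\ref{ass:smoothness} rather than convexity, and (ii) keeping track of which expectation is being taken so that unbiasedness can be applied to the linear term while the quadratic term is controlled purely by the deterministic bound $\ell$. No concentration or probabilistic machinery is needed.
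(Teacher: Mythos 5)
Your argument is the same as the paper's: invoke the quadratic upper bound from $L$-smoothness at $\y=\w_{t+1}$, take $\E_\z$ to kill the cross term via unbiasedness, and bound $\E_\z\|\gf_\z(\w_t)\|^2\leq\ell^2$ using the bounded-gradient assumption. The only difference is that you also spell out the derivation of the descent lemma itself, which the paper takes for granted.
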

\end{framed}

\begin{proof}
 The proof is based on a straightforward application of smoothness:
\begin{align*} 
\E_{\z} \left[ f(\w_{t+1}) \right] - f(\w_t)  & \leq -\eta (\gf(\w_t) )^\top\E \left[ \gf_\z(\w_t) \right]  + L/2 \eta^2 \E \|  \gf_\z(\w_t)\|^2 \\ 
& \leq - \eta \| \gf(\w_t) \|^2 + L \eta^2\| \gf_\z(\w_t)\|^2 /2 \\ 
& \leq - \eta \| \gf(\w_t) \|^2 + L \eta^2 \ell^2/2.
\end{align*}

\end{proof}

\paragraph{Bounded series}
\begin{lemma} \label{lemma:auxiliary}
For all $1>\beta>0$, the following series are bounded as
\begin{align} 
\sum_{i=1}^t (1+\beta)^{t-i} & \leq 2\beta^{-1} (1+\beta)^t   \\
\sum_{i=1}^t (1+\beta)^{t-i} i & \leq 2\beta^{-2}(1 + \beta)^{t}\\
\sum_{i=1}^t (1+\beta)^{t-i} i^2 & \leq 6\beta^{-3}(1 + \beta)^{t}
\end{align}
\end{lemma}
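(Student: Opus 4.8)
The plan is to prove each of the three bounds in Lemma~\ref{lemma:auxiliary} by reindexing the sums and comparing them against familiar power-series identities. First I would substitute $j = t-i$, so that as $i$ runs from $1$ to $t$, the index $j$ runs from $t-1$ down to $0$. This turns $\sum_{i=1}^t (1+\beta)^{t-i}$ into $\sum_{j=0}^{t-1} (1+\beta)^{j}$, and the finite geometric sum formula gives $\sum_{j=0}^{t-1}(1+\beta)^j = \frac{(1+\beta)^t - 1}{\beta} \leq \frac{(1+\beta)^t}{\beta} \leq \frac{2(1+\beta)^t}{\beta}$, which already establishes the first inequality (the factor $2$ is slack but convenient for the uniform presentation).

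For the second bound, after the same substitution $\sum_{i=1}^t (1+\beta)^{t-i} i = \sum_{j=0}^{t-1} (1+\beta)^j (t-j)$. Rather than expand this directly, I would instead bound the original sum by extending it to an infinite series: since every term is nonnegative and $i \le \sum_{i=1}^t$ contributes, write $\sum_{i=1}^t (1+\beta)^{t-i} i \le (1+\beta)^t \sum_{i=1}^\infty i (1+\beta)^{-i}$. Setting $x = 1/(1+\beta) \in (0,1)$, the tail sum is $\sum_{i=1}^\infty i x^i = \frac{x}{(1-x)^2}$. Now $1 - x = 1 - \frac{1}{1+\beta} = \frac{\beta}{1+\beta}$, so $\frac{x}{(1-x)^2} = \frac{1/(1+\beta)}{\beta^2/(1+\beta)^2} = \frac{1+\beta}{\beta^2} \le \frac{2}{\beta^2}$, using $1+\beta < 2$. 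This yields $\sum_{i=1}^t (1+\beta)^{t-i} i \le 2\beta^{-2}(1+\beta)^t$, as claimed.

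The third bound follows the same template with the identity $\sum_{i=1}^\infty i^2 x^i = \frac{x(1+x)}{(1-x)^3}$. With $x = 1/(1+\beta)$ we get $1 + x < 2$ and $x < 1$, so $\frac{x(1+x)}{(1-x)^3} \le \frac{2}{(1-x)^3} = \frac{2(1+\beta)^3}{\beta^3} \le \frac{2 \cdot 8}{\beta^3}$; a slightly more careful accounting (keeping $x(1+x) \le 1 \cdot \frac{2}{1} $ versus $2/(1+\beta)$, etc.) brings the constant down to $6$, matching the statement. So I would just verify the constant carefully: $\frac{x(1+x)}{(1-x)^3}(1+\beta)^{-t}\cdot(1+\beta)^t$ — actually $\sum_{i=1}^\infty i^2 x^i \cdot (1+\beta)^t = \frac{(1+\beta)^2 + (1+\beta)}{\beta^3}(1+\beta)^t \cdot \frac{1}{1+\beta}$; expanding $(1+\beta)^2+(1+\beta) = (1+\beta)(2+\beta) \le 2 \cdot 3 = 6$ gives exactly $6\beta^{-3}(1+\beta)^t$.

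I do not anticipate a genuine obstacle here — this is a routine estimation lemma. The only point requiring minor care is bookkeeping the constants so that they come out as the stated $2$, $2$, and $6$ rather than merely "some constant"; the cleanest route is the infinite-series comparison described above, since extending each finite sum to its infinite counterpart is valid by nonnegativity of all terms and makes the closed forms immediately applicable. If one preferred to stay finite, an alternative is induction on $t$, but the series approach is shorter and avoids the need to track the $-1$ remainder terms in the partial geometric sums.
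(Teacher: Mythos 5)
Your proposal is correct and takes essentially the same approach the paper sketches: compare the finite sum with the corresponding infinite power series in $z=1/(1+\beta)$, use the closed forms $\sum z^k$, $\sum k z^k$, $\sum k^2 z^k$, and absorb factors of $(1+\beta)<2$ into the constants. The paper states the power-series identities and explicitly omits the remaining bookkeeping; your write-up supplies exactly that bookkeeping, with the constants $2$, $2$, $6$ checking out.
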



\begin{proof}
The proof is based on the following bounds on power series for $|z|<1$: 
\begin{align*}
\sum_{k=1}^\infty z^k &\leq 1/(1-z) \\ 
\sum_{k=1}^\infty z^{k}k &= z/(1-z)^2\\
\sum_{k=1}^\infty z^{k}k^2 &= z(1+z)/(1-z)^3.
\end{align*} 
Yet, for the sake of brevity, we omit the subsequent (straightforward) derivations needed to prove the statement.
\end{proof}


\section{PGD analysis}

 \subsection{Choosing the parameters}
Table~\ref{tab:pgd_parameters_constraints} represents the choice of parameters together with the collection of required constraints on the parameters.
This table summarizes our approach for choosing the parameters of CNC-PGD presented in Algorithm~\ref{alg:CNC-PGD}.
\begin{table}[h!]
\footnotesize
    \centering
    \begin{tabular}{c c c c c c}
    Parameter  & Value & Dependency to
    $\epsilon$ & Constraint & Source & constant \\
    \hline
    $\eta$ & $1/L$ & Independent & $\eta\leq1/L$ & Lemma~\ref{lemma:decrease_f}\\ 
    $r$ & $c_1(\delta \gamma \epsilon^{4/5})/(\ell^3 L^2)$ & $\bigo(\epsilon^{4/5})$ & $\gamma  \epsilon^{4/5}/(16  L\ell^3)$ &  Lemma~\ref{lemma:small_grad_large_nc_restated} (Eq.~\eqref{eq:pgd_r_constraint}) & $c_1 = 1/64$\\ 
    $\tr$ & $c_2 L(\sqrt{\rho} \epsilon^{2/5})^{-1}  \log(\ell L/(\gamma\delta\epsilon))$ & $\bigo(\epsilon^{-2/5} \log(1/\epsilon))$ & $c L(\sqrt{\rho} \epsilon^{2/5})^{-1} \log(\ell L/(\gamma r)))$ & Lemma~\ref{lemma:small_grad_large_nc_restated} (Eq.~\eqref{eq:pgd_tr_constraint}) & $c_2 = c$\\ 
     $\ft$ & $c_3 \delta \gamma^2  \epsilon^{8/5}/(\ell^2 L)^2$ & $\bigo(\epsilon^{8/5})$ & $\leq \gamma  \epsilon^{4/5}r/(32 \ell) $ & Lemma~\ref{lemma:small_grad_large_nc_restated} (Eq.~\eqref{eq:pgd_ft_constraint}) & $c_3 = (64)^{-2}$
     \\ 
     $\ft$ & '' & '' & $\geq 2 L^2 (\ell r)^2 /\delta$ & Lemma~\ref{lemma:small_grad_small_nc_restrated}   (Eq.~\eqref{eq:pgd_ft_lowerbound}) \\
    $\gt$ & $\ft/\tr$ & $\bigo(\epsilon^2/\log(1/\epsilon))$ \\
    $T$ & $4 (f(\w_0) - f^*)/(\eta \delta \gt)$ & $\bigo(\epsilon^{-2}\log(1/\epsilon))$ \\ 
    \hline
\end{tabular}
    \caption{\textit{Parameters of CNC-PGD.}(Restated Table~\ref{tab:pgd_parameters})}
    \label{tab:pgd_parameters_constraints}
\end{table}
\subsection{Sharp negative curvature regime}
\begin{framed}
\begin{lemma} [Restated Lemma~\ref{lemma:small_grad_large_nc}]\label{lemma:small_grad_large_nc_restated}
 Let Assumption~\ref{ass:CNC} and~\ref{ass:smoothness} hold. Consider perturbed gradient steps (Algorithm \ref{alg:CNC-PGD} with parameters as in Table~\ref{tab:pgd_parameters}) starting from ${\pw}_t$ such that $\|\gf(\pw_t)\|^2 \leq \gt$. Assume the Hessian matrix $\hf(\pw_t)$ has a large negative eigenvalue, i.e. 
\begin{align} \label{eq:lambda_min_bound}
\lambda_{\min} (\hf(\pw_t)) \leq - \sqrt{\rho}\epsilon^{2/5}.
\end{align}
Then, after $\tr$ iterations the function value decreases as 
\begin{align} 
\E \left[ f(\w_{t+\tr}) \right] - f(\pw_t)  \leq - \ft,
\end{align}  where the expectation is over the sequence $\{\w_k\}_{t+1}^{t+\tr}$.  
 \end{lemma}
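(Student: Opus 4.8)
The plan is to analyze the evolution of the iterate $\w_{t+k} - \pw_t$ over the $\tr$ steps that follow the perturbation. The key structural observation is that in this regime the algorithm takes one stochastic gradient step (with the large step size $r$) at time $t$, followed by $\tr-1$ deterministic gradient steps with step size $\eta = 1/L$. I would introduce the perturbation direction $\v := \v_{\pw_t}$, the eigenvector of $\hf(\pw_t)$ with eigenvalue $\lambda_{\min}(\hf(\pw_t)) \le -\sqrt{\rho}\epsilon^{2/5}$, and track how the component of $\w_{t+k}-\pw_t$ along $\v$ grows. The CNC assumption guarantees that right after the first step the expected squared projection $\E[\langle \v, \w_{t+1}-\pw_t\rangle^2] = r^2 \E[\langle \v, \gf_\z(\pw_t)\rangle^2] \ge r^2 \gamma$, i.e. the perturbation seeds a nontrivial component along the sharp negative-curvature direction.

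Next I would set up a coupling/approximation argument comparing the true gradient-descent dynamics to the linearized dynamics around $\pw_t$, i.e. to the recursion driven by $\Im - \eta \hf(\pw_t)$. Using $L$-smoothness, $\rho$-Hessian-Lipschitzness, and the bounded-gradient assumption, the deviation between the actual trajectory and the quadratic-model trajectory can be controlled by a term that grows geometrically but stays small as long as $\|\w_{t+k}-\pw_t\|$ remains below a threshold of order $\epsilon^{2/5}/\rho$ (this is where $\tr = \Theta\big(L(\sqrt\rho\epsilon^{2/5})^{-1}\log(\cdot)\big)$ and the careful choice of $r, \ft$ enter — they are calibrated so that after $\tr$ steps the projection along $\v$ has been amplified by $(1+\eta\sqrt\rho\epsilon^{2/5})^{\tr} = \mathrm{poly}(\ell L/(\gamma\delta\epsilon))$ while the trajectory has not yet escaped the region where the quadratic approximation is valid). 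I would invoke Lemma~\ref{lemma:auxiliary} to bound the accumulated error series of the form $\sum_i (1+\eta\lambda)^{t-i} i^j$.

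From the amplified projection I would then derive the function-value decrease. Either via a "before/after" argument — if the iterate stays in the quadratic region, the function must have decreased by roughly $\tfrac{1}{2}|\lambda_{\min}| \cdot \E[\langle\v,\w_{t+\tr}-\pw_t\rangle^2] \gtrsim \ft$ using the small-gradient hypothesis $\|\gf(\pw_t)\|^2 \le \gt$ and a Taylor expansion of $f$ around $\pw_t$ with the $\rho$-Hessian-Lipschitz remainder — or via a contradiction argument: if $f$ had \emph{not} decreased by $\ft$, then (by Lemma~\ref{lemma:decrease_f} applied to the deterministic steps, which only ever decrease $f$) the iterate could never have left the quadratic-approximation ball, but then the geometric amplification of the $\v$-component forces $\E\|\w_{t+\tr}-\pw_t\|^2$ above that ball's radius, a contradiction. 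The constraints in Table~\ref{tab:pgd_parameters_constraints} (Eqs.~\eqref{eq:pgd_r_constraint}, \eqref{eq:pgd_tr_constraint}, \eqref{eq:pgd_ft_constraint}) are exactly what make these inequalities close.

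The main obstacle I anticipate is handling the stochasticity correctly: everything must be done in expectation over the single random step at time $t$ (and over the subsequent steps, which in CNC-PGD are deterministic, so really just the one stochastic step matters here), and the "quadratic region" event is itself random, so the before/after and contradiction arguments need to be phrased so that the error terms are controlled uniformly rather than only on a high-probability event — this is the technical departure from \cite{jin2017escape}, whose Lemma~11 gives a \emph{deterministic} escape condition. Concretely, one must bound $\E[f(\w_{t+\tr})] - f(\pw_t)$ without conditioning on staying in the good region, which means carefully splitting the expectation and showing the "bad" contribution (trajectory leaves the region early) is itself favorable because leaving the region, by the geometry of the saddle, already witnesses a large function decrease. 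Getting the constants in $r$, $\ft$, and $\tr$ to simultaneously satisfy all of Eqs.~\eqref{eq:pgd_r_constraint}--\eqref{eq:pgd_ft_constraint} and \eqref{eq:pgd_ft_lowerbound} is the bookkeeping core of the argument.
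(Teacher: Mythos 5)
Your proposal captures the paper's argument essentially exactly, in its contradiction variant: assume $\E[f(\w_{t+\tr})] - f(\pw_t) > -\ft$, use Lemma~\ref{lemma:decrease_f_sgd} and Lemma~\ref{lemma:decrease_f} to turn that into a linear-in-$t$ upper bound on $\E\|\w_{t+k}-\pw_t\|^2$ (Lemma~\ref{lemma:expected_distance_bound}), decompose $\w_{t+k}-\pw_t$ into a power-iteration term $\u_k = (\Im-\eta\H)^k(\w_{t+1}-\pw_t)$ plus a stale-Taylor error $\bdelta_k$ and an initial-gradient term $\d_k$, invoke CNC to get $\E\|\u_k\|^2 \ge \gamma r^2(1+\eta\lambda)^{2k}$, bound $\E\|\bdelta_k\|$ via the distance bound and Lemma~\ref{lemma:auxiliary}, and choose $\tr$ so the exponential lower bound outgrows the linear upper bound. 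One caution on your alternative route: the ``before/after'' Taylor argument is not as clean as you suggest, because $(\w_{t+\tr}-\pw_t)^\top\H(\w_{t+\tr}-\pw_t)$ has contributions from all eigendirections, and the positive-curvature orthogonal components could in principle dominate the $\lambda\langle\v,\cdot\rangle^2$ term; you would have to separately argue that GD does not accumulate too much mass in those directions, which is exactly the bookkeeping the contradiction approach avoids. One small detail you do not mention but the paper must handle explicitly is the sign of the initial-gradient term: the paper shows $\E[\u_k]^\top\d_k \ge 0$ (Lemma~\ref{lemma:removing_initial_gradient_dependency}) using that $\E[\w_{t+1}-\pw_t] = -r\gf(\pw_t)$ and $(\Im-\eta\H)\succeq 0$, so $\d_k$ can simply be dropped from the lower bound; without this observation, $\d_k$ would add another error term to control. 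Finally, your framing of ``the iterate could never have left the quadratic-approximation ball'' is slightly imprecise — the distance upper bound grows linearly in $t$ rather than being a fixed radius — but the contradiction mechanism (exponential beats linear) is exactly as you describe.
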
 
 \end{framed}

\paragraph{Notation}
Without loss of generality, we assume that $t=0$. Let $\v$ be the eigenvector  And we use the simplified notation $\xi := \gf_\z(\pw_0)$, $\v := \v_0$. We also use the compact notations: 
\begin{align*} 
f_t := f(\w_t), \gf_t := \gf(\w_t),   \tilde{f} := f(\pw), \gft := \gf(\pw_t), \H := \hf(\pw), \gg_t := g(\w_t), 
\end{align*} 
Note that $\pw$ denote parameter $\w_0$ before perturbation and $\w_i$ is obtained by $i$ GD steps after perturbation. 
Recall the compact notation $\lambda$ as 
\begin{align*} 
\lambda := |\min\{ \lambda_{\min} \left(\hf(\pw), 0\}\right)|
\end{align*} 
Finally, we set $\kappa := 1+\eta \lambda$.

\paragraph{Proof sketch} The proof presented below proceeds by contradiction and is inspired by the analysis of accelerated gradient descent in non-convex settings as done in \cite{jin2017accelerated}.
We first assume that the sufficient decrease condition is not met and show that this implies an upper bound on the distance moved over a given number of iterations. We then derive a lower bound on the iterate distance and show that - for the specific choice of parameters introduced earlier - this lower bound contradicts the upper bound for a large enough number of steps $T$. We therefore conclude that we get sufficient decrease for $t > T$.
 
\textit{Proof of Lemma \ref{lemma:small_grad_large_nc_restated}:} 
\paragraph{Part 1: Upper bounding the distance on the iterates in terms of function decrease.} We assume that PGD does not obtain the desired function decrease in $\tr$ iterations, i.e. 
\begin{align} \label{eq:contradiction_assumption}
\E \left[ f(\w_{\tr}) - f(\tilde{\w})  \right] > - \ft. 
\end{align} 
The above assumption implies the iterates $\w_{t}$ stay close to $\tilde{\w}$, for all $t\leq \tr$. We formalize this result in the following lemma.
\begin{framed}
\begin{lemma}[Distance Bound]\label{lemma:expected_distance_bound}
Under the setting of Lemma \ref{lemma:small_grad_large_nc_restated}, assume Eq~\eqref{eq:contradiction_assumption} holds. Then the expected distance to the initial parameter can be bounded as 
\begin{align}
 \E \left[ \| \w_{t} - \tilde{\w} \|^2 \right] \leq2  \left(2\eta \ft  + \eta L(\ell r)^2  \right) t + 2 (\ell r)^2\quad\quad \forall t\leq\tr,
 \end{align} 
 as long as $\eta \leq 1/L$.
\end{lemma}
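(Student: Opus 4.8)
The plan is to reduce everything to the one–dimensional telescoping estimate on function values provided by Lemma~\ref{lemma:decrease_f} along the pure gradient-descent steps, and to treat the single stochastic (perturbation) step separately using smoothness. Assume without loss of generality that the reference iterate is $\w_0 = \pw$, so that $\w_1 = \w_0 - r\,\gf_\z(\w_0)$ is the perturbed iterate and $\w_{i+1} = \w_i - \eta\,\gf(\w_i)$ for all $i\ge 1$. First I would split $\|\w_t - \pw\|^2 \le 2\|\w_t - \w_1\|^2 + 2\|\w_1 - \w_0\|^2$, and bound the second term immediately: since $f_\z$ has $\ell$-bounded gradient, $\|\w_1 - \w_0\|^2 = r^2\|\gf_\z(\w_0)\|^2 \le (\ell r)^2$. (The cases $t=0,1$ are then already settled.)

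For the first term, write $\w_t - \w_1 = -\eta \sum_{i=1}^{t-1}\gf(\w_i)$ and apply Cauchy--Schwarz to get $\|\w_t - \w_1\|^2 \le \eta^2 (t-1)\sum_{i=1}^{t-1}\|\gf(\w_i)\|^2$. Telescoping the per-step decrease of Lemma~\ref{lemma:decrease_f} over the gradient steps $i=1,\dots,t-1$ (all of which use $\eta\le 1/L$) gives $\sum_{i=1}^{t-1}\|\gf(\w_i)\|^2 \le \tfrac{2}{\eta}\big(f(\w_1) - f(\w_t)\big)$, hence $\|\w_t - \w_1\|^2 \le 2\eta(t-1)\big(f(\w_1) - f(\w_t)\big)$.

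It then remains to bound $\E[f(\w_1) - f(\w_t)]$ uniformly over $t\le \tr$, for which I would combine three facts. (i) Once $\w_1$ is drawn, all later iterates are deterministic, and by Lemma~\ref{lemma:decrease_f} the sequence $f(\w_1)\ge f(\w_2)\ge\dots\ge f(\w_{\tr})$ is non-increasing pointwise, so $f(\w_1) - f(\w_t) \le f(\w_1) - f(\w_{\tr})$ for every $1\le t\le\tr$. (ii) For the single perturbation step, $L$-smoothness together with unbiasedness and $\ell$-boundedness of $\gf_\z$ (i.e. Lemma~\ref{lemma:decrease_f_sgd} with step size $r$) gives $\E[f(\w_1)] - f(\w_0) \le \tfrac12 L(\ell r)^2$. (iii) The contradiction hypothesis~\eqref{eq:contradiction_assumption} reads $\E[f(\w_0) - f(\w_{\tr})] < \ft$. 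Chaining (i)--(iii) yields $\E[f(\w_1) - f(\w_t)] \le \ft + \tfrac12 L(\ell r)^2$ for all $t\le\tr$, so $\E[\|\w_t - \w_1\|^2] \le 2\eta\ft\, t + \eta L(\ell r)^2 t$, and plugging this into the split of the first paragraph gives $\E[\|\w_t - \pw\|^2] \le 2\big(2\eta\ft + \eta L(\ell r)^2\big)t + 2(\ell r)^2$, which is the claimed bound.

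The only genuine subtlety — and the step I would be most careful about — is the interplay between the expectation and the ``for all $t\le\tr$'' quantifier: the contradiction assumption~\eqref{eq:contradiction_assumption} is posed only at time $\tr$, so one really needs the pointwise monotonicity of $f$ along the deterministic gradient-descent tail to transfer it to every intermediate $t$, and one needs to isolate the first stochastic step so that, conditionally on $\w_1$, the remaining dynamics are deterministic and Lemma~\ref{lemma:decrease_f} applies verbatim. Everything else is a routine application of Cauchy--Schwarz, telescoping, and $\eta\le 1/L$.
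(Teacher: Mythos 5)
Your proof is correct and essentially the same as the paper's: both split $\|\w_t-\pw\|^2$ into the perturbation step and the GD tail, apply Cauchy--Schwarz to turn the telescoping sum of gradient steps into $\sum_i\|\gf(\w_i)\|^2$, bound that sum by telescoping Lemma~\ref{lemma:decrease_f}, and combine the smoothness bound on the first stochastic step (Lemma~\ref{lemma:decrease_f_sgd}) with the contradiction hypothesis at $\tr$. The only cosmetic difference is that you invoke pointwise monotonicity of $f$ along the deterministic GD tail to transfer the endpoint hypothesis to intermediate $t$, whereas the paper bounds $\sum_{i=1}^{\tr-1}\E\|\gf_i\|^2$ once and uses non-negativity of the summands to truncate at $t-1$; both routes are valid and amount to the same estimate.
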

\end{framed}

\begin{proof}
Here, we use the proposed proof strategy of normalized gradient descent~\cite{levy2016power}. First of all, we bound the effect of the noise in the first step. Recall the first update of Algorithm \ref{alg:CNC-PGD} under the above setting
\begin{equation*}
\w_1= \pw-r\xi, \; \xi := \gf_\z(\pw).
\end{equation*}
Then by a straightforward application of lemma~\ref{lemma:decrease_f_sgd}, we have
\begin{align} \label{eq:first_step_bound}
\E \left[ f_1 - \tilde{f} \right] \leq -r\|\nabla \tilde{f}\|^2+\frac{L}{2}(\ell r)^2.
\end{align}
We proceed using the result of Lemma~\ref{lemma:decrease_f} that relates the function decrease to the norm of the visited gradients:
\begin{equation}
 \begin{aligned} 
 \E \left[ f_{\tr} - \tilde{f}\right] & = \sum_{t=1}^{\tr} \E \left[ f_t - f_{t-1} \right]  \\ 
 & \leq -\frac{\eta}{2}\sum_{t=1}^{\tr-1}   \E \| \gf_t \|^2  + \E \left[ f_1 - \tilde{f} \right]  \\ 
 & \leq  -\frac{\eta}{2}\sum_{t=1}^{\tr-1}  \E \| \gf_t \|^2 + \frac{L}{2}(\ell r)^2. \quad \text{\tiny{[Eq.~\ref{eq:first_step_bound}]}}
 \end{aligned} 
 \end{equation}

According to Eq.~\eqref{eq:contradiction_assumption}, the function value does not decrease too much. Plugging this bound into the above inequality yields an upper bound on the sum of the squared norm of the visited gradients, i.e.
 \begin{align} \label{eq:squared_norm_grads}
 \sum_{t=1}^{\tr-1} \E \| \gf_t \|^2 \leq (2 \ft+L(\ell r)^2)/\eta.
 \end{align} 
 Using the above result allows us to bound the expected distance in the parameter space as:
 \begin{equation}
 \begin{aligned} 
 \E \left[ \| \w_{t} - \w_1 \|^2 \right] & = \E \left[ \| \sum_{i=2}^{t} \w_i - \w_{i-1} \|^2\right]   \\ 
 & \leq \E \left[ \left( \sum_{i=2}^{t} \|  \w_i - \w_{i-1} \|\right)^2\right] \quad\quad \tiny{\text{[Triangle inequality]}} \\
 & \leq \E \left[ t \sum_{i=2}^{t} \| \w_i - \w_{i-1} \|^2 \right] \quad\quad \text{{\tiny [Cauchy-Schwarz inequality]}}\\ 
 & \leq t \left(\E \left[ \eta^2 \sum_{i=1}^{t-1}\| \gf_i \| ^2 \right]\right)   \\ 
 & \leq  \left(2\eta \ft  + \eta L(\ell r)^2  \right) t, \quad\quad  \forall t \leq \tr. \quad\quad \text{{\tiny [Eq. \eqref{eq:squared_norm_grads}]}}
 \end{aligned}
 \end{equation}

Replacing the above inequality into the following bound completes the proof: 
 \begin{equation*}
 \begin{aligned} 
 \E \| \w_t - \pw \|^2 & \leq 2 \E \| \w_t - \w_1 \|^2 + 2 \E \| \w_1 - \pw \|^2  \\ 
 & \leq 2  \left(2\eta \ft  + \eta L(\ell r)^2  \right) t + 2 (\ell r)^2
 \end{aligned}
 \end{equation*}

 \end{proof}
 
\paragraph{Part 2: Quadratic approximation} 

Since the parameter vector stays close to $\pw$ under the condition in Eq.~\eqref{eq:contradiction_assumption}, we can use a "stale" Taylor expansion approximation of the function $f$ at $\pw$:
\begin{align} 
g(\w) = \tilde{f} + (\w- \pw)^\top \gf(\pw) + \frac{1}{2} (\w- \pw)^\top \H (\w - \pw). \nonumber
\end{align} 
Using a stale Taylor approximation over all iterations is the essential part of the proof that is firstly proposed by \cite{ge2015escaping} for analysis of PSGD method. 
The next lemma proves that the gradient of $f$ can be approximated by the gradient of $g$ as long as $\w$ is close enough to $\pw$.
\begin{framed}
\begin{lemma}[Taylor expansion bound for the gradient \cite{nesterov2013introductory}] \label{lemma:quad_app}
For every twice differentiable, function $f:\mathbb{R}^d\rightarrow \mathbb{R}$ with $\rho$-Lipschitz Hessians the following bound holds true. 
\begin{align} 
\| \gf(\w) - \gg(\w) \| \leq \frac{\rho}{2} \| \w - \pw \|^2 
\end{align} 
\end{lemma}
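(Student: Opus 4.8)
The statement to prove is Lemma~\ref{lemma:quad_app}: for a twice-differentiable $f$ with $\rho$-Lipschitz Hessian, $\|\gf(\w)-\gg(\w)\|\le\frac{\rho}{2}\|\w-\pw\|^2$, where $\gg$ is the gradient of the quadratic Taylor model $g$ around $\pw$.

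\textbf{Plan.} The plan is to use the fundamental theorem of calculus along the segment joining $\pw$ to $\w$, expressing both $\gf(\w)$ and $\gg(\w)$ as integrals (or closed forms) along that path, subtracting, and then bounding the resulting integrand by the Hessian-Lipschitz property. Concretely, write $\w_\tau := \pw + \tau(\w-\pw)$ for $\tau\in[0,1]$, and note that
\begin{align}
\gf(\w) - \gf(\pw) = \int_0^1 \hf(\w_\tau)\,(\w-\pw)\,d\tau. \nonumber
\end{align}
On the other hand, since $g(\w)=\tilde f + (\w-\pw)^\top\gf(\pw) + \tfrac12(\w-\pw)^\top\H(\w-\pw)$ with $\H=\hf(\pw)$, its gradient is exactly $\gg(\w) = \gf(\pw) + \H(\w-\pw) = \gf(\pw) + \int_0^1 \hf(\pw)\,(\w-\pw)\,d\tau$.

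\textbf{Key steps.} First I would subtract the two expressions to obtain
\begin{align}
\gf(\w) - \gg(\w) = \int_0^1 \big(\hf(\w_\tau) - \hf(\pw)\big)(\w-\pw)\,d\tau. \nonumber
\end{align}
Next, take norms, pull the norm inside the integral (triangle inequality for integrals), and apply the operator-norm submultiplicativity $\|(\hf(\w_\tau)-\hf(\pw))(\w-\pw)\| \le \|\hf(\w_\tau)-\hf(\pw)\|\,\|\w-\pw\|$. Then invoke the $\rho$-Lipschitz Hessian property (Eq.~\eqref{eq:Hessian_Lipschitzness}): $\|\hf(\w_\tau)-\hf(\pw)\| \le \rho\|\w_\tau-\pw\| = \rho\tau\|\w-\pw\|$. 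This gives the integrand bound $\rho\tau\|\w-\pw\|^2$, and finally $\int_0^1 \rho\tau\,d\tau = \rho/2$ yields the claimed $\frac{\rho}{2}\|\w-\pw\|^2$.

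\textbf{Main obstacle.} This lemma is essentially a textbook fact (it appears in \cite{nesterov2013introductory}), so there is no deep obstacle; the only points requiring mild care are the justification that the integral representation of $\gf(\w)-\gf(\pw)$ is valid (which follows from $f\in C^2$, hence $\gf$ is $C^1$ along the segment), and the passage of the norm through the integral. I would state these as standard and keep the proof to the four displayed lines above.
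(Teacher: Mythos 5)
Your proof is correct and is exactly the standard textbook derivation that the paper references by citing Nesterov rather than reproving. The fundamental-theorem-of-calculus decomposition of $\gf(\w)-\gg(\w)$ as $\int_0^1 \bigl(\hf(\w_\tau)-\hf(\pw)\bigr)(\w-\pw)\,d\tau$, followed by the triangle inequality for integrals, operator-norm submultiplicativity, and the $\rho$-Lipschitz Hessian bound $\|\hf(\w_\tau)-\hf(\pw)\|\le\rho\tau\|\w-\pw\|$, is the canonical route and closes the argument cleanly with $\int_0^1 \rho\tau\,d\tau = \rho/2$. Nothing to add.
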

\end{framed}
 
Furthermore, the guaranteed closeness to the initial parameter allows us to use the gradient of the quadratic objective $g$ in the GD steps as follows,
\begin{equation}
\begin{aligned} 
\w_{t+1} - \pw & = \w_{t}  -  \eta \gf_t - \pw  \\ 
& = \w_t -\pw-  \eta \gg_t +  \eta \left( \gg_t - \gf_t\right)   \\
& = (\Im - \eta \H) (\w_t - \pw) + \eta(\gg_t - \gf_t - \gf(\pw)) \\ 
& = \u_t + \eta (\bdelta_t + \d_t),
\label{eq:step_expansion}
\end{aligned}
\end{equation}

where the vectors $\u_t$, $\bdelta_t$ and $\d_t$ are defined in Table~\ref{tab:pgd_expansion_vectors}. 
\begin{table}[h!]
    \centering
    \begin{tabular}{c c c}
        Vector  & Formula & Indication  \\
        \hline
         $\u_t$ & $\left(\Im - \eta \H \right)^t(\w_1-\pw)$ & Power Iteration \\ 
         $\bdelta_t$ & $\sum_{i=1}^t \left( \Im - \eta \H \right)^{t-i} \left( \gf_t - \gg_t \right)$ & Stale Taylor Approximation Error \\ 
         $\d_t$ & $ -\sum_{i=1}^t \left( \Im - \eta \H \right)^{t-i} \gf(\pw)$ & Initial Gradient Dependency
    \end{tabular}
    \caption{Components of CNC-PGD expanded steps.}
    \label{tab:pgd_expansion_vectors}
\end{table}

As long as $\w_1 - \pw$ is correlated with the negative curvature, the norm of $\u_t$ grows exponentially. In this case, the upper bound of Lemma~\ref{lemma:expected_distance_bound} doesn't hold anymore after a certain number of iterations, as we formally prove in part 3. Indeed, the term $\u_t$ constitutes power iterations on the hessian matrix $\H$. The term $\bdelta_t$ arises from the stale Taylor approximation errors through all iterations. Assuming that $\w_t$ stays close to $\tilde{\w}$, we will bound this term. Finally, the $\d_t$ terms depend on the initial gradient. We will show that the distance $\E \|\w_1 - \pw \|^2$ is eventually dominated by the power iterates $\u_t$.
\paragraph{Part 3: Lower bounding the iterate distance.} 

\paragraph{A lower-bound on the distance}
Our goal is to provide a lower-bound on $\E \| \w_{\tr} - \w_0 \|^2$ that contradicts the result of Lemma~\ref{lemma:expected_distance_bound}. To obtain a lower bound on the distance, we use the classical result $\| a + b \|^2 \geq \| a \|^2 + 2 a^\top b$. Setting $a = \u_t$ and $b = \eta(\bdelta_t + \d_t)$ yields 
\begin{equation}
\begin{aligned} 
\E \| \w_{t+1} - \pw \|^2 & \geq \E \| \u_t \|^2 + 2 \eta \E \left[ \u_t^\top \bdelta_t \right] + 2 \eta \E \left[ \u_t^\top \right] \d_t \\
& \geq \E \| \u_t \|^2 - 2 \eta \E \left[ \| \u_t\| \| \bdelta_t \| \right] + 2 \eta \E \left[ \u_t^\top \right] \d_t  
\label{eq:distance_lower_bound_expansion_1}
\end{aligned}
\end{equation}

\paragraph{Removing the initial gradient dependency} The established lower-bound in Eq.~\eqref{eq:distance_lower_bound_expansion_1} has a dependency to the gradient $\gf(\pw)$ through the term $\E \left[ \u_t^\top\right] \d_t $. Intuitively, the initial gradient should not cause a problem for negative curvature exploration phase. More precisely, the third term of the lower bound of Eq.~\eqref{eq:distance_lower_bound_expansion_1} should be positive. This result is proven in the next lemma. 
\begin{framed}
\begin{lemma}[Removing initial gradient dependency] \label{lemma:removing_initial_gradient_dependency}
Under the setting of Lemma \ref{lemma:small_grad_large_nc_restated}, 
\begin{equation}
\E\left[ \u_t^\top \right] \d_t \geq 0.
\end{equation}
\end{lemma}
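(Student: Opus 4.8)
The plan is to show that the inner product $\E[\u_t]^\top \d_t$ is non-negative by exploiting the fact that both $\u_t$ and $\d_t$ are built from the same positive semi-definite operator $(\Im - \eta \H)^{\cdot}$ acting on the common "direction" $\gf(\pw)$, with signs that conspire to be the same. First I would write everything in the eigenbasis of $\H$. Let $\{\v^{(j)}\}$ be an orthonormal eigenbasis of $\H = \hf(\pw)$ with eigenvalues $\mu_j$. Then $(\Im - \eta\H)$ is diagonal in this basis with entries $1 - \eta\mu_j$, and since $\eta \le 1/L$ and $\|\H\| \le L$ we have $1 - \eta\mu_j \in [0,2]$, in particular $1 - \eta\mu_j \ge 0$. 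Hence for every $j$ and every power $s \ge 0$, $(1-\eta\mu_j)^s \ge 0$.

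Next I would examine the two vectors coordinate-wise. Recall $\d_t = -\sum_{i=1}^t (\Im - \eta\H)^{t-i}\gf(\pw)$, so its $j$-th coordinate is $-\big(\sum_{i=1}^t (1-\eta\mu_j)^{t-i}\big)\,\langle \v^{(j)}, \gf(\pw)\rangle =: -a_j\,\gamma_j$ where $a_j := \sum_{i=1}^t (1-\eta\mu_j)^{t-i} \ge 0$ and $\gamma_j := \langle \v^{(j)}, \gf(\pw)\rangle$. For $\u_t = (\Im - \eta\H)^t(\w_1 - \pw)$ and $\w_1 = \pw - r\xi$ with $\xi = \gf_\z(\pw)$, taking expectation over $\z$ gives $\E[\u_t] = (\Im-\eta\H)^t \E[\w_1 - \pw] = -r(\Im-\eta\H)^t\gf(\pw)$, since $\E_\z[\xi] = \gf(\pw)$ by unbiasedness. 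So the $j$-th coordinate of $\E[\u_t]$ is $-r(1-\eta\mu_j)^t\,\gamma_j$. Therefore
\[
\E[\u_t]^\top \d_t \;=\; \sum_j \big(-r(1-\eta\mu_j)^t\gamma_j\big)\big(-a_j\gamma_j\big) \;=\; r\sum_j a_j (1-\eta\mu_j)^t \gamma_j^2 \;\ge\; 0,
\]
because $r > 0$, $a_j \ge 0$, $(1-\eta\mu_j)^t \ge 0$, and $\gamma_j^2 \ge 0$ termwise.

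The only subtlety — and the one place I would be careful — is justifying that $\E[\u_t]$ rather than $\u_t$ is what appears, i.e. that the expectation in the lemma statement indeed acts only on $\u_t$ and that $\d_t$ is deterministic given $\pw$. This holds because $\d_t$ depends only on $\H = \hf(\pw)$ and $\gf(\pw)$, both fixed once we condition on $\pw_0 = \pw$ (consistent with the convention $t=0$, $\pw = \w_0$ set in the Notation paragraph), whereas the randomness in $\u_t$ enters solely through the single perturbation draw $\xi = \gf_\z(\pw)$ in the first step — the subsequent steps in the definition of $\u_t$ are the deterministic power iteration $(\Im-\eta\H)^t$. Thus $\E[\u_t^\top]\d_t = \E[\u_t]^\top \d_t$, and linearity of expectation together with unbiasedness of the stochastic gradient ($\E_\z[\gf_\z(\pw)] = \gf(\pw)$, Eq.~\eqref{eq:half_space_gradient} in the general form $\gf(\w) = \E_\z[\gf_\z(\w)]$) reduces the claim to the termwise-nonnegative sum above. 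I would also remark that the non-negativity of $1-\eta\mu_j$ uses Assumption~\ref{ass:smoothness} ($L$-smoothness, hence $\|\H\| \le L$) together with $\eta \le 1/L$ from Table~\ref{tab:pgd_parameters}; no lower bound on $\mu_j$ is needed since the powers are raised to non-negative integer exponents.
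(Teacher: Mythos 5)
Your proof is correct and takes essentially the same approach as the paper: both use unbiasedness to get $\E[\u_t] = -r(\Im-\eta\H)^t\gf(\pw)$ and then positive semi-definiteness of $(\Im-\eta\H)$ (valid since $\eta \le 1/L$) to conclude that the resulting sum of quadratic forms $r\sum_{i=1}^t \gf(\pw)^\top(\Im-\eta\H)^{2t-i}\gf(\pw)$ is non-negative; you simply carry out this computation coordinate-wise in the eigenbasis of $\H$, which amounts to the same thing. Your added remark carefully verifying that $\d_t$ is deterministic (so the expectation acts only on $\u_t$) is a nice clarification that the paper leaves implicit.
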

\end{framed}
\begin{proof}
Assumption \ref{ass:CNC} (CNC) implies that $\E \left[ \w_1 - \tilde{\w} \right] = -r\gf(\pw)$, hence the expectation of the power iteration term is 
\begin{align*} 
\E \left[ \u_t \right] = \left( \Im - \eta \H \right)^t \E \left[ \w_1 - \tilde{\w}\right] = -r\left( \Im - \eta \H \right)^t  \gf(\pw).
\end{align*}
Using this result, as well as the fact that $\left( \Im - \eta \H \right) \succeq 0 $ for $\eta \leq 1/L$ we have
\begin{align*}
    \E \left[ \u_t^\top \right] \d_t & =  r \left((\Im - \eta \H)^{t}\gf(\pw)\right)^\top \sum_{i=1}^t \left( \Im - \eta \H \right)^{t-i} \gf(\pw) \\
    & =  r\sum_{i=1}^t \gf(\pw)^\top(\Im - \eta \H)^{2t-i}\gf(\pw)\geq 0 ,
\end{align*}
which proves the assertion.
\end{proof}
Plugging the result of the last lemma into the lower-bound established in Eq.~\eqref{eq:distance_lower_bound_expansion_1} yields 
\begin{align} 
\E \left[ \| \w_t - \tilde{\w}\|^2  \right] \geq \E \| \u_t \|^2 - 2 \eta \E \left[ \| \u_t\| \| \bdelta_t \| \right]. \label{eq:distance_lower_bound_expansion_2}
\end{align}
 To complete our lower bound, we need : (I) a lower bound on $\E\| \u_t\|^2 $, (II) an upper bound on $\|\u_t\|$ and (III) an upper bound on $\E \| \bdelta_t \|$.

\paragraph{(I) Lower-bound on $\E \| \u_t \|^2  $}
Norm of $\u_t$ exponentially grows in iterations. Next lemma proves this claim. 
\begin{framed}
\begin{lemma}[Exponential Growing Power Iteration]
\label{lemma:lower_bound_u_2}
Under the setting of Lemma \ref{lemma:small_grad_large_nc_restated}, 
 $t$ steps of PGD yield an exponentially growing lower bound on the expected squared norm of $\u_t$, i.e.
\begin{align} 
\E \left[ \| \u_t \| ^2 \right] \geq \gamma r^2 \kappa^{2t}.
\end{align} 
\end{lemma}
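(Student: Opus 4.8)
The plan is to write $\u_t$ in closed form, turn $\|\u_t\|^2$ into a quadratic form in the Hessian $\H$, and then keep only the contribution along the eigenvector $\v$ of the most negative eigenvalue, where Assumption~\ref{ass:CNC} (CNC) supplies the needed lower bound.

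First I would use the perturbed step $\w_1 = \pw - r\xi$ with $\xi = \gf_\z(\pw)$, so that $\w_1 - \pw = -r\xi$ and hence, by the definition of $\u_t$ in Table~\ref{tab:pgd_expansion_vectors}, $\u_t = -r(\Im - \eta\H)^t\xi$. Since $\Im - \eta\H$ is symmetric, this gives $\|\u_t\|^2 = r^2\,\xi^\top(\Im - \eta\H)^{2t}\xi$.

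Next I would diagonalize: let $\{e_i\}$ be an orthonormal eigenbasis of $\H$ with eigenvalues $\mu_i$, so that $\xi^\top(\Im - \eta\H)^{2t}\xi = \sum_i (1-\eta\mu_i)^{2t}(e_i^\top\xi)^2$. Every summand is nonnegative (the exponent $2t$ is even), so I can drop all terms except the one corresponding to $\v$, whose eigenvalue is $\lambda_{\min}(\H) = -\lambda$ with $\lambda \geq \sqrt{\rho}\epsilon^{2/5} > 0$ by Eq.~\eqref{eq:lambda_min_bound}. Since $(\Im - \eta\H)\v = (1 + \eta\lambda)\v = \kappa\v$, that term equals $\kappa^{2t}(\xi^\top\v)^2$, yielding $\|\u_t\|^2 \geq r^2\kappa^{2t}(\xi^\top\v)^2$.

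Finally I would take the expectation over the fresh sample $\z$ used in the first step (the only source of randomness in $\u_t$). Conditioned on the iterate $\pw$ at which the perturbation is applied, the matrix $\H = \hf(\pw)$, its eigenvector $\v = \v_\pw$, the scalar $\lambda$, and hence $\kappa$, are all deterministic; only $\xi = \gf_\z(\pw)$ is random. Therefore $\E[\|\u_t\|^2] \geq r^2\kappa^{2t}\,\E_\z[(\gf_\z(\pw)^\top\v)^2] \geq \gamma r^2\kappa^{2t}$, where the last step is Assumption~\ref{ass:CNC} applied at $\w = \pw$. There is no genuinely hard step here; the points that need care are that $\kappa$ is a true constant (depending only on the conditioning point, not on $\z$), that $\lambda > 0$ so $\kappa > 1$ (which is what makes the bound grow in $t$ later on), and that the eigenvector $\v$ of this lemma is precisely the $\v_\w$ appearing in the CNC assumption.
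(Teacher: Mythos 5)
Your proof is correct and takes essentially the same approach as the paper: both project $\u_t$ onto the eigenvector $\v$, use the eigenvector relation $(\Im-\eta\H)\v=\kappa\v$ to pull out $\kappa^{2t}$, and close with Assumption~\ref{ass:CNC}. The only cosmetic difference is how you justify discarding the rest of $\|\u_t\|^2$ --- you diagonalize $\H$ and drop nonnegative terms, whereas the paper applies Cauchy--Schwarz with the unit vector $\v$ to get $\|\u_t\|^2\ge(\v^\top\u_t)^2$ --- but these are the same inequality.
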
 
\end{framed}
\begin{proof} 
We first use Cauchy-Schwarz inequality to derive the following lower bound: 
\begin{equation}
\begin{aligned} \label{eq:taylor_lower_u}
\E \left[ \| \u_t \|^2 \right] & = \E \left[ \| \v \|^2 \| \u_t \|^2  \right]\\
& \geq \E \left[ ( \v^\top \u_t)^2 \right].
\end{aligned}
\end{equation}

Now, suppose $\H=U^\top \Sigma U$. Since any non-zero vector $\u\in\mathbb{R}^d$ is an eigenvector of the identity matrix we can decompose $I=U^\top IU$ and thus $(I-\eta \H)=U^\top (I-\eta \Sigma)U$. As a result, we have 
\begin{equation}\label{eq:v(I-H)}
\v^\top (I-\eta \H)=\v^\top(1-\eta \lambda_{\min}(\H))= \v^\top(1+\eta \lambda).   
\end{equation}
 for the leftmost eigenvector $\v$ of the Hessian $\H$. Plugging Equation (\ref{eq:v(I-H)}) into (\ref{eq:taylor_lower_u}) and recalling $\kappa := 1+\eta \lambda$ as well as the definition of $\u_t$ as given in Table \ref{tab:pgd_expansion_vectors} yields
\begin{align*}
\E \left[ \| \u_t \| ^2 \right]& \geq (1+\eta \lambda)^{2t}\E \left[ (\v^\top (\w_1 - \pw))^2 \right]  \\ 
& = r^2\kappa^{2t}\E \left[ (\v^\top \xi)^2 \right]  \\ 
& = \gamma r^2\kappa^{2t},
\end{align*} 
where the last inequality follows from Assumption \ref{ass:CNC}.
\end{proof}

\paragraph{(II) Upper-bound on $\| \u_t \|$} Using the definition of the scaling factor $r$ and the fact that the noise lies inside the unit sphere the next lemma proves a deterministic bound on this term. 
\begin{framed}
\begin{lemma} \label{lemma:upper_bound_u} 
Under the setting of Lemma \ref{lemma:small_grad_large_nc_restated}  the norm of $\u_t$ is \textbf{deterministically} bounded as
\begin{align} 
\| \u_t \| \leq \kappa^t \ell r .
\end{align} 
\end{lemma}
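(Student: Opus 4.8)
The plan is to bound $\| \u_t \|$ by splitting the power iterate into the matrix it applies and the vector it is applied to. Recalling from Table~\ref{tab:pgd_expansion_vectors} that $\u_t = (\Im - \eta \H)^t (\w_1 - \pw)$, submultiplicativity of the spectral norm gives $\| \u_t \| \leq \| (\Im - \eta \H)^t \| \, \| \w_1 - \pw \|$, so it suffices to bound each factor separately, and both bounds will be deterministic.

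First I would control the initial displacement $\w_1 - \pw$. In this regime the first update of Algorithm~\ref{alg:CNC-PGD} is $\w_1 = \pw - r \xi$ with $\xi = \gf_\z(\pw)$, so $\| \w_1 - \pw \| = r \| \gf_\z(\pw) \| \leq \ell r$ by the $\ell$-bounded gradient part of Assumption~\ref{ass:smoothness}; this holds for every draw of $\z$, hence deterministically.

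Next I would bound the spectral norm of $(\Im - \eta \H)^t$. The matrix $\H = \hf(\pw)$ is symmetric, and $L$-smoothness of $f$ forces $\| \H \| \leq L$. Combined with the definition $\lambda = |\min\{ \lambda_{\min}(\H), 0 \}|$ (so that $\lambda_{\min}(\H) \geq -\lambda$), every eigenvalue $\mu$ of $\H$ lies in $[-\lambda, L]$, hence every eigenvalue $1 - \eta \mu$ of $\Im - \eta \H$ lies in $[\, 1 - \eta L,\ 1 + \eta \lambda \,]$. Since $\eta \leq 1/L$ the lower endpoint is nonnegative, so $\Im - \eta \H \succeq 0$ and its largest eigenvalue --- and therefore its spectral norm --- is at most $\kappa = 1 + \eta \lambda$. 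Raising to the $t$-th power, $\| (\Im - \eta \H)^t \| = \| \Im - \eta \H \|^t \leq \kappa^t$.

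Multiplying the two estimates yields $\| \u_t \| \leq \kappa^t \ell r$, which is the claim. There is no real obstacle here; the only point requiring care is that the spectral bound on $\Im - \eta \H$ genuinely uses both $\eta \leq 1/L$ (to keep the spectrum nonnegative, so the operator norm is attained at the $1 + \eta \lambda$ end rather than possibly at $1 - \eta L$ if it were negative) and the sign convention $\lambda \geq 0$ built into its definition.
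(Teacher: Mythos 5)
Your proof is correct and follows essentially the same route as the paper's: factor $\u_t = (\Im - \eta\H)^t(\w_1 - \pw)$, apply submultiplicativity, bound the displacement by $\ell r$ via the $\ell$-bounded-gradient assumption, and bound $\|\Im - \eta\H\|^t$ by $\kappa^t$. If anything you are a bit more careful than the paper in justifying $\|\Im - \eta\H\| \leq 1+\eta\lambda$ (explicitly invoking $\eta \leq 1/L$ so the spectrum stays nonnegative) and in correctly attributing $\|\xi\| \leq \ell$ to the smoothness assumption rather than the CNC assumption, which the paper's prose misattributes.
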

\end{framed}
\begin{proof}
Starting from the definition of $\u_t$ and recalling that $\xi$ has at most unit norm by Assumption \ref{ass:CNC}, we have
\begin{equation*}
\begin{aligned} 
\| \u_t \| & \leq \| \left(\Im - \eta \H\right)^t (\w_1 - \pw)\|  \\ 
& \leq \| \Im - \eta \H \|^t \| \w_1 - \pw \|  \\ 
& \leq  (1+\eta \lambda)^t r \| \xi \|  \\ 
& \leq \kappa^t r \ell
\end{aligned}
\end{equation*}

\end{proof}

\paragraph{(III) Upper bound on $\E \| \bdelta_t \|$}
To bound this term, we use the fact proved in  Lemma~\ref{lemma:expected_distance_bound} that $\w_t$ stays close to $\w_0$ for all $t\leq \tr$.  
\begin{framed}
\begin{lemma} \label{lemma:upper_bound_delta}
Under the setting of Lemma \ref{lemma:small_grad_large_nc_restated}, if $$\E \left[ f_{t+1} - \tilde{f} \right]  \geq - \ft,$$ then the norm of $\bdelta_t$ is bounded in expectation:
\begin{align}
\E \| \bdelta_t \| \leq \left( \frac{4\eta \ft  + 2\eta L(\ell r)^2}{(\eta \lambda)^2} + \frac{2(\ell r)^2}{\eta \lambda}\right) \rho \kappa^{t},\quad\quad \forall t\leq \tr.
\end{align} 
\end{lemma}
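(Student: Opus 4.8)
The plan is to bound $\bdelta_t = \sum_{i=1}^t (\Im - \eta\H)^{t-i}(\gf_i - \gg_i)$ term by term, convert it into a $\kappa$-weighted geometric sum of the expected squared distances $\E\|\w_i - \pw\|^2$, and then combine the distance bound of Lemma~\ref{lemma:expected_distance_bound} with the series estimates of Lemma~\ref{lemma:auxiliary}. First I would apply the triangle inequality and submultiplicativity of the operator norm to get $\|\bdelta_t\| \leq \sum_{i=1}^t \|\Im - \eta\H\|^{t-i}\,\|\gf_i - \gg_i\|$. Since $f$ is $L$-smooth we have $\|\H\| = \|\hf(\pw)\| \leq L$, and since the smallest eigenvalue of $\H$ equals $-\lambda$, the eigenvalues of $\Im - \eta\H$ lie in $[1-\eta L,\, 1+\eta\lambda]$; for $\eta \leq 1/L$ this interval sits inside $[0,\kappa]$ with $\kappa = 1+\eta\lambda$, so $\|\Im - \eta\H\|^{t-i} \leq \kappa^{t-i}$.

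Next, observe that $\gg_i = \nabla g(\w_i) = \gf(\pw) + \H(\w_i - \pw)$ is exactly the first-order Taylor model of $\gf$ at $\pw$, so $\gf_i - \gg_i$ is the gradient Taylor remainder and Lemma~\ref{lemma:quad_app} ($\rho$-Lipschitz Hessian) gives $\|\gf_i - \gg_i\| \leq \tfrac{\rho}{2}\|\w_i - \pw\|^2$. Taking expectations and using linearity yields $\E\|\bdelta_t\| \leq \tfrac{\rho}{2}\sum_{i=1}^t \kappa^{t-i}\,\E\|\w_i - \pw\|^2$. Now I plug in Lemma~\ref{lemma:expected_distance_bound}: under the hypothesis $\E[f_{t+1} - \tilde f] \geq -\ft$ — which, via the monotone per-step decrease of Lemma~\ref{lemma:decrease_f}, means the "no sufficient decrease" condition \eqref{eq:contradiction_assumption} is in force at every intermediate step $i \leq t \leq \tr$ — that lemma gives $\E\|\w_i - \pw\|^2 \leq A i + B$ with $A := 4\eta\ft + 2\eta L(\ell r)^2$ and $B := 2(\ell r)^2$. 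Hence $\E\|\bdelta_t\| \leq \tfrac{\rho}{2}\bigl(A\sum_{i=1}^t \kappa^{t-i} i + B\sum_{i=1}^t \kappa^{t-i}\bigr)$.

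Finally I apply Lemma~\ref{lemma:auxiliary} with $\beta = \eta\lambda \in (0,1)$ (which holds since $\lambda \leq \|\H\| \leq L$ and $\eta \leq 1/L$; the degenerate case $\eta\lambda = 1$ can be absorbed by a slightly smaller $\eta$ or treated directly), obtaining $\sum_{i=1}^t \kappa^{t-i} i \leq 2(\eta\lambda)^{-2}\kappa^t$ and $\sum_{i=1}^t \kappa^{t-i} \leq 2(\eta\lambda)^{-1}\kappa^t$. Substituting $A$ and $B$ and collecting constants gives precisely $\E\|\bdelta_t\| \leq \bigl(\tfrac{4\eta\ft + 2\eta L(\ell r)^2}{(\eta\lambda)^2} + \tfrac{2(\ell r)^2}{\eta\lambda}\bigr)\rho\kappa^{t}$ for all $t \leq \tr$, as claimed.

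The only genuinely delicate point is the one already flagged: the entire estimate rests on the distance bound of Lemma~\ref{lemma:expected_distance_bound}, which is valid only while the iterates have not yet achieved the target decrease, so one must be careful to argue that this condition holds uniformly over $i = 1,\dots,t$ for every $t \leq \tr$ — equivalently, that an early large decrease would already establish the conclusion of Lemma~\ref{lemma:small_grad_large_nc_restated}. The remaining ingredients (spectral-norm bound needing $\eta \leq 1/L$, series lemma needing $\eta\lambda < 1$, and the Taylor remainder bound) are routine consequences of the parameter choices in Table~\ref{tab:pgd_parameters}, so the rest of the argument is bookkeeping.
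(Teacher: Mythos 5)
Your proof is correct and follows essentially the same route as the paper: triangle inequality on the sum, spectral bound $\|\Im - \eta\H\| \leq \kappa$, the Taylor-remainder bound of Lemma~\ref{lemma:quad_app}, the distance bound of Lemma~\ref{lemma:expected_distance_bound}, and the geometric-series estimates of Lemma~\ref{lemma:auxiliary}. The one point you flag — that the "no sufficient decrease" hypothesis at step $t+1$ must propagate to the intermediate iterates $i \leq t$ so the distance bound applies — is indeed glossed over in the paper, but it does hold because the GD steps (after the first perturbation) decrease $f$ monotonically in expectation via Lemma~\ref{lemma:decrease_f}, so your parenthetical justification is the right one.
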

\end{framed}

\begin{proof}
Using the result of Lemma \ref{lemma:quad_app} as well as the distance bound established in Lemma~\ref{lemma:expected_distance_bound}, we have
\begin{equation}\label{eq:upper_bound_delta_1}
\begin{aligned}
    \E \left[ \| \bdelta_t \|  \right] & = \E \left[ \| \sum_{i=1}^t \left( \Im - \H \right)^{t-i} \left( \gg_i - \gf_i \right) \| \right]   \\ 
    & \leq \sum_{i=1}^t \| \Im - \eta \H \|^{t-i} \E \| \gg_i - \gf_i \|   \\ 
    & \leq \frac{\rho}{2}\sum_{i=1}^t \kappa^{t-i}  \E \|\w_i - \tilde{\w} \|^2  \quad\quad \text{\tiny{[]Lemma~\ref{lemma:quad_app}]}}\\
   & \leq \rho\sum_{i=1}^t \kappa^{t-i} \left( \left(2\eta \ft  + \eta L (\ell r)^2 \right) i + (\ell r)^2\right).  \quad\quad \text{\tiny{[Lemma~\ref{lemma:expected_distance_bound}]}}\\ 
   \end{aligned}
\end{equation}
Now, the results on convergence of power series derived in Lemma \ref{lemma:auxiliary} and the definition $\kappa:=1+\eta\lambda$ give
\begin{equation}\label{eq:convergence_series_on_kappa}
\sum_{i=1}^t \kappa^{t-i} \leq \frac{2\kappa^t}{\eta\lambda} \quad \text{and}\quad \sum_{i=1}^t \kappa^{t-i}i \leq \frac{ 2\kappa^t}{(\eta\lambda)^2}.
\end{equation}

By combining Equation (\ref{eq:upper_bound_delta_1}) and (\ref{eq:convergence_series_on_kappa}) we can establish the desired bound on  $\bdelta_t$:  

\begin{align}\label{eq:upper_bound_delta_2}
   \E \left[ \| \bdelta_t \|  \right]  \leq \left( \frac{4\eta \ft  + 2\eta L(\ell r)^2}{(\eta \lambda)^2} + \frac{2(\ell r)^2}{\eta \lambda}\right) \rho \kappa^{t}.
\end{align}

\end{proof}

We are now ready to combine the results of Lemma~\ref{lemma:lower_bound_u_2},~\ref{lemma:upper_bound_u}, and \ref{lemma:upper_bound_delta}, into Eq.~\eqref{eq:distance_lower_bound_expansion_2} in order to obtain the desired lower bound on the distance travelled by the iterates of PGD.
\begin{framed}
\begin{lemma}[Distance lower bound]
Under the setting of Lemma \ref{lemma:small_grad_large_nc_restated} and for each $t\leq \tr$ and for the choice of parameters as in Table \ref{tab:pgd_parameters_constraints} we have
\begin{equation}\label{eq:distance_lower_bound_final}
    \E \| \w_t - \pw \|^2 \geq \frac{ \gamma r^2 \kappa^{2t}}{4},
\end{equation}
where 
$\kappa:=1+\eta |\lambda_{\min} \left(\hf(\pw)\right)| $.
\end{lemma}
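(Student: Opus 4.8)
The plan is to combine, inside inequality~\eqref{eq:distance_lower_bound_expansion_2}, the three estimates already established: the exponentially growing lower bound $\E\|\u_t\|^2\geq\gamma r^2\kappa^{2t}$ (Lemma~\ref{lemma:lower_bound_u_2}), the deterministic upper bound $\|\u_t\|\leq\kappa^t\ell r$ (Lemma~\ref{lemma:upper_bound_u}), and the expected upper bound on $\|\bdelta_t\|$ (Lemma~\ref{lemma:upper_bound_delta}); then I check that, under the parameter choices of Table~\ref{tab:pgd_parameters_constraints}, the $\bdelta_t$-induced error is dominated by the power-iteration term. Since the whole argument lives inside the proof-by-contradiction of Lemma~\ref{lemma:small_grad_large_nc_restated}, the hypothesis~\eqref{eq:contradiction_assumption} is in force; moreover the expected function value is non-increasing after the first (perturbation) step, so $\E[f_{t+1}-\tilde f]\geq\E[f_{\tr}-\tilde f]>-\ft$, which is exactly the precondition needed to invoke Lemmas~\ref{lemma:expected_distance_bound} and~\ref{lemma:upper_bound_delta} along the whole window $t\leq\tr$.

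First I would use that the bound $\|\u_t\|\leq\kappa^t\ell r$ of Lemma~\ref{lemma:upper_bound_u} is \emph{pointwise}, so it can be pulled out of the expectation in the cross term of~\eqref{eq:distance_lower_bound_expansion_2}: $\E[\|\u_t\|\,\|\bdelta_t\|]\leq\kappa^t\ell r\,\E\|\bdelta_t\|$. Substituting this together with Lemmas~\ref{lemma:lower_bound_u_2} and~\ref{lemma:upper_bound_delta} and factoring out $\kappa^{2t}$ gives
\begin{align}
\E\|\w_t-\pw\|^2\;\geq\;\kappa^{2t}\left[\gamma r^2-2\eta\ell r\rho\left(\frac{4\eta\ft+2\eta L(\ell r)^2}{(\eta\lambda)^2}+\frac{2(\ell r)^2}{\eta\lambda}\right)\right].\nonumber
\end{align}
It then remains to show the bracketed quantity is at least $\gamma r^2/4$, i.e.\ that the subtracted term does not exceed $\tfrac34\gamma r^2$.

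For that last step I would substitute $\eta=1/L$ and expand the subtracted term into the three pieces $\frac{8\ell r\rho\ft}{\lambda^2}+\frac{4\ell^3 r^3\rho L}{\lambda^2}+\frac{4\ell^3 r^3\rho}{\lambda}$, bounding each by $\tfrac14\gamma r^2$. The curvature hypothesis~\eqref{eq:lambda_min_bound} gives $\lambda^2\geq\rho\epsilon^{4/5}$, hence $\rho/\lambda^2\leq\epsilon^{-4/5}$; feeding the constraint $\ft\leq\gamma\epsilon^{4/5}r/(32\ell)$ into the first piece and the constraint $r\leq\gamma\epsilon^{4/5}/(16L\ell^3)$ into the second piece each yields $\tfrac14\gamma r^2$. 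For the third piece I use $\rho/\lambda\leq\sqrt\rho/\epsilon^{2/5}$ together with $\sqrt\rho\epsilon^{2/5}\leq L$ (forced by $L$-smoothness, since $|\lambda_{\min}(\hf(\w))|\leq L$, so that~\eqref{eq:lambda_min_bound} is consistent only in this regime) and $L,\ell,\rho\geq1$; combined with the same bound on $r$ and the explicit value of the absolute constant $c_1$ from Table~\ref{tab:pgd_parameters_constraints}, this piece is again at most $\tfrac14\gamma r^2$. Summing the three contributions gives the claimed bound~\eqref{eq:distance_lower_bound_final}.

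The work here is essentially bookkeeping rather than conceptual; the one place that needs care is the $(\ell r)^2/\lambda$ piece, since $r=\bigo(\epsilon^{4/5})$ while we only divide by $\lambda=\Omega(\epsilon^{2/5})$, so getting it below $\tfrac14\gamma r^2$ is what pins down the numerical value of $c_1$ and forces the mild consistency constraint $\sqrt\rho\epsilon^{2/5}\leq L$. Once~\eqref{eq:distance_lower_bound_final} is in hand, the proof of Lemma~\ref{lemma:small_grad_large_nc_restated} concludes by confronting this exponential-in-$t$ lower bound with the polynomial-in-$t$ upper bound of Lemma~\ref{lemma:expected_distance_bound}, which the choice of $\tr$ in the table makes impossible at $t=\tr$, thereby contradicting~\eqref{eq:contradiction_assumption}.
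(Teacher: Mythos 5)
Your proof is correct and follows the same route as the paper: you substitute the three auxiliary bounds into Eq.~\eqref{eq:distance_lower_bound_expansion_2}, expand the $\bdelta_t$-contribution into the same three pieces $\tfrac{8\rho\ell\ft}{\lambda^2}$, $\tfrac{4\rho L\ell^3 r^2}{\lambda^2}$, $\tfrac{4\rho\ell^3 r^2}{\lambda}$, and bound each by a quarter of $\gamma r$ using the parameter constraints and $\lambda\leq L$ (the paper phrases the last step as the second constraint dominating the third, which is the same observation as your $\sqrt\rho\epsilon^{2/5}\leq L$). The only quibble is narrative: the value of $c_1$ is actually pinned down by the $\tfrac{4\rho L\ell^3 r^2}{\lambda^2}$ term rather than the $\tfrac{4\rho\ell^3 r^2}{\lambda}$ term, since the latter constraint is subsumed by the former once $\lambda\leq L$.
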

\end{framed}

\begin{proof}
To prove this statement we introduce each bound in Eq.~\eqref{eq:distance_lower_bound_expansion_2} step by step:
\begin{equation}
\begin{aligned} 
\E \| \w_t - \pw \|^2 & \geq \E \| \u_t \|^2 - 2 \eta \E \left[\| \u_t \| \| \bdelta_t \| \right] 
 \\ 
& \geq \gamma r^2 \kappa^{2t} - 2 \eta \E \left[\| \u_t \| \| \bdelta_t \| \right] \quad\quad \text{\tiny{[Lemma~\ref{lemma:lower_bound_u_2}]}} \\ 
& \geq \gamma r^2 \kappa^{2t} - 2 \eta \ell r \kappa^t \E \| \bdelta_t \| \quad\quad \text{\tiny{[Lemma \ref{lemma:upper_bound_u}]}} \\ 
& \geq \left(\gamma r - \frac{8 \rho \ell \ft  + 4 \rho L \ell^3 r^2 }{\lambda^2} - \frac{4 \rho \ell^3 r^2}{\lambda} \right) r \kappa^{2t}\label{eq:lower_bound_distance_final}  \quad\quad \text{\tiny{[Lemma~\ref{lemma:upper_bound_delta}]}}
\end{aligned} 
\end{equation}

We need the lower bound to be positive to complete the proof. In this regard, we require the following condition to hold,
\begin{align} \label{eq:reqs}
\gamma r - \underbrace{\frac{8\rho\ell \ft }{\lambda^2}}_{\leq \gamma r/4} - \underbrace{\frac{ 4L \rho \ell^3 r^2 }{\lambda^2}}_{\leq \gamma  r/4} - \underbrace{\frac{4\rho \ell^3 r^2}{\lambda}}_{\leq \gamma r/4} \stackrel{!}{\geq}  \frac{\gamma r}{4}.
\end{align}
Using the lower bound the absolute value of the minimum eigenvalue as $\lambda \geq \sqrt{\rho} \epsilon^{2/5}$ (in Eq.~\eqref{eq:lambda_min_bound}), we choose parameters $r$, $\ft$, and $\gt$ such that the above constraints are satisfied,\footnote{Note that the second requirement in (\ref{eq:reqs}) is always more restrictive than the last since $\lambda<L$.} i.e.
\begin{align} 
r & \leq \gamma  \epsilon^{4/5}/(16  L\ell^3)  \leq \gamma \lambda^2/(16\rho L\ell^3) \stackrel{[\lambda<L]}{\leq} (\gamma \lambda)/(16\rho\ell^3) \label{eq:pgd_r_constraint}\\
\ft & \leq \gamma  \epsilon^{4/5}r/(32\ell) \leq \gamma \lambda^2 r/(32\rho \ell)  \label{eq:pgd_ft_constraint}
\end{align}

These choices of parameters establish an exponential lower bound on the distance as 
\begin{equation*}
\E \| \w_t - \pw \|^2 \geq \frac{\gamma r^2 \kappa^{2t}}{4}.
\end{equation*}
\end{proof}

According to the result of Lemma~\ref{lemma:expected_distance_bound}, if the expected distance from the initial parameter is sufficiently large, then the assumption $\E \left[ f_t- \tilde{f}  \right] > - \ft $ cannot be valid. To derive the contradiction, we have to choose the number of step such that the established lower-bound exceeds the upper-bound in Lemma~\ref{lemma:expected_distance_bound}, namely 
\begin{align*} 
\frac{1}{4} \gamma r^2 \kappa^{2t} \stackrel{?}{\geq} 2  \left(2\eta \ft  + \eta L(\ell r)^2  \right) t + 2 (\ell r)^2.
\end{align*}
Since the left hand side is exponentially growing, we can derive the contradiction by choosing a large enough number of steps as: 
\begin{align} \label{eq:pgd_tr_constraint} 
\tr \geq c(\eta\lambda)^{-1} \log\left(\ell L /(\gamma r)\right) \geq c L(\sqrt{\rho} \epsilon^{2/5})^{-1} \log(\ell L/(\gamma r))),
\end{align} 
which completes the proof of Lemma \ref{lemma:small_grad_large_nc_restated}.
\begin{flushright}
$\square$
\end{flushright}

\subsection{Moderate negative curvature regime}
\begin{framed}
\begin{lemma} [Restate of Lemma~ \ref{lemma:small_grad_small_nc}] \label{lemma:small_grad_small_nc_restrated}
Let Assumption~\ref{ass:CNC} and~\ref{ass:smoothness} hold. Consider perturbed gradient steps (Algorithm \ref{alg:CNC-PGD} with parameters as in Table~\ref{tab:pgd_parameters_constraints}) starting from ${\pw}_t$ such that $\|\gf(\pw_t)\|^2 \leq \gt$. Then after $\tr$ iterations, the function value cannot increase by more than 
 \begin{align*} 
 \E \left[ f(\w_{t+\tr})\right]  - f(\pw_t) \leq \frac{ \eta\delta \ft}{4}, 
 \end{align*}
 where the expectation is over the sequence $\{\w_k\}_{t+1}^{t+\tr}$.
 \end{lemma}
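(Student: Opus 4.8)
\textit{Proof proposal.} The plan is to exploit that, in the $\tr$-iteration window starting at $\pw_t$, Algorithm~\ref{alg:CNC-PGD} takes exactly one perturbation step — the first one, $\w_{t+1}=\pw_t-r\gf_\z(\pw_t)$ — followed by $\tr-1$ ordinary gradient-descent steps, since the guard $t-\tn\geq\tr$ rules out a second perturbation inside the window. Consequently the only source of randomness, and the only step capable of increasing $f$, is that first step, and everything reduces to bounding its effect.

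First I would control the perturbation step with Lemma~\ref{lemma:decrease_f_sgd} (applied with step size $r$), which yields $\E[f(\w_{t+1})]-f(\pw_t)\leq -r\|\gf(\pw_t)\|^2+\tfrac{L}{2}(\ell r)^2\leq\tfrac{L}{2}(\ell r)^2$ after dropping the nonpositive gradient term; note the hypothesis $\|\gf(\pw_t)\|^2\leq\gt$ is not even needed in this direction. Next I would observe that each of the remaining updates $\w_{k+1}=\w_k-\eta\gf(\w_k)$ with $\eta=1/L$ is a plain GD step on an $L$-smooth function, so Lemma~\ref{lemma:decrease_f} makes $f(\w_{k+1})-f(\w_k)\leq-\tfrac{\eta}{2}\|\gf(\w_k)\|^2\leq 0$ hold pointwise, hence in expectation. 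Telescoping the one-step bounds over the window then gives $\E[f(\w_{t+\tr})]-f(\pw_t)\leq\tfrac{L}{2}(\ell r)^2$, and plugging in the parameter values of Table~\ref{tab:pgd_parameters_constraints} — in particular $\eta=1/L$ and the lower bound $\ft\geq 2L^2(\ell r)^2/\delta$ (Eq.~\eqref{eq:pgd_ft_lowerbound}) — turns this into $\tfrac{L}{2}(\ell r)^2\leq\tfrac{\eta\delta}{4}\ft$, which is exactly the asserted inequality.

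The estimation here is routine; the only points needing care are bookkeeping rather than analysis. I would make sure the window argument is airtight — that no second perturbation is triggered among $\w_{t+1},\dots,\w_{t+\tr-1}$ — so that the telescoped sum really contains a single stochastic term. I would also verify the parameter table is internally consistent: the lower bound $\ft\geq 2L^2(\ell r)^2/\delta$ needed here must be compatible with the upper bound $\ft\leq\gamma\epsilon^{4/5}r/(32\ell)$ required in the sharp-curvature regime (Eq.~\eqref{eq:pgd_ft_constraint}), i.e. the prescribed interval for $\ft$ must be nonempty for the chosen $r$, $\gt$ and target accuracy $\epsilon$. I do not expect any genuine obstacle beyond that.
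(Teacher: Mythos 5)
Your proof is correct and takes essentially the same approach as the paper: bound the single perturbation step at the start of the window via Lemma~\ref{lemma:decrease_f_sgd} (giving $L(\ell r)^2/2$), observe that the remaining $\tr-1$ updates are plain GD steps that cannot increase $f$ by Lemma~\ref{lemma:decrease_f}, and conclude via the parameter constraint $\ft\geq 2L^2(\ell r)^2/\delta$ together with $\eta=1/L$. Your write-up is actually somewhat more explicit than the paper's terse two-line proof, in particular in spelling out the window/guard bookkeeping, but no new idea is introduced.
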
 
 \end{framed}

 \begin{proof} 
 Using the resulf of lemma~\ref{lemma:decrease_f_sgd}, we bound the decrease in the function value as 
 \begin{align} \label{eq:pgd_ft_lowerbound}
 \E \left[ f(\w_1) \right] - f(\pw_t) \leq  L(\ell r)^2/2  \leq \delta \eta \ft/4
 \end{align}
 Since there is no perturbation in following $\tr$ steps, GD doesn't increase the function value in following $\tr$-steps (according to the result of lemma~\ref{lemma:decrease_f}).
 \end{proof}
 

\section{SGD analysis }

\subsection{Parameters and Constraints}
 Table~\ref{tab:sgd_parameters_with_constraints} lists the parameters of CNC-SGD presented in Algorithm~\ref{alg:cnc_sgd} together with the constraints that determines our choice of parameters. These constraints are driven by the theoretical analysis. 
\begin{table}[h!]
\footnotesize
    \centering
    \begin{tabular}{c c c c c c}
    Parameter  & Value & Dependency to $\epsilon$ & Constraint & Constraint Origin & Constant
    \\
    \hline
     $r$ & 
     $c_1\delta \gamma \epsilon^2 /(\ell^3 L)$ & $\bigo(\epsilon^{2})$ &
     $\leq \gamma \lambda^2/(24 \ell^3 L \rho)$ &
     Lemma~\ref{lemma:sgd_large_nc} (Eq.~\eqref{eq:sgd_r_upperbound}) &
     $c_1 = 1/96$
     \\
     $r$ &
    " & " &
    $\leq \delta \epsilon^2/(2 \ell^2 L)$ &
    Eq.~\eqref{eq:sgd_large_gd_condition} &  \\
    $r$ & 
     " &
     " & 
     $ \leq \left( \delta \ft/(2\ell^2 L) \right)^{1/2}$ &
     Eq.~\eqref{eq:increase_bound_small_ss_sgd} 
     & 
     \\
     $\ft$ & 
     $c_2 \delta \gamma^2 \epsilon^4 /(\ell^4 L)$ &
     $\bigo(\epsilon^{4})$ & 
     $\leq \gamma\lambda^2 r/(48 \ell \rho)$ &
     Lemma~\ref{lemma:sgd_large_nc}(Eq.~\eqref{eq:sgd_ft_bound})
     & 
     \\
     $\ft$ & 
     '' &
    '' & 
     $ \leq \epsilon^2 r/2$ &
     Eq.~\eqref{eq:decrease_f_large_gd_sgd}
     & 
     $c_2 = c_1/48$\\ 
    $\eta$ &
    $c_4 \gamma^2 \delta^2 \epsilon^5/(\ell^6 L^2)$  & $\bigo(\epsilon^{5})$ &
    $ \leq c' \gamma  r \lambda^3 /(72 L \rho \ell^3)$ & 
    Lemma~\ref{lemma:sgd_large_nc}(Eq.\eqref{eq:sgd_eta_bound_2}) & $c_4 = c' c_1/72$\\
    $\eta$ &
    ''  & '' &
    $\leq r/\sqrt{\tr}$ & 
    \\
    $\omega$ 
    & $c_3\log(\ell L/(\eta \epsilon r ))$ & 
    $\bigo(\log(1/\epsilon))$ &  & & $c_3 = c$ (Eq.\eqref{eq:sgd_tr_bound}) \\
    $\tr$ & $(\eta \epsilon )^{-1}\omega$ & $\bigo(\epsilon^{-6} \log^2(1/\epsilon))$ & $= c (\eta \lambda )^{-1} \omega$ &
    Lemma~\ref{lemma:sgd_large_nc}(Eq.\eqref{eq:sgd_tr_bound}) \\ 
    $T$ &  $2 \tr (f(\w_0) - f^*)/(\delta \ft)$  & $\bigo(\epsilon^{-10} \log^2(1/\epsilon))$  \\ 
    \hline \\
\end{tabular}
    \caption{Parameters of CNC-SGD (Restated Table~\ref{tab:sgd_parameters}). By $\lambda^2 = \rho \epsilon^2$, one can reach the value of parameters from the established upperbounds. }
    \label{tab:sgd_parameters_with_constraints}
\end{table}
\subsection{Proof of the Main Theorem}

\begin{framed}
\begin{theorem} [Restated Theorem~\ref{theorem:sgd_convergence}]
Let the stochastic gradients $\gf_\z(\w_t)$ in CNC-SGD satisfy Assumption \ref{ass:CNC} and let $f$ and $f_\z$ satisfy Assumption \ref{ass:smoothness}. Then algorithm \ref{alg:cnc_sgd} returns an $\left(\epsilon,\sqrt{\rho}\epsilon\right)$-second order stationary point with probability at least $(1-\delta)$ after $$\bigo\left( \left( \frac{L^3 \ell^{10} }{\delta^4 \gamma^4 } \right) (\epsilon^{-10}) \log^2\left(\frac{\ell L}{\epsilon\delta \gamma} \right)\right)$$ steps, where $\delta<1$.
\end{theorem}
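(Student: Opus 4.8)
The plan is to mirror the three-regime argument used for Theorem~\ref{theorem:pgd}, but with every gradient step replaced by a stochastic gradient step and with the periodic step-size enlargement (a single step of size $r$ every $\tr$ iterations, followed by $\tr-1$ steps of size $\eta\ll r$) playing the role of the explicit perturbation. Partition the $T$ iterations into epochs of length $\tr$. I would show that, in expectation, one epoch decreases $f$ by at least $\ft$ whenever the current iterate $\pw_t$ lies in the ``bad'' set $\S := \{\w : \|\gf(\w)\|^2\ge\epsilon^2 \text{ or } \lambda_{\min}(\hf(\w))\le-\sqrt{\rho}\epsilon\}$, and increases $f$ by at most $\delta\ft/2$ otherwise. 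An amortized per-step accounting then gives conditional bounds $\E[f(\w_{t+1})-f(\w_t)\mid\A_t]\lesssim-\ft/\tr$ and $\E[f(\w_{t+1})-f(\w_t)\mid\A_t^c]\lesssim\delta\ft/\tr$ with $\A_t=\{\w_t\in\S\}$; telescoping over $T$ steps, rearranging, and using $T=2\tr(f(\w_0)-f^*)/(\delta\ft)$ yields $\tfrac1T\sum_t\P_t\le\delta$, so a uniformly random iterate is an $(\epsilon,\sqrt{\rho}\epsilon)$-second-order stationary point with probability at least $1-\delta$ --- this last part is verbatim the probabilistic bound of Theorem~\ref{theorem:pgd}. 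Plugging the Table~\ref{tab:sgd_parameters_with_constraints} values $r=\bigo(\epsilon^2)$, $\eta=\bigo(\epsilon^5)$, $\ft=\bigo(\epsilon^4)$, $\tr=\bigo(\epsilon^{-6}\log^2(1/\epsilon))$ into $T$ produces the claimed $\bigo(L^3\ell^{10}\delta^{-4}\gamma^{-4}\epsilon^{-10}\log^2(\ell L/(\epsilon\delta\gamma)))$ step count.

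For the regimes themselves: in the \textbf{large-gradient} case Lemma~\ref{lemma:decrease_f_sgd} gives $\E[f(\w_{t+1})]-f(\w_t)\le-\eta\|\gf(\w_t)\|^2+L\eta^2\ell^2/2\le-\eta\epsilon^2/2$ once $\eta\le\epsilon^2/(L\ell^2)$ (automatic for $\eta=\bigo(\epsilon^5)$), and the single $r$-step in the epoch costs at most $L(\ell r)^2/2$, which is $\le\delta\ft/2$ by the constraint $r\le(\delta\ft/(\ell^2L))^{1/2}$; summing gives a net epoch decrease of order $\eta\epsilon^2\tr\asymp\ft$. In the \textbf{small-gradient, moderate-curvature} case the $r$-step inflates $f$ by at most $L(\ell r)^2/2\le\delta\ft/2$ (Lemma~\ref{lemma:decrease_f_sgd}) and the subsequent $\eta$-steps cannot increase $f$ in expectation since $\eta\le1/L$. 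The real content is the \textbf{small-gradient, sharp-curvature} case --- the stochastic analogue of Lemma~\ref{lemma:small_grad_large_nc_restated}, namely Lemma~\ref{lemma:sgd_large_nc}: starting from $\pw_t$ with $\|\gf(\pw_t)\|^2\le\gt$ and $\lambda_{\min}(\hf(\pw_t))\le-\sqrt{\rho}\epsilon$, an epoch decreases $f$ by at least $\ft$ in expectation.

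I expect Lemma~\ref{lemma:sgd_large_nc} to be the main obstacle. I would argue by contradiction exactly as before: assuming $\E[f(\w_{\tr})-f(\pw)]>-\ft$ forces $\sum_t\E\|\gf(\w_t)\|^2$ to be small, and then Lemma~\ref{lemma:decrease_f_sgd} --- now carrying an extra $L\eta^2\ell^2\tr$ variance term --- bounds $\E\|\w_t-\pw\|^2$ for all $t\le\tr$; closeness to $\pw$ licenses the stale Taylor expansion of $\gf$ at $\pw$, and unrolling the recursion gives $\w_{t+1}-\pw=\u_t+\eta(\bdelta_t+\d_t)+\eta\bzeta_t$, with $\u_t=(\Im-\eta\H)^t(\w_1-\pw)$ the power-iteration term \emph{seeded by the $r$-step}, $\bdelta_t$ the accumulated stale-Taylor error, $\d_t$ the initial-gradient term (still nonnegatively contributing, by the argument of Lemma~\ref{lemma:removing_initial_gradient_dependency}), and $\bzeta_t=\sum_{i\le t}(\Im-\eta\H)^{t-i}(\gf(\w_i)-\gf_\z(\w_i))$ a \emph{new} martingale term that is entirely absent in the PGD proof (where steps $2,\dots,\tr$ are deterministic). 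The two genuinely new pieces of work are: (i) a CNC-based lower bound $\E\|\u_t\|^2\ge\gamma r^2\kappa^{2t}$ with $\kappa:=1+\eta\lambda$, which has the same form as Lemma~\ref{lemma:lower_bound_u_2} but is scaled by $r^2$ precisely because the large step amplifies the variance along the most-negative-curvature eigenvector; and (ii) controlling the martingale term, where an orthogonality/second-moment estimate gives $\eta^2\E\|\bzeta_t\|^2\le\eta^2\ell^2\sum_{i\le t}\kappa^{2(t-i)}\lesssim\eta^2\ell^2\kappa^{2t}/(\eta\lambda)$, which is $\ll\gamma r^2\kappa^{2t}$ exactly when $\eta\le r/\sqrt{\tr}$ together with $\eta\lesssim\gamma r\lambda^3/(L\rho\ell^3)$ --- this is what forces the very small $\eta=\bigo(\epsilon^5)$ and hence the $\epsilon^{-6}$ epoch length. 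Making all three error terms ($\eta\|\bdelta_t\|$, $\eta\|\d_t\|$, $\eta\|\bzeta_t\|$) beaten by $\gamma r^2\kappa^{2t}/4$ and then choosing $\tr\ge c(\eta\lambda)^{-1}\log(\ell L/(\eta\epsilon r))$ so the exponentially growing lower bound overtakes the $\bigo(\eta(\ft+L(\ell r)^2)t+(\ell r)^2+L\eta^2\ell^2 t)$ upper bound on $\E\|\w_t-\pw\|^2$ gives the contradiction; the rest is bookkeeping of the constants $c_1,c_2,c_3$ in Table~\ref{tab:sgd_parameters_with_constraints}.
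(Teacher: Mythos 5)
Your high-level plan --- partition into epochs, show expected decrease $\ge\ft$ on the bad set and bounded increase $\le\delta\ft/2$ off it, then the uniformly-random-iterate probabilistic bound --- is exactly the paper's route, and your sketch of the sharp-curvature regime (contradiction via a distance bound, stale Taylor decomposition $\u_t+\eta(\bdelta_t+\d_t+\bzeta_t)$, CNC lower bound $\E\|\u_t\|^2\ge\gamma r^2\kappa^{2t}$ seeded by the $r$-step, orthogonality to dispose of the $\bzeta_t$ cross term, then $\tr\ge c(\eta\lambda)^{-1}\log(\cdot)$ to force the contradiction) matches Lemmas~\ref{lemma:sgd_large_nc}, \ref{lemma:expected_distance_bound_sgd} and \ref{lemma:delta_bound_sgd} closely.

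Your large-gradient accounting, however, is wrong in a way that would not survive scrutiny. You claim each small $\eta$-step obeys $\E[f(\w_{t+1})]-f(\w_t)\le-\eta\|\gf(\w_t)\|^2+L\eta^2\ell^2/2\le-\eta\epsilon^2/2$, but $\|\gf(\w_t)\|\ge\epsilon$ is only known at the epoch anchor $\pw_t=\w_{k\tr}$. Over the epoch the iterate can travel up to $\tr\eta\ell=\bigo(\epsilon^{-1}\ell)$, so the gradient at the intermediate iterates need not stay large, and neither the per-step bound nor the claimed net decrease $\eta\epsilon^2\tr\asymp\ft$ follows. The paper instead draws the \emph{entire} guaranteed decrease from the single large step taken at $\pw_t$, precisely where the gradient is verified large: Lemma~\ref{lemma:decrease_f_sgd} there gives $-r\|\gf(\pw_t)\|^2$, the nonpositive terms $-\eta\E\|\gf(\w_i)\|^2$ from the remaining steps are dropped, the total noise contributes at most $L\ell^2(r^2+\tr\eta^2)/2\le L\ell^2 r^2$ under $\tr\eta^2\le r^2$, and $\epsilon^2\ge 2\ell^2 Lr/\delta$ together with $\ft\le r\epsilon^2/2$ yields $\E[f(\w_{t+\tr})]-f(\pw_t)\le-r\epsilon^2/2\le-\ft$ (Eq.~\eqref{eq:decrease_f_large_gd_sgd}). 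A related slip appears in your moderate-curvature case: a small-$\eta$ SGD step \emph{can} increase $f$ in expectation by $L\eta^2\ell^2/2$ --- the hypothesis $\eta\le 1/L$ does not control the variance term --- and again it is $\tr\eta^2\le r^2$ that caps the accumulated increase by $L\ell^2 r^2\le\delta\ft/2$. Once you flip the bookkeeping so that the $r$-step at $\pw_t$ is the source of decrease rather than a cost, and the $\eta$-steps are the cost rather than the benefit, your argument coincides with the paper's.
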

\end{framed}

\begin{proof}

We decompose the SGD step as
\begin{align} \label{eq:sgd_step_expansion}
\pw_{t} & = \w_{t-1} - r \xi_t \quad \text{\tiny{[Large Step-Size]}} \\
\w_{t+1} & = \w_t - \eta \gf(\w_t) + \eta \underbrace{(\gf(\w_t) - \gf_\z(\w_t))}_{\zeta_t}, \quad \text{\tiny{[Small Step-Size]}} 
\end{align} 
where the noise term $\zeta_t$s are i.i.d and zero-mean and the noise term $\xi_t$ satisfies CNC assumption~\ref{ass:CNC}. Our analysis relies on the CNC assumption only at steps with large step-size $r$. That is, we only exploit the negative curvature in the steps with a large step size. In this regard, we need to use the larger step size $r>\eta$ in these steps. This is different from Perturbed SGD -- with isotropic noise-- ~\cite{ge2015escaping} where the variance of perturbations in all steps is exploited in the analysis.  
\paragraph{Large gradient regime:}
 If the norm of the gradient is large, i.e. 
 \begin{align} \label{eq:sgd_large_gd_condition}
 \| \gf(\pw) \|^2 = \epsilon^2   \geq  2 \ell^2 L r/\delta, \quad \quad \text{[The choice of $r$ in Table~\ref{tab:sgd_parameters_with_constraints}]}
 \end{align}
 then the result on the one step convergence of SGD in Lemma~\ref{lemma:decrease_f_sgd} guarantees the desired decrease
 \begin{equation}
 \begin{aligned}
    \E \left[ f(\w_\tr) - f(\pw) \right] & = \sum_{i=0}^{{\tr}-1} \E \left[ \E  \left[ f(\w_{i+1}) - f(\w_{i}) | \w_i \right] \right] \\ 
    & = \E \left[ \E  \left[ f(\w_{1}) - f(\w_{0}) | \w_0 \right] \right] + \sum_{i=1}^{{\tr}-1} \E \left[ \E  \left[ f(\w_{i+1}) - f(\w_{i}) | \w_i \right] \right] \\
    & \stackrel{\text{lemma~\ref{lemma:decrease_f_sgd}}}{\leq} -r \| \nabla f(\pw) \|^2  - \sum_{i=1}^{{\tr}-1} r \E \| \gf_{\z_{r}} (\w_i) \|^2 + L r^2 l^2/2 + \tr L \eta^2 \ell^2/2 \\ 
    & \leq -r \| \nabla f(\pw) \|^2 + L \ell^2/2 \left( r^2 + \tr \eta^2 \right) \label{eq:function_value_decrease}
\end{aligned}
\end{equation}
The choice of $r$ and $\eta$ in Table~\ref{tab:sgd_parameters_with_constraints} ensures that 
\begin{align} \label{eq:r_eth_tr_sgd}
    \tr \eta^2 \leq r^2 
\end{align}
The above constraint simplifies the established upperbound on the function value decrease as  
\begin{equation} \label{eq:decrease_f_large_gd_sgd}
\begin{aligned}
 \E \left[ f(\w_{\tr}) - f(\pw) \right]  & \leq - r  \| \nabla f(\pw) \|^2 + L \ell^2 r^2 \\ 
 & \leq - \frac{1}{2} r \| \nabla f(\pw) \|^2 + r \left( L\ell^2 r  -\| \nabla f(\pw) \|^2/2  \right) \\ 
 & \stackrel{\eqref{eq:sgd_large_gd_condition}}{\leq} - \frac{1}{2} r \| \nabla f(\pw) \|^2 \\ 
 \leq & - \ft 
\end{aligned}
\end{equation}
where the last step is the result of the choice of parameters in Table~\ref{tab:sgd_parameters_with_constraints}.

\paragraph{Sharp curvature regime:} When the minimum eigenvalue is significantly less than zero, SGD steps with a large step-size $r$ provides enough variance for following SGD steps --with a smaller step size $\eta$-- to exploit the negative curvature direction. This statement is formally proved in the next lemma. 
\begin{framed}
\begin{lemma} [Negative curvature exploration by CNC-SGD]\label{lemma:sgd_large_nc}
Suppose Assumptions \ref{ass:CNC} and \ref{ass:smoothness} hold.
If the Hessian matrix at $\pw_t$ has a small negative eigenvalue, i.e.
\begin{align} 
\lambda_{\min} (\hf(\pw_t)) \leq -  \sqrt{\rho}\epsilon.
\end{align}
Then $\tr$ iterations with the small step-size $\eta$ yields the following decrease in the function value in expectation  
\begin{align} 
\E \left[ f(\w_{t+\tr}) \right] - f(\pw_t) \leq - \ft,
\end{align} 
where the expectation is taken over the sequence $\{\w_k\}_{t+1}^{t+k}$. 
\end{lemma}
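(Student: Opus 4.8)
}
The plan is to replay the contradiction argument used for the PGD sharp-curvature lemma (Lemma~\ref{lemma:small_grad_large_nc_restated}), the only new ingredient being that every step after the initial large one is itself a stochastic step and therefore injects additional variance. Without loss of generality set $t=0$ and write $\xi:=\gf_\z(\pw_0)$, $\H:=\hf(\pw_0)$, $\lambda:=|\lambda_{\min}(\H)|\geq\sqrt{\rho}\epsilon$ and $\kappa:=1+\eta\lambda$; recall from \eqref{eq:sgd_step_expansion} that $\w_1=\pw_0-r\xi$ (large step, size $r$) and $\w_{i+1}=\w_i-\eta\gf(\w_i)+\eta\zeta_i$ for $i\geq1$ (small steps, size $\eta$), where $\zeta_i:=\gf(\w_i)-\gf_\z(\w_i)$ is zero-mean given $\w_i$, and that we will invoke the CNC assumption (Assumption~\ref{ass:CNC}) only at the first step. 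I would assume, for contradiction, that
\begin{align}\label{eq:sgd_nc_contra}
\E\left[ f(\w_{\tr}) - f(\pw_0) \right] > -\ft .
\end{align}

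First I would upper-bound the distance travelled. Telescoping the one-step SGD descent inequality of Lemma~\ref{lemma:decrease_f_sgd} over the large step and the $\tr-1$ small steps and inserting \eqref{eq:sgd_nc_contra} gives $\sum_{i=1}^{\tr-1}\E\|\gf(\w_i)\|^2\leq\big(2\ft+L\ell^2r^2+\tr L\ell^2\eta^2\big)/\eta$; the extra term $\tr L\ell^2\eta^2$, which has no PGD analogue, is absorbed by the constraint $\tr\eta^2\leq r^2$ from Table~\ref{tab:sgd_parameters_with_constraints}. Splitting $\w_t-\pw_0$ through $\w_1$, bounding $\|\w_1-\pw_0\|\leq r\ell$, and writing $\w_t-\w_1=-\eta\sum_{i<t}\gf(\w_i)+\eta\sum_{i<t}\zeta_i$, the deterministic part is controlled by the previous display and the martingale part by $\E\|\sum_{i<t}\zeta_i\|^2=\sum_{i<t}\E\|\zeta_i\|^2\leq t\ell^2$; this yields, as in Lemma~\ref{lemma:expected_distance_bound}, an upper bound of the form $\E\|\w_t-\pw_0\|^2=O\big(\eta\ft\, t+\eta L\ell^2r^2t+\eta^2\ell^2t+r^2\ell^2\big)$ for all $t\leq\tr$, i.e.\ growing only linearly in $t$.

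Next, since the iterates stay close to $\pw_0$, I would invoke the stale quadratic model $g(\w)=f(\pw_0)+(\w-\pw_0)^\top\gf(\pw_0)+\tfrac12(\w-\pw_0)^\top\H(\w-\pw_0)$ together with the gradient-approximation bound of Lemma~\ref{lemma:quad_app}, and unroll the recursion exactly as in \eqref{eq:step_expansion}; compared to PGD there is one new term:
\begin{align}\label{eq:sgd_nc_unroll}
\w_{t+1}-\pw_0 = \u_t + \eta(\bdelta_t+\d_t) + {\bf n}_t,\qquad {\bf n}_t := \eta\sum_{i=1}^{t}(\Im-\eta\H)^{t-i}\zeta_i,
\end{align}
with $\u_t,\bdelta_t,\d_t$ as in Table~\ref{tab:pgd_expansion_vectors} evaluated at $\w_1-\pw_0=-r\xi$. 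For the lower bound I would use $\|a+b\|^2\geq\|a\|^2+2a^\top b$ with $a=\u_t$ and $b=\eta\bdelta_t+\eta\d_t+{\bf n}_t$ and take expectations. The term $\E[\u_t]^\top\d_t\geq0$ follows verbatim from the argument of Lemma~\ref{lemma:removing_initial_gradient_dependency} ($\E[\w_1-\pw_0]=-r\gf(\pw_0)$ and $\Im-\eta\H\succeq0$ for $\eta\leq1/L$), and the genuinely new cross term vanishes: $\u_t=(\Im-\eta\H)^t(\w_1-\pw_0)$ is measurable with respect to the randomness of the first step, which precedes every small-step sample, so conditioning on $\w_i$ and using $\E[\zeta_i\mid\w_i]=0$ gives $\E[\u_t^\top{\bf n}_t]=0$. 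It then remains to reuse the three PGD estimates: $\E\|\u_t\|^2\geq\gamma r^2\kappa^{2t}$ (Lemma~\ref{lemma:lower_bound_u_2}, with $\kappa=1+\eta\lambda$ and $\E[(\v^\top\xi)^2]\geq\gamma$), $\|\u_t\|\leq\kappa^tr\ell$ (Lemma~\ref{lemma:upper_bound_u}), and $\E\|\bdelta_t\|\leq C\rho\kappa^t$ obtained by feeding the distance bound of the previous paragraph into Lemma~\ref{lemma:upper_bound_delta} and summing the geometric series via Lemma~\ref{lemma:auxiliary}, where $C$ now carries the additional $\eta$- and $\tr$-dependent pieces of that bound. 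Combining as in \eqref{eq:lower_bound_distance_final}--\eqref{eq:reqs}, the negative corrections are dominated by $\tfrac14\gamma r$ provided $r$, $\ft$ and $\eta$ satisfy the upper bounds listed in Table~\ref{tab:sgd_parameters_with_constraints} (the bound on $\ft$ controls the $\eta\ft t$ piece, the bound on $r$ the constant $r^2\ell^2$ piece, and the extra smallness of $\eta$ together with $\tr\eta^2\leq r^2$ the noise-variance pieces), so that $\E\|\w_t-\pw_0\|^2\geq\tfrac14\gamma r^2\kappa^{2t}$ for all $t\leq\tr$.

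Finally, the lower bound grows like $(1+\eta\lambda)^{2\tr}$ while the upper bound is linear in $\tr$, so choosing $\tr=c(\eta\lambda)^{-1}\log(\ell L/(\eta\epsilon r))$ -- which is exactly the value in Table~\ref{tab:sgd_parameters_with_constraints} once $\lambda\geq\sqrt{\rho}\epsilon$ -- makes the lower bound exceed the upper bound and contradicts \eqref{eq:sgd_nc_contra}; hence $\E[f(\w_{\tr})]-f(\pw_0)\leq-\ft$. I expect the main obstacle to be precisely the handling of the small-step noise in this last stage: one must simultaneously check that ${\bf n}_t$ does not destroy the lower bound (the vanishing of $\E[\u_t^\top{\bf n}_t]$, which is where it matters that CNC is used only at the large step) and that its variance contribution to $\E\|\bdelta_t\|$ and to the distance bound stays negligible, which is what forces the new smallness condition on $\eta$ and the coupling $\tr\eta^2\leq r^2$; the rest is a routine adaptation of the PGD bookkeeping.
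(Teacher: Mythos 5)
Your proposal reproduces the paper's proof essentially step for step: same contradiction hypothesis, same distance upper bound fed through the stale quadratic model, same four-term decomposition $\u_t+\eta\bdelta_t+\eta\d_t+\eta\bzeta_t$ (your ${\bf n}_t=\eta\bzeta_t$), the same vanishing of the new cross term $\E[\u_t^\top\bzeta_t]$ via the martingale/zero-mean argument (the paper packages this as Lemma~\ref{lemma:zero_covariance_law}), reuse of Lemmas~\ref{lemma:removing_initial_gradient_dependency}, \ref{lemma:lower_bound_u_2}, \ref{lemma:upper_bound_u}, and the same final parameter-tuning contradiction. The only cosmetic difference is that you absorb the $\tr L\ell^2\eta^2$ term via $\tr\eta^2\leq r^2$ already at the gradient-sum stage rather than carrying it explicitly through the distance and $\bdelta_t$ bounds as the paper does, which does not change the argument.
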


\end{framed}

\paragraph{Moderate curvature and gradient regime:} Suppose that the minimum eigenvalue of the Hessian is quite small and visited gradients has also a small norm. In this case, we need to bound increase in the function caused by the variance of SGD. The established upperbound in Eq.~\ref{eq:decrease_f_large_gd_sgd} bounds the total increase after $\tr$ iterations as
\begin{equation} \label{eq:increase_bound_small_ss_sgd}
\begin{aligned}
\E \left[ f(\w_{t+1}) \right] - f(\w_t) & \leq L \ell^2 r^2 \\ 
& \leq \delta \ft/2 \quad \quad \quad \quad \text{[The choice of $r$ in Table~\ref{tab:sgd_parameters_with_constraints}]}
\end{aligned} 
\end{equation}
\paragraph{Probabilistic bound} The probabilistic lower bound on returning the desired second order stationary point can be derived from Eq ~\eqref{eq:decrease_f_large_gd_sgd} and \eqref{eq:increase_bound_small_ss_sgd} as well as  Lemma~\ref{lemma:sgd_large_nc} using exactly the same argument as the probabilistic argument on perturbed gradient descent. We define the event $\A_k$ as 
\begin{align*} 
\A_k := \{ \|\gf(\w_{k\tr})\| \geq \epsilon \text{ or } \lambda_{\min}(\hf(\w_{k\tr})) \leq - \sqrt{\rho} \epsilon \}.
\end{align*}
According to the result for the large gradient regime (in Eq.~\eqref{eq:decrease_f_large_gd_sgd}) and the large curvature result (in Lemma~\ref{lemma:sgd_large_nc}), the function value decreases as 
\begin{align*} 
\E \left[ f(\w_{(k+1)\tr}) - f(\w_{k\tr}) | \A_k \right] \leq -\ft. 
\end{align*}
in $\tr$ itetations of SGD. 
In all other iterates, the increase of the function value due to the stochastic gradient steps is controlled by using our choice of steps sizes, according to the result of Eq.~\eqref{eq:increase_bound_small_ss_sgd}: 
\begin{align*} 
\E \left[ f(\w_{(k+1)\tr}) - f(\w_{k\tr}) | \A_k^c \right] \leq \delta \ft/2.
\end{align*}
Let $\P_t$ is the probability associated with $\A_t$, hence $1-\P_t$ is the probability associated with its complement event $\A_t^c$. Note that computing the probabilities $\P_t$ is very hard due to the dependency of $\w_t$ to all stochastic gradient steps before iteration $t$. Plugging these probabilities into the above conditional expectation results yields
\begin{align*} 
\E \left[ f(\w_{(k+1)\tr}) - f(\w_{k\tr})  \right] \leq (1-\P_k) \delta \ft/2 - \P_k \ft.
\end{align*}
Summing the above inequalities over the $K = \lfloor T/\tr\rfloor$ steps obtains the following upper-bound on the average of $\P_t$s
\begin{align*} 
\frac{1}{K}\sum_{k=1}^{K} \P_k & \leq \frac{f(\w_0) - f^*}{K\ft} + \frac{\delta}{2} \\ 
& \leq \delta \quad \quad \quad \quad \text{[The choice of the number of steps $T = K \tr$ in Table~\ref{tab:sgd_parameters_with_constraints}.]}
\end{align*}
The above bound allows us to lower-bound the probability of retrieving an $(\epsilon,\sqrt{\rho}\epsilon)$-second order stationary point (which is equivalent to the occurrence of the complement event $\A_k^c$) uniformly over $T$ steps: 
\begin{align*} 
\sum_{k=1}^K (1-\P_k)/K \geq 1-\delta.
\end{align*}
This concludes the proof of the convergence guarantee of CNC-SGD under Assumption~\ref{ass:CNC}. 
\end{proof} 
\subsection{Proof of the main Lemma~\ref{lemma:sgd_large_nc}}
\textbf{Preliminary} Through our analysis, we invoke the result of the following lemma repeatedly. 
\begin{framed}
\begin{lemma} \label{lemma:zero_covariance_law}
 Consider stochastic processes $X_t$ adapted to filtration $\F_t$ such that $\E \left[ X_t | \F_{t-1} \right]  = 0 $. If the random variable $Y$ is $\F_{t-1}$-measurable, then 
 \begin{align}
     \E \left[ X_t Y \right] = 0  
 \end{align}
\end{lemma}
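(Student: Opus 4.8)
The plan is to deduce the identity from two standard properties of conditional expectation: the \emph{tower property} (law of total expectation) and the \emph{``taking out what is known''} rule. First I would observe that since $Y$ is $\F_{t-1}$-measurable, conditioning on $\F_{t-1}$ treats $Y$ as a constant, so
\begin{align*}
\E\left[ X_t Y \mid \F_{t-1} \right] = Y\, \E\left[ X_t \mid \F_{t-1} \right].
\end{align*}
By the hypothesis $\E[X_t \mid \F_{t-1}] = 0$, the right-hand side is $0$ almost surely, hence $\E[X_t Y \mid \F_{t-1}] = 0$.

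Next I would apply the tower property to remove the conditioning:
\begin{align*}
\E\left[ X_t Y \right] = \E\left[ \E\left[ X_t Y \mid \F_{t-1} \right] \right] = \E[0] = 0,
\end{align*}
which is exactly the claim.

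The only point requiring any care is integrability, since both the ``taking out what is known'' rule and the tower property presuppose that the relevant random variables lie in $L^1$. In every invocation of this lemma in the paper, $X_t$ is a bounded stochastic-gradient noise term (Assumption~\ref{ass:smoothness} gives $\|\gf_\z(\w)\|\le \ell$) and $Y$ is a finite product of such bounded quantities together with deterministic matrices, so $\E|X_t Y| < \infty$ is immediate; one may therefore freely use the version of these identities valid for $X_t \in L^1$ and $Y$ bounded. With this caveat dispatched, there is no substantive obstacle: the statement is a one-line consequence of the two standard properties above, and the whole point of isolating it as a lemma is notational convenience for the repeated cross-term cancellations in the analysis of CNC-SGD.
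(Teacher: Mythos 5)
Your proof is correct and is essentially the paper's proof: both apply the tower property followed by the "taking out what is known" rule (the paper states them in the reverse order but the steps are identical). Your added note on integrability is a reasonable caveat that the paper elides, but it does not constitute a different approach.
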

\end{framed}
\begin{proof}
   The proof is straightforward: 
   \begin{align}
       \E \left[ X_t Y \right] 
       & = \E_{\F_{t-1}} \left[  \E \left[ X_t Y | \F_{t-1}\right]  \right]  \\ 
       & =  \E_{\F_{t-1}} \left[  \E \left[ X_t  | \F_{t-1}\right] Y \right]  = 0 
   \end{align}
 where the last step is due to the fact that $Y$ is $\F_{t-1}$-measurable.  
\end{proof}

\begin{framed}
\begin{lemma} [Restated Lemma~\ref{lemma:sgd_large_nc}]
Suppose Assumptions \ref{ass:CNC} and \ref{ass:smoothness} hold. If the Hessian matrix at $\pw_t$ has a small negative eigenvalue, i.e.
\begin{align} 
\lambda_{\min} (\hf(\pw_t)) \leq -  \sqrt{\rho}\epsilon.
\end{align}
Then $\tr$ iterations with the small step-size $\eta$ yields the following decrease in the function value in expectation  
\begin{align} 
\E \left[ f(\w_{t+\tr}) \right] - f(\pw_t) \leq - \ft,
\end{align} 
where the expectation is taken over the sequence $\{\w_k\}_{t+1}^{t+\tr}$. 
\end{lemma}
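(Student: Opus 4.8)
The plan is to replay the sharp-curvature contradiction argument of CNC-PGD (Lemma~\ref{lemma:small_grad_large_nc_restated}): assume the claimed decrease fails, deduce that the iterates stay close to the anchor $\pw$, expand each step through a stale quadratic model of $f$ at $\pw$, and show that the power-iteration part of the displacement grows exponentially in $t$ until the implied lower bound on $\E\|\w_t-\pw\|^2$ overtakes the closeness bound. The one genuinely new ingredient relative to CNC-PGD is that stochastic noise now enters at \emph{every} small step, not just at the single large step, so the step expansion acquires an extra accumulated-noise term whose correlations must be neutralised using the martingale lemma (Lemma~\ref{lemma:zero_covariance_law}). Without loss of generality set $t=0$, write $\pw:=\w_0$, $\H:=\hf(\pw)$, $\lambda:=|\lambda_{\min}(\H)|\ge\sqrt{\rho}\epsilon$, $\kappa:=1+\eta\lambda$, $\xi:=\gf_\z(\pw)$, so that $\w_1=\pw-r\xi$ and $\w_{i+1}=\w_i-\eta\gf(\w_i)+\eta\zeta_i$ with $\zeta_i:=\gf(\w_i)-\gf_\z(\w_i)$ for $i\ge 1$; here $\E[\zeta_i\mid\F_{i-1}]=0$ and $\|\zeta_i\|\le 2\ell$. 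Suppose, for contradiction, that $\E[f(\w_{\tr})]-f(\pw)>-\ft$.

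\emph{Closeness bound.} Summing the one-step SGD descent inequality (Lemma~\ref{lemma:decrease_f_sgd}) over the $\tr$ iterations, bounding the large step by $\E[f(\w_1)]-f(\pw)\le \tfrac{L}{2}(\ell r)^2$ and invoking the constraint $\tr\eta^2\le r^2$ from Table~\ref{tab:sgd_parameters_with_constraints}, the contradiction hypothesis yields $\eta\sum_{i=1}^{\tr-1}\E\|\gf(\w_i)\|^2\le \ft+L(\ell r)^2$. Writing $\w_t-\w_1=-\eta\sum_{i=1}^{t-1}\gf(\w_i)+\eta\sum_{i=1}^{t-1}\zeta_i$, estimating the first sum by Cauchy--Schwarz and the second by $\eta^2\sum_i\E\|\zeta_i\|^2$ (the cross terms vanish because the $\zeta_i$ form a martingale-difference sequence, Lemma~\ref{lemma:zero_covariance_law}), and adding $\E\|\w_1-\pw\|^2=r^2\E\|\xi\|^2\le(\ell r)^2$, I obtain the SGD analogue of Lemma~\ref{lemma:expected_distance_bound}: $\E\|\w_t-\pw\|^2\le a\,t+b$ for all $t\le\tr$, with $a=\bigo(\eta(\ft+L(\ell r)^2+\eta\ell^2))$ and $b=\bigo((\ell r)^2)$.

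\emph{Expansion and exponential lower bound.} The stale Taylor bound (Lemma~\ref{lemma:quad_app}) lets me write, exactly as in the PGD analysis, $\w_{t+1}-\pw=\u_t+\eta(\bdelta_t+\d_t+\mathbf{n}_t)$, where $\u_t:=(\Im-\eta\H)^t(\w_1-\pw)$, $\bdelta_t:=\sum_{i=1}^t(\Im-\eta\H)^{t-i}(\gf(\w_i)-\gg(\w_i))$ with $\gg$ the gradient of the stale quadratic model of $f$ around $\pw$, $\d_t:=-\sum_{i=1}^t(\Im-\eta\H)^{t-i}\gf(\pw)$, and $\mathbf{n}_t:=\sum_{i=1}^t(\Im-\eta\H)^{t-i}\zeta_i$ is the accumulated noise. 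From $\|x+y\|^2\ge\|x\|^2+2x^\top y$ with $x=\u_t$ and $y=\eta(\bdelta_t+\d_t+\mathbf{n}_t)$ I need: (i) $\E[\u_t^\top\d_t]\ge 0$, which holds since $\E[\u_t]=-r(\Im-\eta\H)^t\gf(\pw)$ by unbiasedness and $(\Im-\eta\H)\succeq 0$ for $\eta\le 1/L$ (as in Lemma~\ref{lemma:removing_initial_gradient_dependency}); (ii) $\E[\u_t^\top\mathbf{n}_t]=0$ by Lemma~\ref{lemma:zero_covariance_law}, because $\u_t$ and $\H$ are functions of the large step alone and hence $\F_{i-1}$-measurable for every $i\ge 1$, while each $\zeta_i$ is conditionally centred; (iii) $\E\|\u_t\|^2\ge\gamma r^2\kappa^{2t}$ and $\|\u_t\|\le\kappa^t\ell r$, which are the power-iteration estimates of Lemmas~\ref{lemma:lower_bound_u_2} and~\ref{lemma:upper_bound_u} and are precisely where Assumption~\ref{ass:CNC} enters, via $\E[(\v^\top\xi)^2]>\gamma$ and $\v^\top(\Im-\eta\H)=\kappa\,\v^\top$ for the $\lambda_{\min}$-eigenvector $\v$. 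Feeding the closeness bound into Lemma~\ref{lemma:quad_app} and summing the resulting $\kappa^{t-i}$-weighted series with Lemma~\ref{lemma:auxiliary} gives an expected bound on $\|\bdelta_t\|$ of order $\rho\kappa^t$ times $\big(a(\eta\lambda)^{-2}+b(\eta\lambda)^{-1}\big)$, so that $\E\|\w_{t+1}-\pw\|^2\ge\gamma r^2\kappa^{2t}-2\eta\ell r\kappa^t\E\|\bdelta_t\|\ge\tfrac14\gamma r^2\kappa^{2t}$, provided $r$, $\ft$, $\eta$ make each error term at most a constant fraction of $\gamma r$ — these are exactly the constraints attached to Lemma~\ref{lemma:sgd_large_nc} in Table~\ref{tab:sgd_parameters_with_constraints}.

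\emph{Conclusion and main obstacle.} The exponentially growing lower bound $\tfrac14\gamma r^2\kappa^{2t}$ overtakes the linear closeness bound $a\,t+b$ by the time $t=\tr$, since $\tr$ is chosen of order $(\eta\lambda)^{-1}\log(\ell L/(\eta\epsilon r))$ and $\lambda\ge\sqrt{\rho}\epsilon$; this contradicts the hypothesis and therefore forces $\E[f(\w_{\tr})]-f(\pw)\le-\ft$. I expect the crux to be the bookkeeping around $\mathbf{n}_t$: one must fix the filtration so that $\u_t$ — a deterministic function of the single large step — is predictable with respect to all later small steps (killing its correlation with $\mathbf{n}_t$), and one must make sure the second-moment contribution of $\mathbf{n}_t$ to the closeness bound, and hence to $\E\|\bdelta_t\|$, stays negligible against the $\kappa^{2t}$ growth; it is exactly this balance that dictates the much smaller small step $\eta=\bigo(\epsilon^5)$ against $r=\bigo(\epsilon^2)$ together with the constraint $\tr\eta^2\le r^2$.
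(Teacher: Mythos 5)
Your proposal follows the paper's proof essentially step for step: the same contradiction setup, the same closeness bound via summed one-step SGD descent with the $\tr\eta^2\le r^2$ simplification, the same four-part step expansion $\u_t+\eta(\bdelta_t+\d_t+\bzeta_t)$, and crucially the same use of Lemma~\ref{lemma:zero_covariance_law} to kill $\E[\u_t^\top\bzeta_t]$ by noting that $\u_t$ is measurable with respect to the large step alone. The power-iteration bounds, the $\E\|\bdelta_t\|$ estimate from the distance bound via Lemma~\ref{lemma:auxiliary}, and the parameter constraints you cite all match the paper's argument, so this is the paper's proof with only cosmetic differences (e.g.\ you keep the sharper $\|\zeta_i\|\le 2\ell$).
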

\end{framed}

\begin{proof}

Our analysis for the large curvature case in CNC-PGD (lemma~\ref{lemma:small_grad_large_nc_restated}) can be extended to SGD. Here, we borrow the compact notations from Lemma~\ref{lemma:small_grad_large_nc_restated}. Similar to the proof scheme of lemma~\ref{lemma:small_grad_large_nc_restated}, our proof is based on contradiction. We assume that the desired decrease in the  function value is not obtained in $\tr$ iterations, namely 
\begin{align} \label{eq:ass_contradiction_sgd}
\E \left[ f(\w_{t}) \right] - \tilde{f} \geq - \ft, \quad \forall t \leq \tr.
\end{align}
A direct result of the above assumption is that we can establish a bound on the expectation of the distance to $\pw$ for all iterates $\w_t$ such that $t<\tr$.

 \begin{framed}
 \begin{lemma} [Distance Bound for SGD]\label{lemma:expected_distance_bound_sgd}
Suppose that expectation of the decrease in function value is lower-bounded as stated in \eqref{eq:ass_contradiction_sgd}.
Then, the expectation of the distance from the current iterate to $\pw$ is bounded as 
\begin{align*} 
\E \left[ \| \w_t - \pw \|^2 \right] \leq  \left( 4\ft\eta + 2L\eta (\ell r)^2 + 4(\ell \eta)^2  + 2 L \eta^3 \ell^2 \tr \right) t+  2 (\ell r)^2,
\end{align*}  
for all $t< \tr$ as long as Assumption~\ref{ass:smoothness} holds.
\end{lemma}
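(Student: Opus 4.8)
The plan is to adapt the PGD distance bound (Lemma~\ref{lemma:expected_distance_bound}) to the stochastic setting, the only change being that every step after the first is now a stochastic gradient step. As in Lemma~\ref{lemma:small_grad_large_nc_restated} I set $t=0$ without loss of generality, so $\w_1 = \pw - r\xi$ with $\xi := \gf_\z(\pw)$ and $\w_{i+1} = \w_i - \eta\gf_{\z_i}(\w_i)$ for $i=1,\dots,\tr-1$. The first move is to control $\sum_{i=1}^{\tr-1}\E\|\gf(\w_i)\|^2$: applying Lemma~\ref{lemma:decrease_f_sgd} to the first step (step size $r$) gives $\E[f_1]-\tilde f \le L(\ell r)^2/2$, applying it to each subsequent step (step size $\eta$) gives $\E[f_{i+1}]-\E[f_i] \le -\eta\,\E\|\gf(\w_i)\|^2 + L\eta^2\ell^2/2$, and telescoping over $i=1,\dots,\tr-1$ together with the contradiction hypothesis~\eqref{eq:ass_contradiction_sgd} yields $\sum_{i=1}^{\tr-1}\E\|\gf(\w_i)\|^2 \le \ft/\eta + \tr L\eta\ell^2/2 + L(\ell r)^2/(2\eta)$.

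Next I would bound the displacement from $\w_1$. Writing $\w_t - \w_1 = -\eta\sum_{i=1}^{t-1}\gf_{\z_i}(\w_i)$ and splitting $\gf_{\z_i}(\w_i) = \gf(\w_i) + \zeta_i$ with $\zeta_i := \gf(\w_i) - \gf_{\z_i}(\w_i)$, one gets $\E\|\w_t-\w_1\|^2 \le 2\eta^2\E\|\sum_i\gf(\w_i)\|^2 + 2\eta^2\E\|\sum_i\zeta_i\|^2$. The mean part is handled by Cauchy--Schwarz, $\E\|\sum_{i=1}^{t-1}\gf(\w_i)\|^2 \le (t-1)\sum_{i=1}^{t-1}\E\|\gf(\w_i)\|^2$, and then by the bound from Step~1. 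For the noise part, $\{\zeta_i\}$ is a martingale-difference sequence for the natural filtration ($\zeta_i$ is $\F_i$-measurable and $\E[\zeta_j\mid\F_{j-1}]=0$), so Lemma~\ref{lemma:zero_covariance_law} kills all cross terms and $\E\|\sum_{i=1}^{t-1}\zeta_i\|^2 = \sum_{i=1}^{t-1}\E\|\zeta_i\|^2 \le (t-1)\ell^2$, using $\E[\|\zeta_i\|^2\mid\w_i] \le \E[\|\gf_{\z_i}(\w_i)\|^2\mid\w_i] \le \ell^2$.

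Combining these estimates with $t-1\le t$ gives $\E\|\w_t-\w_1\|^2 \le (2\eta\ft + L\eta(\ell r)^2 + 2(\ell\eta)^2 + L\eta^3\ell^2\tr)\,t$; then $\E\|\w_t-\pw\|^2 \le 2\E\|\w_t-\w_1\|^2 + 2\E\|\w_1-\pw\|^2$, and since $\|\w_1-\pw\| = r\|\xi\| \le \ell r$ holds deterministically by $\ell$-boundedness, the additive term equals $2(\ell r)^2$, which reproduces the claimed bound.

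The only genuinely new difficulty relative to the PGD proof is the control of $\E\|\sum_i\zeta_i\|^2$: one must verify that the noise increments are pairwise uncorrelated in expectation, so that the noise contribution grows linearly rather than quadratically in $t$. This is precisely where the filtration bookkeeping and Lemma~\ref{lemma:zero_covariance_law} are essential; the remaining estimates are a routine re-run of the $\ell$-boundedness and Cauchy--Schwarz arguments already used for PGD, now carrying the slightly longer list of additive terms.
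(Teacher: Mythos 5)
Your proposal is correct and follows essentially the same argument as the paper's own proof: you bound $\sum_i\E\|\gf(\w_i)\|^2$ via Lemma~\ref{lemma:decrease_f_sgd} and the contradiction hypothesis, split the telescoping increment into mean and martingale-difference parts, apply Lemma~\ref{lemma:zero_covariance_law} to kill the cross terms in $\E\|\sum_i\zeta_i\|^2$, use Cauchy--Schwarz on the mean part, and finish with the standard $\|a+b\|^2\le 2\|a\|^2+2\|b\|^2$ split at $\w_1$. The algebra also reproduces the exact constants in the stated bound, so there is nothing to flag.
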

\end{framed}

We postpone the proof of the last lemma to Section C.4. The proposed bound in the last lemma is larger than the established distance bound for PGD steps (lemma~\ref{lemma:expected_distance_bound}). This is due to the variance of the stochastic gradients. In the rest of the proof, we will construct a lower-bound that contradicts to the above upper-bound using the CNC assumption~\ref{ass:CNC}.  
To this end, we expand the SGD steps (in Eq.~\eqref{eq:sgd_step_expansion}) using the gradient of the stale Taylor expansion $g(\pw)$:
\begin{equation} \label{eq:sgd_iterates_decomposition}
\begin{aligned} 
\w_{t+1} - \pw & = \w_{t} - \pw - \eta \gf_t + \eta \zeta_t \\ 
 &  = \w_t - \pw - \eta \gg_t
 + \eta \left( \gf_t - \gg_t -\gf(\pw) + \zeta_t \right) \\ 
 & = \left( \Im - \eta \H  \right) (\w_t - \pw) + \eta  \left( \gf_t - \gg_t - \gf(\pw) + \zeta_t \right) \\ & = \u_t + \eta \left( \bdelta_t + \d_t + \bzeta_t \right) ,
 \end{aligned} 
 \end{equation}
 where the vectors $\u_t$, $\bdelta_t$, $\d_t$, and $\bzeta_t$ are defined in Table~\ref{tab:sgd_expansion_vectors}. The only new term in the expansion is the noise of the stochastic gradient steps $\bzeta_t$s. 
\begin{table}[h!]
    \centering
    \begin{tabular}{c c c c}
        Vector  & Formula & Indication & Included in PGD analysis?\\
        \hline
         $\u_t$ & $\left(\Im - \eta \H \right)^t(\w_1-\pw)$ & Power Iteration & Yes\\ 
         $\bdelta_t$ & $\sum_{i=1}^t \left( \Im - \eta \H \right)^{t-i} \left( \gf_t - \gg_t \right)$ & Stale Taylor Approximation Error & Yes \\ 
         $\d_t$ & $ -\sum_{i=1}^t \left( \Im - \eta \H \right)^{t-i} \gf(\pw)$ & Initial Gradient Dependency & Yes \\ 
         $\bzeta_t$ & $\sum_{i=1}^t \left( \Im - \eta \H  \right)^{t-i} \zeta_i$ & Noise of Stochastic Gradients & No
    \end{tabular}
    \caption{Components of CNC-SGD expanded steps.}
    \label{tab:sgd_expansion_vectors}
\end{table}
Similarly to PGD, the power iterations $\u_t$ plays an essential rule in the negative curvature exploration. For this term, we can reuse our analysis from lemma~\ref{lemma:upper_bound_u} and \ref{lemma:lower_bound_u_2}. The term $\bdelta_t$ is caused by using a stale Taylor approximation for all iterates $t\leq \tr$. We need to bound the perturbation effect of this term to guarantee that power iterates $\u_t$ exploit the negative curvature. To this end, we required a bound on $\E\| \bdelta_t\|$. This bound is established in the next lemma using the distance bound of Lemma~\ref{lemma:expected_distance_bound_sgd}. 
\begin{framed}
\begin{lemma} \label{lemma:delta_bound_sgd} 
Under the condition of Lemma~\ref{lemma:expected_distance_bound_sgd}, the bound 
\begin{align} 
\E \| \bdelta_t \| \leq \rho \left( \frac{2 (\ell r)^2}{\eta \lambda} 
+ \frac{4 \eta\ft + 2L \eta (\ell r)^2 + 4(\ell \eta)^2 + 6 L\eta^3 \ell^2 \tr }{(\lambda\eta)^2}    \right) \kappa^t
\end{align} 
holds true.
\end{lemma}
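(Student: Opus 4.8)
The plan is to mirror the proof of Lemma~\ref{lemma:upper_bound_delta} from the PGD analysis, now carrying the extra stochastic‑gradient variance terms that appear in the SGD distance bound. First I would unfold the definition of $\bdelta_t$ from Table~\ref{tab:sgd_expansion_vectors}, namely $\bdelta_t = \sum_{i=1}^t (\Im - \eta\H)^{t-i}(\gf_i - \gg_i)$, apply the triangle inequality together with submultiplicativity of the operator norm, and take expectations to obtain $\E\|\bdelta_t\| \le \sum_{i=1}^t \|\Im - \eta\H\|^{t-i}\,\E\|\gf_i - \gg_i\|$. Since $f$ is $L$-smooth and $\lambda_{\min}(\H) = -\lambda$, for $\eta \le 1/L$ every eigenvalue of $\Im - \eta\H$ lies in $[1-\eta L,\,1+\eta\lambda] \subseteq [0,\kappa]$, so $\|\Im - \eta\H\| \le \kappa = 1 + \eta\lambda$ (recall $\lambda \le L$ by smoothness, so $\eta\lambda < 1$ as well).

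Next I would convert the gradient-mismatch term into a distance term. By Lemma~\ref{lemma:quad_app} (the stale-Taylor gradient bound) we have $\|\gf_i - \gg_i\| \le \tfrac{\rho}{2}\|\w_i - \pw\|^2$ pointwise, hence $\E\|\gf_i - \gg_i\| \le \tfrac{\rho}{2}\,\E\|\w_i - \pw\|^2$. Plugging in the SGD distance bound of Lemma~\ref{lemma:expected_distance_bound_sgd}, which has the form $\E\|\w_i - \pw\|^2 \le A\,i + B$ with $A = 4\eta\ft + 2L\eta(\ell r)^2 + 4(\ell\eta)^2 + 2L\eta^3\ell^2\tr$ and $B = 2(\ell r)^2$, yields $\E\|\bdelta_t\| \le \tfrac{\rho}{2}\sum_{i=1}^t \kappa^{t-i}\,(A\,i + B)$.

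Then I would collapse the two geometric sums using Lemma~\ref{lemma:auxiliary} with $\beta = \eta\lambda \in (0,1)$: namely $\sum_{i=1}^t \kappa^{t-i} \le \tfrac{2}{\eta\lambda}\kappa^t$ and $\sum_{i=1}^t \kappa^{t-i} i \le \tfrac{2}{(\eta\lambda)^2}\kappa^t$. This gives $\E\|\bdelta_t\| \le \rho\big(\tfrac{A}{(\eta\lambda)^2} + \tfrac{B}{\eta\lambda}\big)\kappa^t$, which is exactly the claimed inequality up to an inconsequential loosening of the constant multiplying $L\eta^3\ell^2\tr$ (the stated bound uses $6$ in place of the $2$ produced by the direct computation, which only weakens the estimate and hence remains valid).

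I do not anticipate a genuine obstacle: this is the SGD counterpart of Lemma~\ref{lemma:upper_bound_delta}, and the only new feature is that the distance bound of Lemma~\ref{lemma:expected_distance_bound_sgd} now carries the additional terms $4(\ell\eta)^2$ and $2L\eta^3\ell^2\tr$ arising from the martingale-difference noise $\bzeta_t$ of the small-step SGD updates; these simply propagate through the same geometric-series manipulation. The one place requiring a little care is the spectral-norm estimate $\|\Im - \eta\H\| \le \kappa$ in the presence of the negative eigenvalue $-\lambda$, and the bookkeeping that ensures each sum is collapsed with ratio $\kappa$ rather than naively — both handled exactly as in the PGD argument.
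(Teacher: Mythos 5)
Your proposal is correct and mirrors the paper's argument exactly: unfold $\bdelta_t$, bound $\|\Im-\eta\H\|^{t-k}\le \kappa^{t-k}$, invoke Lemma~\ref{lemma:quad_app} to convert gradient mismatch into $\frac{\rho}{2}\E\|\w_k-\pw\|^2$, substitute the distance bound of Lemma~\ref{lemma:expected_distance_bound_sgd}, and collapse the two geometric sums via Lemma~\ref{lemma:auxiliary} with $\beta=\eta\lambda$. Your observation about the constant is also accurate — the direct computation yields a coefficient of $2$ on $L\eta^3\ell^2\tr$ while the lemma states $6$, so the stated inequality is a harmless loosening (likely a slip in the paper, which also writes the dummy index as $t$ rather than $k$ in the intermediate step); this does not affect the validity or downstream use of the bound.
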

\end{framed}

\begin{proof}
\begin{equation}
\begin{aligned} 
\E \| \bdelta_t \| & = \E \| \sum_{k=1}^t \left( \Im - \eta \H  \right)^{t-k} \left( \gf_k - \gg_k \right) \|  \\ 
& \leq \sum_{k=1}^t (1+\eta  \lambda )^{t-k} \E \| \gf_k - \gg_k \| \\ 
& \leq (\rho/2) \sum_{k=1}^t \kappa^{t-k}  \E \| \w_k - \pw \|^2 \\ 
& \leq (\rho/2) \sum_{k=1}^t \kappa^{t-k}  \left(\left( 4\ft\eta + 2L\eta (\ell r)^2 + 4(\ell \eta)^2  + 2 L \eta^3 \ell^2 \tr \right)t + 2 (\ell r)^2  \right) \quad \text{\tiny{[Lemma~\ref{lemma:expected_distance_bound_sgd}]}} \\ 
& \leq \rho \left( \frac{2 (\ell r)^2}{\eta \lambda} 
+ \frac{4 \eta\ft + 2L \eta (\ell r)^2 + 4(\ell \eta)^2 + 6 L\eta^3 \ell^2 \tr }{(\lambda\eta)^2}    \right) \kappa^t  \quad \text{\tiny{[Lemma~\ref{lemma:auxiliary}]}}
\end{aligned}
\end{equation}
\end{proof}
 \paragraph{Lower-bound on the distance}
 Using the step expansion, we lower-bound the distance from the pivot $\pw$ as 
 \begin{equation} 
 \begin{aligned} 
 \E \| \w_{t+1} - \pw \|^2 = \E \| \u_t \|^2 - 2 \eta \| \u_t \| \E \| \bdelta_t \| + 2 \eta \E \left[ \u_t^\top \d_t \right] + 2 \eta \E \left[ \u_t^\top  \bzeta_t \right]
 \end{aligned}
 \end{equation}
where $\Omega_t =  \bdelta_t + \bzeta_t + \d_t$.
The result of lemma~\ref{lemma:zero_covariance_law} implies that that $\E \left[ \u_t^\top \bzeta_t \right] = 0 $. Plugging the result into the above equation yields. 
  \begin{equation} 
 \begin{aligned} 
 \E \| \w_{t+1} - \pw \|^2 &=  \E \| \u_t \|^2 - 2 \eta \| \u_t \| \E \| \bdelta_t \| + 2 \eta \E \left[ \u_t^\top \d_t \right] \\ 
 & \geq \E \| \u_t \|^2 - 2 \eta \| \u_t \| \E \| \bdelta_t \| + 2 \eta \E \left[ \u_t^\top \d_t \right] \\ 
 & \geq \E \| \u_t \|^2 - 2 \eta \| \u_t \| \E \| \bdelta_t \| \quad \text{\tiny{[Lemma~\ref{lemma:removing_initial_gradient_dependency}]}}\\
 & \geq \gamma r^2 \kappa^{2t} - 2 \eta \ell r \kappa^t \E \left[ \| \bdelta_t \| \right]\quad \text{\tiny{[Lemma~\ref{lemma:lower_bound_u_2} \& \ref{lemma:upper_bound_u}]}}\\ 
 & \geq \left( \gamma r -  \frac{4 \rho \ell^3 r^2}{\lambda}  - \frac{ 8 \rho \ell \ft}{\lambda^2 } - \frac{ 4 L \rho \ell^3 r^2 }{\lambda^2} -\frac{8  \rho \ell^3 \eta   }{\lambda^2}- \frac{12 L\rho\ell^3 \eta^2 \tr  }{\lambda^2}\right) r \kappa^{2t} \quad\text{\tiny{[Lemma~\ref{lemma:delta_bound_sgd}]}} \label{eq:lower-bound-distance}
 \end{aligned}
 \end{equation}

 \paragraph{Constraints on the parameter} To derived the desired contradiction, i.e. $\E \left[ f_t \right] - \tilde{f} \leq -\ft$, we need to prove that the distance $\E\left[ \| \w_{t} - \pw \|^2 \right]$ is larger than the upper-bound established in lemma~\ref{lemma:expected_distance_bound_sgd}. To this end, we have to choose parameters such that the established lower-bound on the distance in Eq.~\eqref{eq:lower-bound-distance} be positive, i.e.   
 \begin{align} 
 \gamma r -  \underbrace{\frac{4\rho \ell^3 r^2}{\lambda}}_{\leq \gamma r/6}  
 - \underbrace{\frac{8 \rho \ell \ft}{\lambda^2 }}_{\leq \gamma r/6} 
 - \underbrace{\frac{ 4 \rho L \ell^3 r^2 }{\lambda^2}}_{\leq \gamma r/6} 
 -\underbrace{\frac{8 \rho \ell^3 \eta  }{\lambda^2}}_{\leq \gamma r/6} 
 - \underbrace{\frac{12  \ell^3 \rho L\eta^2 \tr}{\lambda^2}}_{\leq \gamma r/6}  \stackrel{!}{\geq} \gamma r/6
 \end{align}
 Using the lower-bound on the absolute value of minimum eigenvalue, i.e. $\lambda \geq \sqrt{\rho}\epsilon$, we choose parameters such that the above constraints are satisfied (eee Table~\ref{tab:sgd_parameters_with_constraints}:
  \begin{align} 
 r &  \leq (\gamma \lambda^2)/(24 L\rho \ell^3 ) \leq \lambda \gamma/(24 \rho \ell^3) \label{eq:sgd_r_upperbound}\\
 \ft & \leq \gamma \lambda^2r/(48\rho \ell). \label{eq:sgd_ft_bound}
 \end{align}
 Furthermore, $\eta$ has to be bounded as
 \begin{equation} \label{eq:sgd_eta_bound}
 \eta \leq  (\gamma r /(72 L \rho \tr \ell^3))^{1/2} \lambda \leq \gamma r\lambda^2 /( 48 \ell^3\rho L) 
 \end{equation}
 We postpone the choose the step-size after finding an expression for $\tr$.

 Our choice of parameters fulfills the above constraints. Plugging the above result into Eq.~\eqref{eq:lower-bound-distance} obtains the exponential growing lower-bound on the distance 
 \begin{align} \label{eq:postive_distance_lower_bound_sgd}
 \E \left[ \| \w_{t+1} - \tilde{\w} \| \right] \geq  \gamma r^2 \kappa^{2t}/6
 \end{align} 
 \paragraph{Contradiction by choosing the number of iterations $\tr$}
 Using the lower bound of Eq.~\eqref{eq:postive_distance_lower_bound_sgd}, we can establish a contradictory result with the upperbound on the distance proposed in lemma~\ref{lemma:expected_distance_bound_sgd} 
 \begin{align*} 
 \gamma r^2 \kappa^{2\tr}/6 \stackrel{!}{\geq}    \left( 2\ft\eta + L\eta (\ell r)^2 + 2(\ell \eta)^2  \right) t+  L \eta^3 \ell^2 t^2+ 2 (\ell r)^2.  
 \end{align*}
 Since the left-side of the above inequality is exponentially growing, one can choose the number iterations $\tr$ large enough to derive the contradiction: 
 \begin{align} \label{eq:sgd_tr_bound}
 \tr \geq c (\eta \lambda )^{-1} \log(L\ell/(\gamma r\eta\lambda))
 \end{align}
 where $c$ is a constant independent of parameters $\lambda$,$\gamma$,$L$ and $\rho$.
 \paragraph{The choice of step-size} 
 To derive our contradiction, we required the upperbound of Eq.~\eqref{eq:sgd_eta_bound} to be statisfied. Replacing $\tr$, achieved in Eq.~\eqref{eq:sgd_tr_bound}, into this bound yields 
 \begin{align} \label{eq:sgd_eta_bound_2}
     \eta \leq c' \gamma  r \lambda^3 /(72 L \rho \ell^3)
 \end{align}
 where $c' > \max \{ \left( c \log(L\ell/(\gamma r \lambda)\right)^{-1}, 1 \}$. The choice of step-size in Table~\ref{tab:sgd_parameters_with_constraints} ensures the above inequality holds.  
 \end{proof}
 
 \subsection{Bound on the expectation of distance}
 Here, we complete the proof of lemma~\ref{lemma:small_grad_large_nc_restated} by proving the following lemma, which is used in lemma~\ref{lemma:small_grad_small_nc_restrated}. 
 \begin{framed}
 \begin{lemma} [Restated Lemma~\ref{lemma:expected_distance_bound_sgd}]
Suppose that expectation of the decrease in function value is lower-bounded as 
\begin{align} \label{eq:ass_contradiction_sgd_restated}
\E \left[ f(\w_{\tr}) \right] - \tilde{f} \geq - \ft.
\end{align}
Then, the expectation of the distance from the current iterate to $\pw$ is bounded as 
\begin{align} 
\E \left[ \| \w_t - \pw \|^2 \right] \leq  \left( 4\ft\eta + 2L\eta (\ell r)^2 + 4(\ell \eta)^2  + 2 L \eta^3 \ell^2 \tr \right) t+  2 (\ell r)^2,
\end{align} 
for $t\leq \tr$ as long as Assumption~\ref{ass:smoothness} holds.
\end{lemma}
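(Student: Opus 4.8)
The plan is to extend the PGD distance bound (Lemma~\ref{lemma:expected_distance_bound}) to the stochastic setting, tracking the extra first- and second-moment contributions of the gradient noise that now appears at every small step. As there, I take the pivot to be $\pw = \w_0$, so $\w_1 = \pw - r\xi$ with $\xi := \gf_\z(\pw)$ and $\w_{i+1} = \w_i - \eta\gf_\z(\w_i)$ for $1 \le i \le \tr-1$, and $\tilde f := f(\pw)$.

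\textbf{Step 1: bounding $\sum_i \E\|\gf_i\|^2$ via the function-value hypothesis.} Applying Lemma~\ref{lemma:decrease_f_sgd} to the large step yields $\E[f(\w_1)] - \tilde f \le -r\|\gft\|^2 + \tfrac L2(\ell r)^2$, and applying it to each small step and telescoping gives, for $t \le \tr$,
\[
\E[f(\w_t)] - \tilde f \;\le\; -r\|\gft\|^2 + \tfrac L2(\ell r)^2 \;-\; \eta\sum_{i=1}^{t-1}\E\|\gf_i\|^2 \;+\; \tfrac{(t-1)L\eta^2\ell^2}{2}.
\]
Invoking the hypothesis $\E[f(\w_t)] - \tilde f \ge -\ft$, dropping $-r\|\gft\|^2 \le 0$, and using $t - 1 \le \tr$, we obtain the key estimate $\sum_{i=1}^{t-1}\E\|\gf_i\|^2 \le \eta^{-1}\bigl(\ft + \tfrac L2(\ell r)^2 + \tfrac{L\eta^2\ell^2\tr}{2}\bigr)$. (If only $\E[f(\w_{\tr})] - \tilde f \ge -\ft$ is assumed, one first telescopes the tail to get $\E[f(\w_t)] \ge \E[f(\w_{\tr})] - \tfrac{\tr L\eta^2\ell^2}{2}$.)

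\textbf{Step 2: splitting the displacement into signal and noise.} Write $\gf_\z(\w_i) = \gf_i + \zeta_i$ with $\zeta_i := \gf_\z(\w_i) - \gf(\w_i)$; conditioned on the filtration $\F_{i-1}$ generated by the first $i-1$ updates, $\w_i$ is measurable and $\E[\zeta_i \mid \F_{i-1}] = 0$. Since $\w_t - \w_1 = -\eta\sum_{i=1}^{t-1}(\gf_i + \zeta_i)$, the bound $\|a+b\|^2 \le 2\|a\|^2 + 2\|b\|^2$ gives $\E\|\w_t-\w_1\|^2 \le 2\eta^2\E\|\sum_{i=1}^{t-1}\gf_i\|^2 + 2\eta^2\E\|\sum_{i=1}^{t-1}\zeta_i\|^2$. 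For the first term I apply Cauchy--Schwarz, $\E\|\sum_{i=1}^{t-1}\gf_i\|^2 \le (t-1)\sum_{i=1}^{t-1}\E\|\gf_i\|^2$, and substitute Step~1. For the second, Lemma~\ref{lemma:zero_covariance_law} annihilates all cross terms $\E[\zeta_i^\top\zeta_j]$ with $i \ne j$, so $\E\|\sum_{i=1}^{t-1}\zeta_i\|^2 = \sum_{i=1}^{t-1}\E\|\zeta_i\|^2 \le (t-1)\ell^2$, using $\E\|\zeta_i\|^2 = \E\|\gf_\z(\w_i)\|^2 - \|\gf_i\|^2 \le \ell^2$. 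Finally $\|\w_1 - \pw\|^2 = r^2\|\xi\|^2 \le (\ell r)^2$, so $\E\|\w_t-\pw\|^2 \le 2\E\|\w_t-\w_1\|^2 + 2(\ell r)^2$; collecting terms gives the claimed bound.

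\textbf{Main obstacle.} The crux is the noise bookkeeping. Unlike the PGD proof, the small steps are now stochastic, so (i) we cannot use the deterministic descent Lemma~\ref{lemma:decrease_f} and instead accrue a per-step $\tfrac L2\eta^2\ell^2$ penalty, which produces the new $L\eta^3\ell^2\tr$ term and, since it breaks monotonicity of $\E[f(\w_t)]$, forces the auxiliary tail-telescoping above; and (ii) the displacement now contains a martingale part $\sum_i \zeta_i$ whose second moment must be controlled by eliminating cross terms through the filtration argument of Lemma~\ref{lemma:zero_covariance_law} rather than by naive triangle/Cauchy--Schwarz chaining.
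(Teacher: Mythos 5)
Your proof is correct and matches the paper's argument essentially step for step: telescope Lemma~\ref{lemma:decrease_f_sgd} against the function-value hypothesis to bound $\sum_i \E\|\gf_i\|^2$, decompose the displacement into a gradient sum (handled by Cauchy--Schwarz) and a martingale noise sum (cross terms annihilated by Lemma~\ref{lemma:zero_covariance_law}), then combine via $\|a+b\|^2 \le 2\|a\|^2 + 2\|b\|^2$ and the deterministic $(\ell r)^2$ bound on the first (large) step, and your constants agree with the paper's. The only cosmetic difference is your tail-telescoping aside: the paper sidesteps it by telescoping once all the way to $\tr$, obtaining a bound on the full sum $\sum_{i=1}^{\tr-1}\E\|\gf_i\|^2$, and then simply using nonnegativity of the summands to control every partial sum $\sum_{i=1}^{t-1}$, so no auxiliary tail argument (and hence no extra constant) is needed.
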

\end{framed}
 
\begin{proof}
We use the result of lemma~\ref{lemma:decrease_f_sgd} 
\begin{align*}
  -\ft \leq \E \left[  f_{\tr} - \tilde{f} \right] & =  \E \left[ \sum_{i=0}^{\tr-1} f_{i+1}  -  f_i\right] \\ 
  & \leq -\eta \sum_{i=0}^{\tr-1} \E \|\gf_i \|^2 +  L (\ell \eta)^2 \tr/2+ L (\ell r)^2/2  \quad \text{\tiny{[Lemma~\ref{lemma:decrease_f_sgd}]}}.
\end{align*}
Rearranging terms obtains a bound on the sum of the squared norm of visited gradients:
\begin{align}  \label{eq:gradient_sum_bound_sgd}
\sum_{i=1}^{\tr} \E \| \gf_i \|^2 \leq \ft/\eta +  L \ell^2 \eta \tr/2+ L(\ell r)^2/(2\eta)
\end{align} 
Using the Telescopic expansion of the difference $\w_{t+1} - \w_1$, we relate the distance to the visited stochastic gradients:
\begin{equation}
\begin{aligned} 
\E \left[\| \w_{t+1} - \w_1 \|^2  \right] & = \E \left[ \| \sum_{i=1}^{t} \w_{i+1} - \w_{i} \|^2  \right] \\
& \leq  \eta^2 \E \|\sum_{i=1}^{t}  \left( \zeta_i- \gf_i \right)  \|^2. \quad \text{\tiny{[SGD-step decomposition, Eq.~\eqref{eq:sgd_step_expansion}]}} \label{eq:sgd_telescopic_distance_expansion}
\end{aligned}
\end{equation}

To upper bound the right-side of the above inequality, we rely zero-mean assumption of $\zeta_t$s: 
\begin{equation}  \label{eq:upperbound_zeta_gf}
\begin{aligned} 
\E \|\sum_{i=1}^{t}  (\zeta_i -\gf_i)  \|^2 & \leq 2 \E \| \sum_{i=1}^{t} \gf_i \|^2 + 2 \E \|\sum_{i=1}^{t} \zeta_i \|^2  \quad \text{\tiny{[Parallelogram law]}}\\
&= 2\E \| \sum_{i=1}^t \gf_i \|^2 + 2\sum_{i\neq j} \E  \left[\zeta_i^\top \zeta_j\right] + 2\sum_{i=1}^t \E \left[ \zeta_i^\top \zeta_i\right]
\end{aligned}
\end{equation}
To further simplify the above bound, we invoke the result of lemma~\ref{lemma:zero_covariance_law} which proves that  $\E\left[ \zeta_i^\top \zeta_j \right] = 0$.
Replacing this result into Eq.~\eqref{eq:upperbound_zeta_gf} yields
\begin{equation}
\begin{aligned} 
\E \|\sum_{i=1}^{t}  (\zeta_i -\gf_i)  \|^2 & = 2\E \| \sum_{i=1}^t \gf_i \|^2 + 2\sum_{i=1}^t \E \| \zeta_i \|^2 \\ 
& \leq 2\E \| \sum_{i=1}^t \gf_i \|^2 + 2 t \ell^2  \\ 
& \leq 2 \E \left( \sum_{i=1}^t \| \gf_i\| \right)^2 + 2 t \ell^2 \quad \text{\tiny{[Triangle inequality]}} \\ 
& \leq 2 t \sum_{i=1}^t \E \| \gf_i \|^2 + 2 t \ell^2 \quad \text{\tiny{[Cauchy–Schwarz inequality]}} \\ 
& \stackrel{\eqref{eq:gradient_sum_bound_sgd}}{\leq} 2 t \left( \ft/\eta +  L \eta \ell^2 \tr/2+ L(\ell r)^2/(2\eta) + \ell^2  \right) .
\end{aligned}
\end{equation}
Replacing the above bound  into Eq.~\eqref{eq:sgd_telescopic_distance_expansion} yields: 
\begin{align*} 
\E \| \w_{t+1} - \w_1 \|^2 \leq  t \left(2\ft\eta + L\eta (\ell r)^2 + 2(\ell \eta)^2  \right) +  L \eta^3 \ell^2 t \tr.
\end{align*}

Using the above result, we bound the distance as:
\begin{equation}
\begin{aligned} 
\E \| \w_{t+1} - \pw \|^2 & \leq  2 \E \| \w_{t+1} - \w_1 \|^2 + 2 \E \left[ \| \w_{1} - \pw\|^2 \right]  \quad \text{\tiny{[Parallelogram law]}} \\
& \leq 2 \E \| \w_{t+1} - \w_1 \|^2 + 2(\ell r)^2 \\ 
& \leq  \left( 4\ft\eta + 2L\eta (\ell r)^2 + 4(\ell \eta)^2  \right) t+  2 L \eta^3 \ell^2 t \tr+ 2 (\ell r)^2
\end{aligned} 
\end{equation}

Finally, replacing $t+1$ by $t$ concludes the proof.
\end{proof}

 \section{Analysis of Learning Half-spaces}
 
\begin{framed}
\begin{lemma}[Restated~\ref{lemma:CNC_lowerbound}]
Consider the problem of learning half-spaces as stated in Eq. (\ref{eq:half_space}), where $\varphi$ satisfies Assumption~\ref{eass:seim-self-concordant}. Furthermore, assume that the support of $\P$ is a subset of the unit sphere. Let $\v$ be a unit length eigenvector of $\hf(\w)$ with corresponding eigenvalue $\lambda<0$. Then 
\begin{align} 
\E_\z \left[ (\gf_{\z}(\w)^\top\v)^2 \right] \geq  (\lambda/c)^2. 
\end{align} 
\end{lemma}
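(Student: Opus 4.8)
The plan is to relate the projection of the stochastic gradient onto $\v$ to the quadratic form of the Hessian along $\v$, and then exploit Assumption~\ref{eass:seim-self-concordant} together with Cauchy--Schwarz. I expect the whole argument to be short once the right identity is isolated.

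First I would compute the two relevant quantities explicitly. Since $\gf_{\z}(\w) = \varphi'(\w^\top\z)\z$, we have $\gf_{\z}(\w)^\top\v = \varphi'(\w^\top\z)(\z^\top\v)$, so the object to be lower-bounded is $\E_\z[\varphi'(\w^\top\z)^2(\z^\top\v)^2]$. On the other hand, differentiating twice under the expectation gives $\hf(\w) = \E_\z[\varphi''(\w^\top\z)\z\z^\top]$, and because $\v$ is a unit-norm eigenvector of $\hf(\w)$ with eigenvalue $\lambda$, taking the quadratic form along $\v$ yields the key identity $\lambda = \v^\top\hf(\w)\v = \E_\z[\varphi''(\w^\top\z)(\z^\top\v)^2]$.

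Next I would bound $|\lambda|$ from above. Using $|\E[\cdot]|\le\E[|\cdot|]$ and then Assumption~\ref{eass:seim-self-concordant}, $|\lambda| \le \E_\z[|\varphi''(\w^\top\z)|(\z^\top\v)^2] \le c\,\E_\z[|\varphi'(\w^\top\z)|(\z^\top\v)^2]$. Writing the integrand as $\big(|\varphi'(\w^\top\z)|\,|\z^\top\v|\big)\cdot|\z^\top\v|$ and applying Cauchy--Schwarz gives $\E_\z[|\varphi'(\w^\top\z)|(\z^\top\v)^2] \le \sqrt{\E_\z[\varphi'(\w^\top\z)^2(\z^\top\v)^2]}\,\sqrt{\E_\z[(\z^\top\v)^2]}$. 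Since $\P$ is supported on the unit sphere and $\|\v\|=1$, we have $(\z^\top\v)^2\le 1$ pointwise, so the second factor is at most $1$. Combining these steps yields $|\lambda| \le c\,\sqrt{\E_\z[(\gf_{\z}(\w)^\top\v)^2]}$; squaring and rearranging gives $\E_\z[(\gf_{\z}(\w)^\top\v)^2]\ge(\lambda/c)^2$, which is the claim.

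The argument is essentially a two-line computation once the identity $\lambda = \E_\z[\varphi''(\w^\top\z)(\z^\top\v)^2]$ is established, so there is no real obstacle; the only point requiring a little care is justifying the interchange of differentiation and expectation to obtain the Hessian formula, which follows from the smoothness of $f$ and $f_\z$ (Assumption~\ref{ass:smoothness}) together with the boundedness of $\z$. Note that the sign of $\lambda$ plays no role in the derivation — the bound holds verbatim for any eigenvalue — but the negative-curvature case ($\lambda<0$, in particular $\lambda=\lambda_{\min}$) is the one relevant for verifying Assumption~\ref{ass:CNC}.
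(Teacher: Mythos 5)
Your proof is correct and follows essentially the same route as the paper's: establish $\lambda = \E_\z[\varphi''(\w^\top\z)(\z^\top\v)^2]$ via the eigenvector identity, bound $|\lambda| \le c\,\E_\z[|\varphi'(\w^\top\z)|(\z^\top\v)^2]$ via Assumption~\ref{eass:seim-self-concordant}, then use the unit-sphere constraint and an $L^2$ inequality to close the gap. The only cosmetic difference is the last step: you split the integrand and apply Cauchy--Schwarz directly, whereas the paper first drops a power of $|\z^\top\v|$ (using $|\z^\top\v|\le 1$) and then applies Jensen's inequality; since Jensen for the square is Cauchy--Schwarz against the constant $1$, the two finishes are interchangeable and give the identical bound.
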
 
\end{framed}

\begin{proof} 
Using the definition of an eigenvector,  $\nabla^2f(\w)\v=\lambda \v$ and since $\nabla^2f(\w)=\varphi''(\w^\top \z)\z\z^\top$ we have:
\begin{equation}
\begin{aligned}
\lambda & = \v^\top \hf(\w) \v  \\
& = \E \left[ \varphi''(\w^\top \z) (\z^\top \v)^2 \right]   \\ 
& \geq - \E \left[ | \varphi''(\w^\top \z)|  (\z^\top \v)^2 \right]   \\ 
& \geq- c \E \left[ |\varphi'(\w^\top \z) |(\z^\top \v)^2 \right] \qquad \tiny{\text{[Eq.~(\ref{eq:seim-self-concordant}])}}  \\ 
& \geq - c \E \left[ |\varphi'(\w^\top \z)| |\z^\top \v|\right] \qquad \tiny{\text{[$\|\z\|\leq 1$]}} \\ 
& \geq - c \E \left[ |\varphi'(\w^\top \z) \z^\top \v|\right].
\label{eq:result_of_property}
\end{aligned}
\end{equation}
Using the above result and as well as Jensen's inequality, we derive the desired result:
\begin{equation}
\begin{aligned} 
\E \left[ (\gf_\z(\w)^\top \v)^2 \right] & = \E \left[ (\varphi'(\w^\top \z)\z^\top\v)^2  \right]  \\ 
& = \left( \sqrt{\E \left[ (\varphi'(\w^\top \z)\z^\top\v)^2 \right]} \right)^2  \\ 
&  \geq \left( \E \left[ \sqrt{(\varphi'(\w^\top \z) \z^\top\v)^2}  \right] \right)^2  \\ 
& \geq \left( \E | \varphi'(\w^\top \z) \z^\top\v)|\right)^2  \\
& \geq (\lambda/c)^2,
\end{aligned}
\end{equation}
where the last inequality follows from Eq. (\ref{eq:result_of_property}) and the fact that $\lambda <0$.
\end{proof}

 \section{Additional experimental results}
\paragraph{Learning halfspaces}
From each of two multivariat gaussian distributions we draw $n/2=20$ samples $x_i\in \mathbb{R}^4$ and assign them the labels $y_i\in\{0,1\}$ respectively. We then optimize the loss function

\begin{equation*}
    f(\w)= \text{sigmoid} \left(-y_i\x_i^\top \w\right)+\frac{1}{2} \|\w\|^2
\end{equation*}
with the following methods and hyperparameters:

Gradient Descent, Stochastic Gradient Descent, PGD as in \cite{jin2017escape} with perturbation radius $r=0.1$ and PGD-CNC with a stochastic gradient step as perturbation. All methods use the step size $\alpha=1/4$, the stochastic gradient steps are performed with batch size $1$ and the perturbed gradient descent methods perturb as soon as $\gf(\w)<\gt:=0.01$.

To complete the picture of Figure \ref{fig:escaping_saddles} we here also present the gradient norms and minimum/maximum eigenvalues along the trajectories of the different methods. It becomes apparent that all of them indeed started at a saddle and eventually move towards (and along) the flat end of the sigmoid. However, Gradient Descent is much slower in finding regions of significant negative curvature than the stochastic methods.

\begin{figure*}[h!]
	\begin{center}
          \begin{tabular}{@{}c@{\hspace{2mm}}c@{\hspace{2mm}}c@{\hspace{2mm}}c@{}}
          \vspace{-5.5pt}
            \includegraphics[width=0.33\linewidth]{experiments/loss.png} &
            \includegraphics[width=0.33\linewidth]{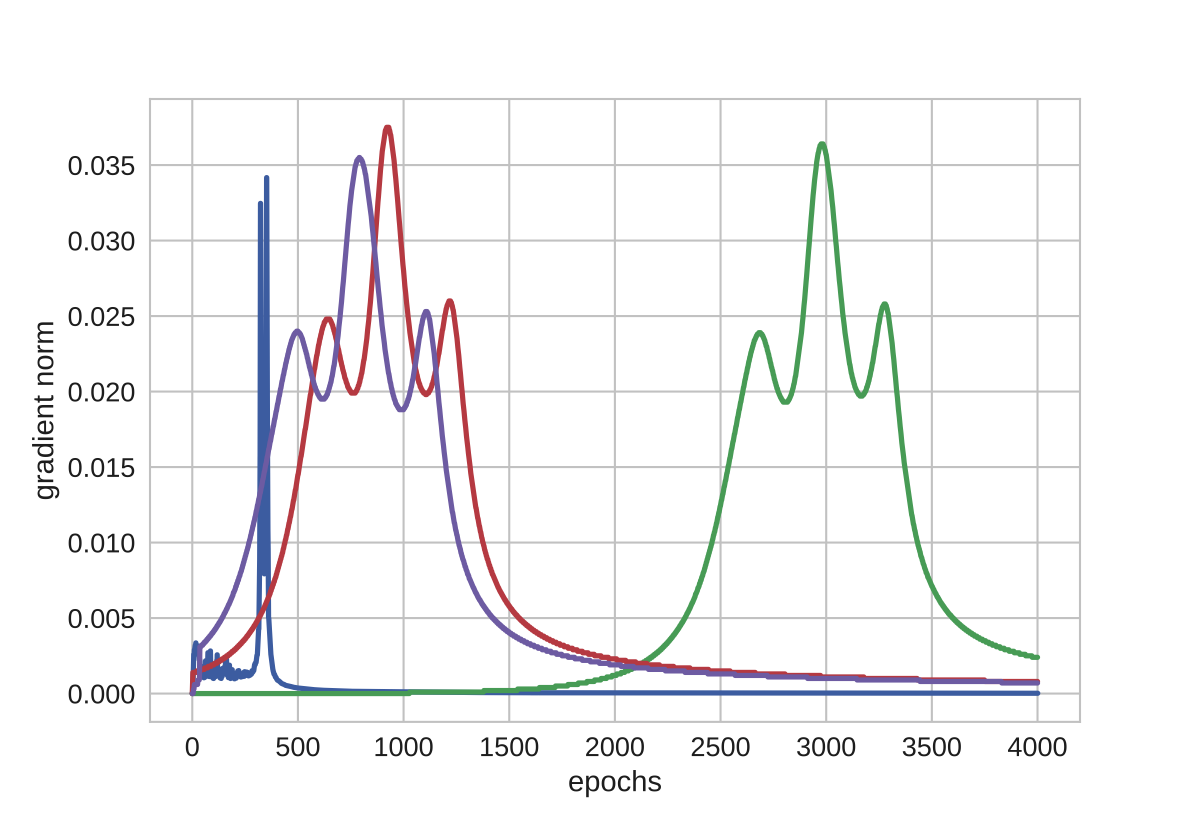} & \includegraphics[width=0.33\linewidth]{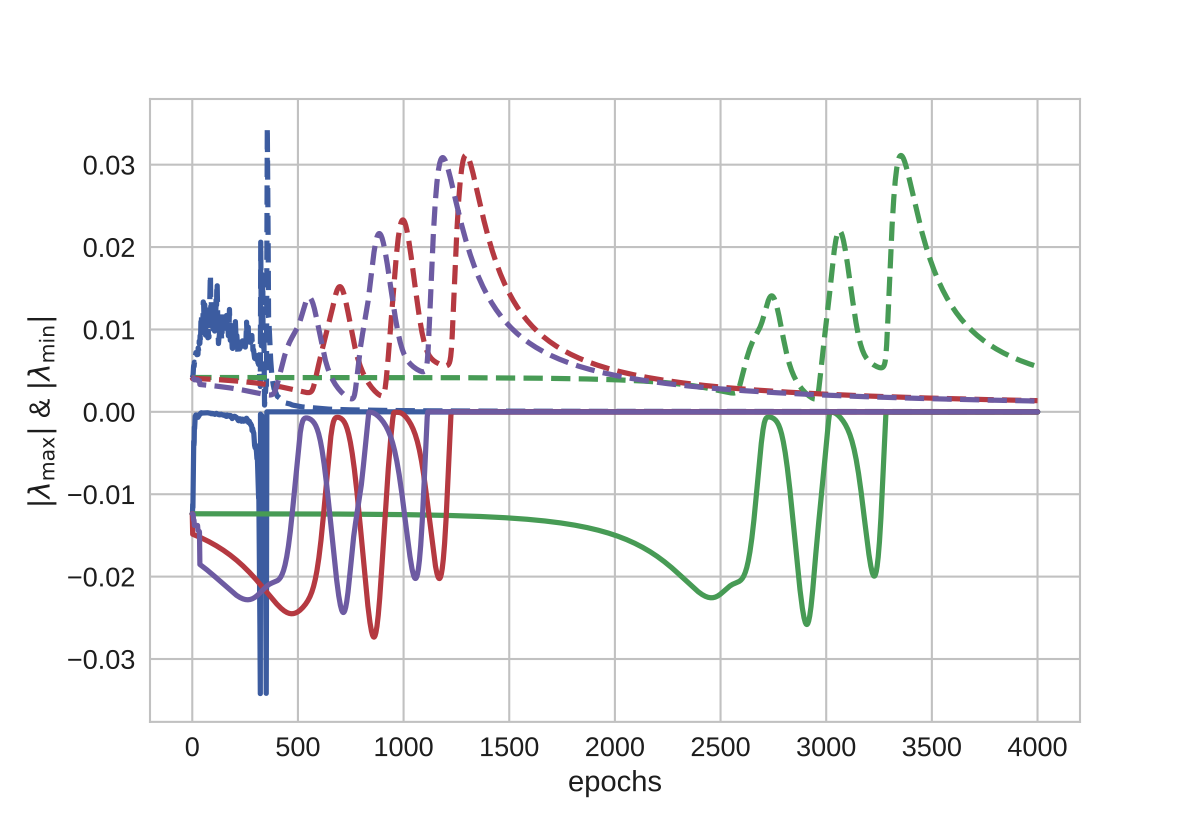} \\  \vspace{-1pt}
            
         	1. \footnotesize{{ Suboptimality}} &
            2. \footnotesize{{ Gradient norm}} &
            3. \footnotesize{{ min and max eigenvalues}}
            
	  \end{tabular}
          \caption{Learning halfspaces: more details.}
\label{fig:learning_halfspaces_details}

	\end{center}
\end{figure*}
 
 \paragraph{Neural Networks}
 The neural network experiments were implemented using the Pytorch library and conducted on a GPU server. Note that we downsized the mnist dataset to an image size of $10\times 10$ and applied sigmoid acivations in the hidden layers as well as a cross-entropy loss over the 10 classes.
 
 While we present covariances between the stochastic gradients/isotropic noise vectors with the \textit{leftmost} Eigenvectors in the main paper, Figure \ref{fig:cov_all} plots the covariances with the entire negative eigenspectrum. 
\begin{figure*}[h!]
	\begin{center}
          \begin{tabular}{@{}c@{\hspace{2mm}}c@{\hspace{2mm}}c@{\hspace{2mm}}c@{}}
          \vspace{-5.5pt}
            \includegraphics[width=0.33\linewidth]{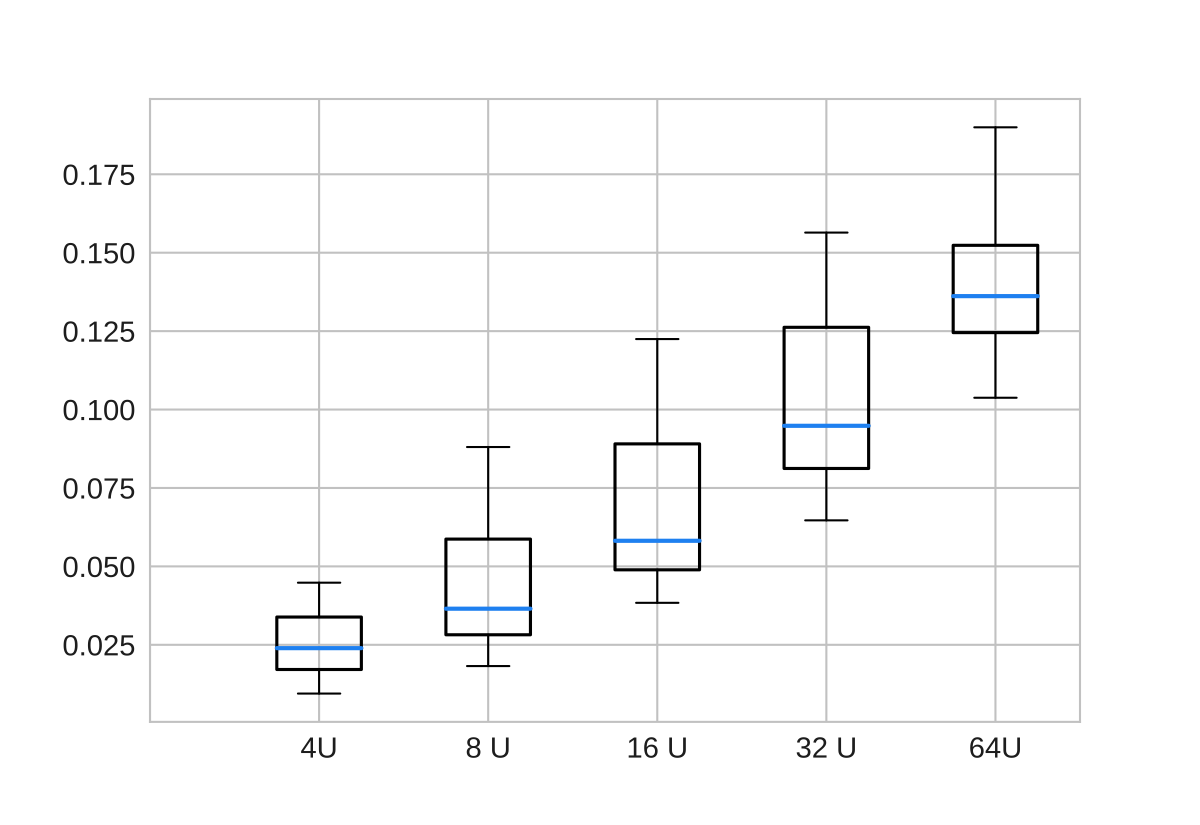} &
            \includegraphics[width=0.33\linewidth]{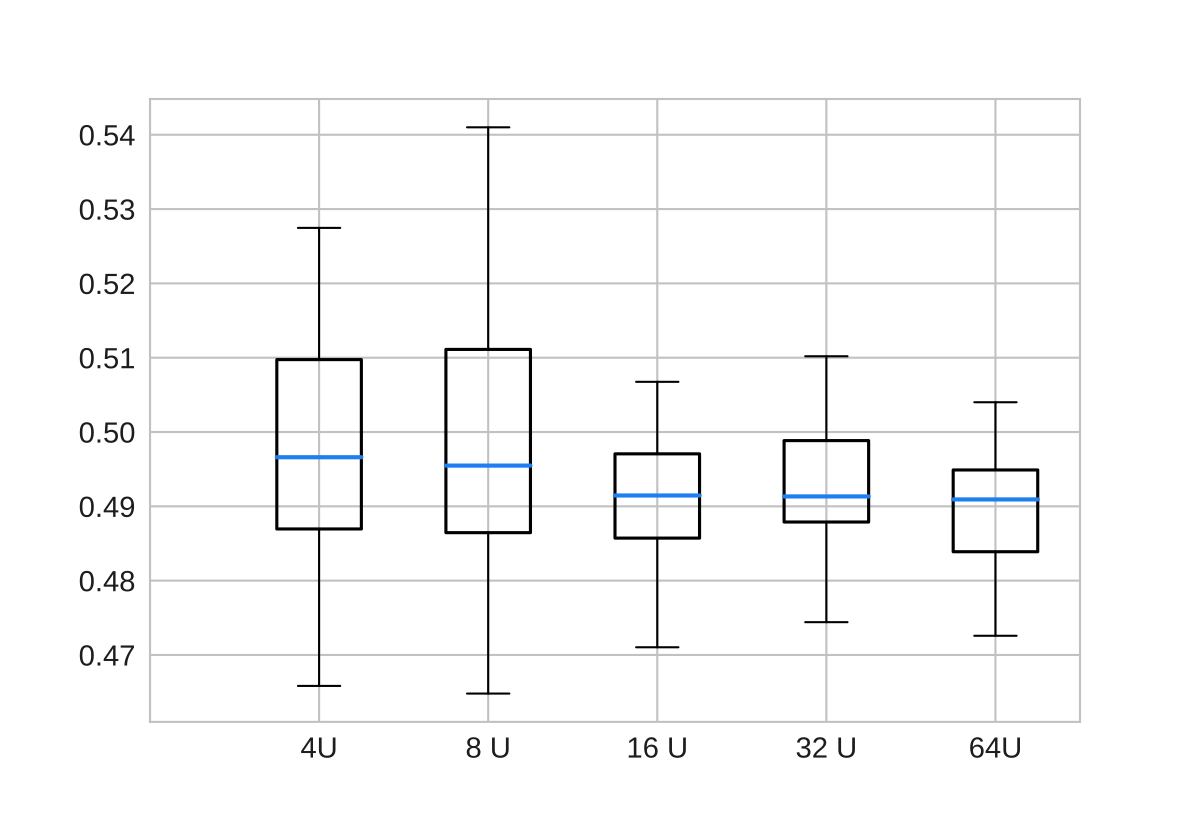} & \includegraphics[width=0.33\linewidth]{experiments/EVs_1HL.png} \\  \vspace{-1pt}
            
            \includegraphics[width=0.33\linewidth]{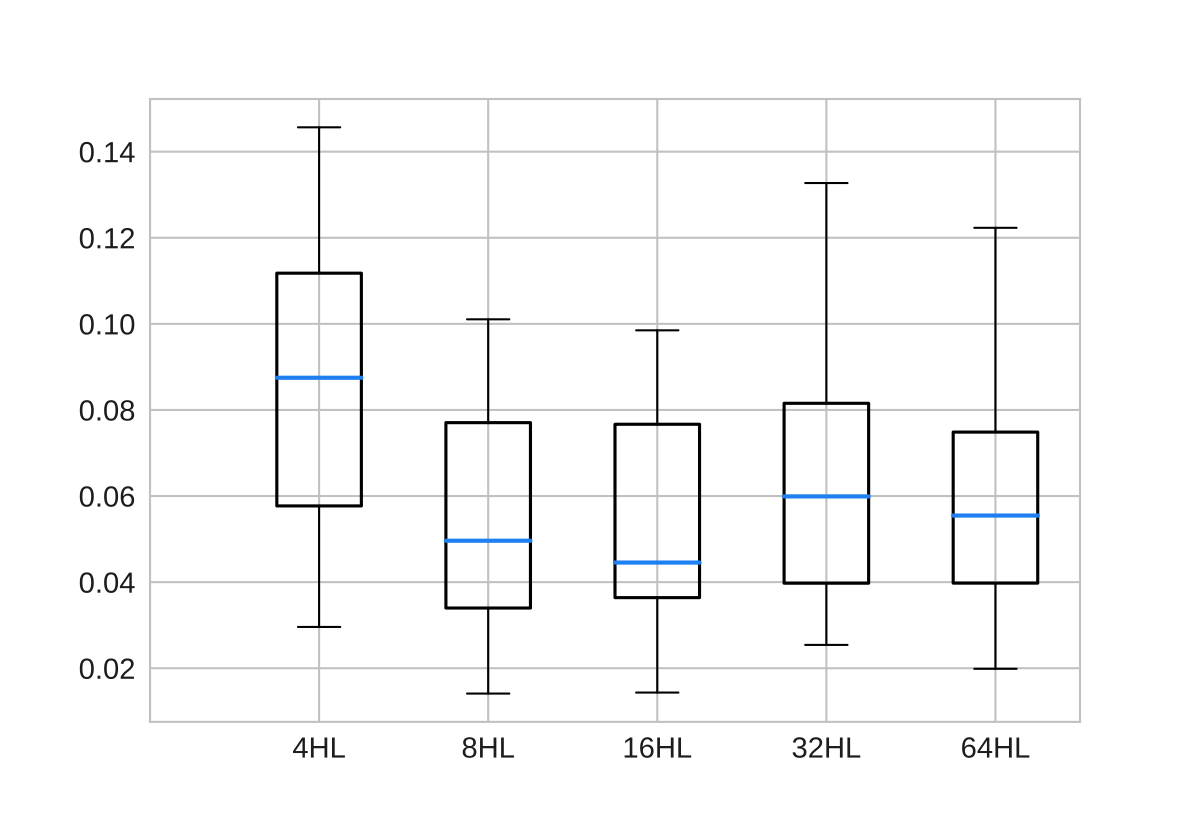} &
            \includegraphics[width=0.33\linewidth]{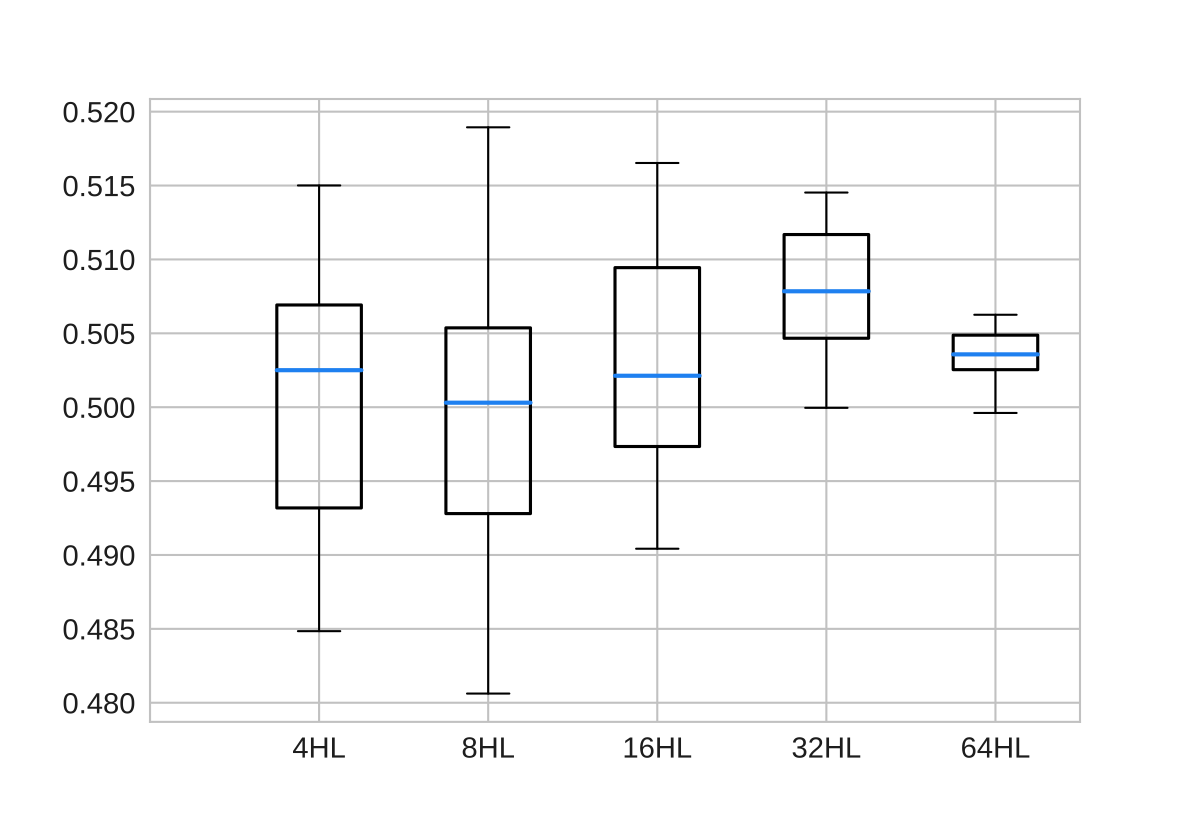} & \includegraphics[width=0.33\linewidth]{experiments/EVs_xHL.png} \\  \vspace{-1pt}
         	1. \footnotesize{{ SGD covariance w/ left eigenvectors}} &
            2. \footnotesize{{ isotropic noise covariance w/ left eigenvectors}} &
            3. \footnotesize{{min and max eigenvalues}}
            
	  \end{tabular}
          \caption{Average covariances and eigenvalues of 30 random parameters in Neural Networks with increasing width (top) and depth (bottom).}
	  \label{fig:cov_all}

	\end{center}
\end{figure*}

 In Figure \ref{fig:cov_over_ev} we show that the correlation of eigenvectors and stochastic gradients increases with the magnitude of the associated eigenvalues. As expected, this is not the case for noise vectors that are drawn randomly from the unit sphere. Furthermore, these correlations show a decrease with an increasing dimension as can be seen in Figure \ref{fig:cov_over_ev_uni}. 

\begin{figure}
\centering
\includegraphics[width=200pt]{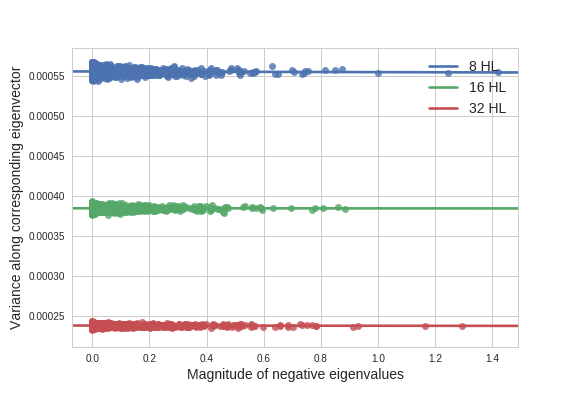}
\caption{Correlation of stochastic gradients with eigenvectors corresponding to eigenvalues of different magnitudes on Neural Nets with 8, 16 and 32 hidden layers. Scatterplot and fitted linear model with 95\% confidence interval.}
\label{fig:cov_over_ev_uni}
\end{figure}

\end{document}